\documentclass[twoside,11pt]{article}

% Any additional packages needed should be included after jmlr2e.
% Note that jmlr2e.sty includes epsfig, amssymb, natbib and graphicx,
% and defines many common macros, such as 'proof' and 'example'.
%
% It also sets the bibliographystyle to plainnat; for more information on
% natbib citation styles, see the natbib documentation, a copy of which
% is archived at http://www.jmlr.org/format/natbib.pdf

\usepackage{jmlr2e}

\usepackage{amssymb,amsmath,comment}
\usepackage{amsfonts,mathrsfs}
\usepackage{color}
\usepackage{bbm}
\usepackage{bm}
\usepackage{booktabs,float}
\usepackage{graphicx}
\usepackage{enumerate}
\usepackage{rotating}
%\usepackage[textsize=scriptsize]{todonotes}
% Comments and \todo notes package
\newcommand{\cmmnt}[1]{} % \cmmnt{}
%\usepackage[colorinlistoftodos, textsize=tiny]{todonotes}\setlength{\marginparwidth}{3cm}\reversemarginpar
%\reversemarginpar to have notes on the left margin (I have also modified the \marginparwidth otherwise the notes go out of the page)
% \todo{Comment}    % \todo[noline]{}   % \todo[inline]{}
% \todo[color=red!50]{Revision}
% \todo[color=blue!40]{Improve}
% \todo[color=purple!50]{Nice}

% Definitions of handy macros can go here

\newcommand{\R}{\mathbb{R}}
\newcommand{\E}{\mathbb{E}}
\newcommand{\F}{\mathcal{F}}

\usepackage{marginnote}%this avoids floating marginal notes
\marginparwidth60pt %this avoids the marginal notes to invade the text

\pdfoptionpdfminorversion=6

\makeatletter
\renewcommand*{\thanks}[1]{%
  \footnotemark
  \protected@xdef\@thanks{\@thanks
    \protect\footnotetext[\arabic{footnote}]{#1}}%
}
\makeatother
% Heading arguments are {volume}{year}{pages}{date submitted}{date published}{paper id}{author-full-names}

% Short headings should be running head and authors last names

\ShortHeadings{Non-asymptotic Guarantees for Robust Statistical Learning }{Xu, Yao, Yao and Zhang}
\firstpageno{1}

\begin{document}

\title{Non-Asymptotic Guarantees for Robust Statistical Learning under Infinite Variance Assumption}

\author{\name Lihu Xu  \email lihuxu@um.edu.mo \\
       \addr 1. Department of Mathematics, Faculty of Science and Technology, University of Macau, Taipa Macau, China;
       2. Zhuhai UM Science \& Technology Research Institute, Zhuhai, China.
       \AND
       \name Fang Yao  \email fyao@math.pku.edu.cn \\
       \addr School of Mathematical Sciences and Center for Statistical Science, Peking University, Beijing, China.
       \AND
       \name Qiuran Yao  \email yb97478@connect.um.edu.mo \\
       \addr 1. Department of Mathematics, Faculty of Science and Technology, University of Macau, Taipa Macau, China;
       2. Zhuhai UM Science \& Technology Research Institute, Zhuhai, China.
       \AND
       \name  Huiming Zhang \thanks{Corresponding authors: Huiming Zhang (huimingzhang@um.edu.mo). The authors are in alphabetical order.}
       \email huimingzhang@um.edu.mo  \\
 \addr 1. Department of Mathematics, Faculty of Science and Technology, University of Macau, Taipa Macau, China;
       2. Zhuhai UM Science \& Technology Research Institute, Zhuhai, China.
%      \AND
%\name Yuanyuan Lin \email ylin@sta.cuhk.edu.hk \\
% \addr Department of Statistics\\
%The Chinese University of Hong Kong \\
%      Hong Kong, China
%       \AND
%\name Jian Huang \email jian-huang@uiowa.edu\\
%\addr       Department of Statistics and Actuarial Science \\
%				University of Iowa\\
%Iowa City, Iowa 52242,  USA
       }

%\editor{} %Kevin Murphy and Bernhard Sch{\"o}lkopf}

\maketitle

\vspace{1 cm}
\begin{abstract}%   <- trailing '%' for backward compatibility of .sty file
There has been a surge of interest in developing robust estimators for models with heavy-tailed and bounded variance data in statistics and machine learning, while few works impose unbounded variance. This paper proposes two type of robust estimators, the ridge log-truncated M-estimator and the elastic net log-truncated M-estimator. The first estimator is applied to convex regressions such as quantile regression and generalized linear models, while the other one is applied to high dimensional non-convex learning problems such as regressions via
 deep neural networks. Simulations and real data analysis demonstrate the {robustness} of log-truncated estimations over standard estimations.

%Our results greatly extend the recent work  \cite{zhang2018ell_1} and \cite{chen2020generalized}, which studied the log-truncated M-estimator of LAD.

%The derived $\ell_2$-risk is applied in analyzing the convergence rate for the $L_2$-error of slope function estimates in both robust functional quantiles and generalized linear regressions with functional predictors, under functional principal components analysis (FPCA).

%\footnote{This version submitted to arXiv on May 01, 2021}
\end{abstract}

\begin{keywords}
data with infinite variance, excess risk bounds,  robust ridge regressions,  robust elastic net regressions,  robust non-convex regressions, {robust deep neural network (DNN) regressions}.
\end{keywords}
%\tableofcontents

\section{Introduction}\label{intro}

\subsection{Backgrounds}\label{BACK}
Robust statistics is a traditional topic that has been well studied since the pioneering work of \cite{huber1964robust} and \cite{tukey1960a} in 1960s. Distributionally robust learning is nowadays revitalized and invigorating in statistical learning, see \cite{nemirovskij1983problem} for median-of-means estimators, \cite{baraud2017new} for minimax types estimators and \cite{catoni2012challenging} for log-truncated estimators. For further details, we refer the reader to the note by \cite{lerasle2019lecture} and the review paper by \cite{lugosi2019mean} for comprehensive introductions.

 Many existing pieces of research on excess risk bounds for robust estimators heavily rely on one or more of the following three assumptions: (1) bounded loss functions \citep{bartlett2006empirical,yi2020non}; (2) bounded Lipschitz condition \citep{chinot2019robust,shen2021non}; (3) bounded data \citep{liu2014robustness,brownlees2015empirical,zhang2018ell_1} or unbounded data with sub-Gaussian assumption \citep{xu2020learning,lecue2013learning,loh2017statistical,ostrovskii2021finite}.
However, there are a lot of statistical models which do not satisfy any of the above three assumptions, see for instance \cite{zhang2022elastic} and \cite{chi2010local}.

When a distribution does not have an exponential moment, it is often called heavy-tailed; see \cite{resnick2007heavy}. However, there are many kinds of data, such as network and wealth distribution data, which only have finite $\beta$-th moment with $\beta \in (1,2)$. \cite{zhang2018ell_1} studied log-truncated M-estimator for least absolute deviation (LAD) regression under the assumption that the data have $2$nd moment, while \cite{chen2020generalized} extended their work to the data with $\beta$-th moment for $\beta \in (1,2)$.

{Most of minimization problems in machine learning have non-convex loss functions,  see \emph{mixture density estimation} in \citep{khamaru2019convergence}, \emph{the mixture of two linear regressions} in \citep{klusowski2019estimating}, \emph{truncated Cauchy non-negative matrix factorization} in \citep{guan2017truncated}, and regressions under deep neural networks (DNNs) in \cite{fan2021selective}.
In practice, the dimension of DNN regressions is usually much larger than the dimension of input, and the computation costs of training large neural networks may be huge \citep{frankle2018lottery}. To avoid training over-parameterized DNN, many works propose penalized DNN-based estimators for effectively learning the sparse DNN problems; see \cite{wen2021sparse,ohn2022nonconvex}.}

\subsection{Contributions}

This paper proposes a log-truncated M-estimator for a large family of statistical regressions and establishes its excess risk bounds under the condition that the data have $\beta$-th moment with $\beta \in (1,2)$, our \emph{contributions} are summarized as the following three aspects.

\textbf{A general function $\lambda(x)$ and the associated ridge regression}.
 %\cite{zhang2018ell_1} borrowed the idea from  \cite{catoni2012challenging} and choose $\lambda(x)=x^2/2$ to study robust LAD regression when the data have $2$nd moment, while \cite{chen2020generalized} chose $\lambda(x)={|x|^{\beta}}/{\beta}$ and extended their results to the case in which the data have infinite variance.
Table \ref{table:distributions} below lists the choices of $\lambda(x)$ which has been reported in literatures. In this paper, we propose a new $\lambda(x)$, under certain conditions on $\lambda$ and loss functions; we establish an error bound for the associated ridge regression, see Theorem \ref{thm1loss} below.  We allow the dimension $p$ to increase with the sample size $n$. Because the parameter set $\Theta$ is bounded, it seems that our ridge regression is more natural and reasonable.

%In Theorem \ref{thm1loss}, we find general conditions for high order function $\lambda(x)$ (see (C.0)-(C.4) below) which guarantee that the minimization of log-truncated loss function leads to a consistent estimator in the sense of excess risk. With an additional assumption on the risk function, we further show that the estimator is consistent in $\ell_2$ norm.

 \textbf{Ridge regressions with special $\lambda(x)={|x|^{\beta}}/{\beta}$ and examples}. Taking $\lambda(x)={|x|^{\beta}}/{\beta}$ with $\beta \in (1,2)$ and applying Theorem \ref{thm1loss},
we establish a new log-truncated robust estimator in Theorem \ref{thm1loss2}, which not only extends the results about LAD estimators in  \cite{zhang2018ell_1} and \cite{chen2020generalized}, but also covers many other convex loss examples such as robust QR and robust GLMs. For GLMs, we obtain a general result for bounding excess risk and apply it to two typical classification and count data models: logistic regression and negative binomial regression.

 \textbf{High dimensional non-convex regressions with elastic net and DNN}. In the high dimension setting $p \gg n$, we propose a new robust elastic net estimator defined by \eqref{eq:cantoninet} below and obtain the error bound of the excess risk. As applications, we studied the non-convex regressions via DNN, and apply our results to study several typical regression problems such as LAD regression and logistic regression.
 In practice, the DNN regressions can be solved by some algorithms based on stochastic gradient descents (SGDs). Empirical studies including Boston housing and MNIST datasets are performed well by the proposed robust DNN regression models. We stress that Theorems \ref{thm1loss2} and \ref{thm:Elastic} below can be applied to many other non-convex regressions, e.g., robust two-component mixed linear regression and robust non-negative matrix factorization, in which one has to design specific algorithms rather than use SGDs.

\subsection{Related works}\label{se:RW}

\cite{catoni2012challenging} put forward a logarithm truncation for mean regression with finite $2$nd data and obtained an estimator whose confidence interval has a length comparable with that of the classical mean estimation with sub-Gaussian data. Since then, Catoni's idea has been extensively applied to study regressions and estimations with heavy-tailed data; see \cite{fan2017estimation,sun2020adaptive,wang2022new,sun2021we}.
\cite{zhang2018ell_1} used Catoni's truncation technique to study a LAD regression for the data with $2$nd moment and showed that
the associated estimator is consistent in the measure of excess risk. By modifying the logarithm truncation of Catoni, \cite{chen2020generalized} extended the results in \cite{zhang2018ell_1} to the data with $\beta$-th moment for $\beta \in (1,2)$. Due to the increasing applications whose data do not have $2$nd moment, more and more generalizations of Catoni's truncation have been proposed, see \cite{lam2021robust}, and \cite{lee2020optimal}.  \cite{xu2020learning} studied the excess risk bounds for learning with general non-convex truncated losses, in which $\lambda(x)=O(x^{\beta})$ with $\beta=1$ or $\beta \ge 2$. Under bounded input assumption, \cite{shen2021robust} studied non-asymptotic error bounds for deep neural networks for regression models with heavy-tailed error output having a finite $\beta$-th moment. For canonical GLMs, \cite{zhu2021taming} required $4$th moment condition on output to study the consistency property of their proposed robust estimators. For robust mean estimation, \cite{minsker2018sub}, \cite{lam2021robust} and \cite{lee2020optimal} also considered the extensions of $\psi$ for different motivations (see the table below).

{Both \cite{zhang2018ell_1} and  \cite{chen2020generalized} assumed that the parameters to be estimated are located in a compact set, they did not consider adding a penalty on their ERM problems.} In this paper, we propose a robust ridge regression and a robust elastic net regression, and derive their excess risk bounds. They can be applied to classical statistical models such as QR and GLMs, and to high dimensional non-convex learning problems such as DNN regressions.

 {Robust and sparse estimation for DNN learning has recently drawn a lot of attentions. \cite{taheri2021statistical} studied the $\ell_1$-regularized neural networks with the specific least square loss, while \cite{wen2021sparse} derived the risk bound for sparse DNNs regression by $L_{1,\infty}$-weight normalization under the bounded loss assumption. Under the heavy-tailed output, \cite{Lederer2020Risk,shen2021deep,fan2022noise} studied risk bounds for robust DNN linear regressions by assuming that the input data is bounded or fixed if the loss is LAD or Huber or Cauchy type. In addition to heavy-tailed output, our work first attempts to study the heavy-tailed input setting systematically. }

\subsection{Notations and organizations}
The following notations will be frequently used in the rest of this paper. Define the index set $[n]:=\{1,2,\cdots,n\}$ and let $\mathbb{N}$ be the non-negative integer set. The $\mathbb{R}_{+}$ denotes the set of positive real numbers. The r.v. is the shorthand for a random variable. Let $q \ge 1$ and $p \in \mathbb N$, for $\theta \in \mathbb{R}^{p}$, define ${\| \theta  \|_{q}}: ={( {\sum_{j = 1}^p {|\theta_j|^q} } )^{1/q}}$.  Define the unit $\ell_2$-norm ball $B_{2}^{p}(r):=\left\{x \in \mathbb{R}^{p}:\|\theta\|_{{2}} \leq r\right\}$ for $r>0$, and the $\ell_0$-norm ball $B_{0}^{p}(s):=\left\{x \in \mathbb{R}^{d}:\|\theta\|_0 \leq s \right\}$ for $s \in \mathbb N \cup \{0\}$. Let $\Theta \subset \R^p$, for an $\varepsilon>0$, $\mathcal{N}(\Theta,\varepsilon) \subset \mathbb{R}^p$ is an \emph{$\varepsilon$-net} of $\Theta$ if for all ${x} \in \Theta,$ there is a ${y} \in \mathcal{N}(\Theta,\varepsilon )$ such that $\|y-x\|_2 \le \varepsilon$. The \emph{covering number} ${N}(\Theta,\varepsilon )$ is the smallest number of closed balls centered at $\Theta$ with radii $\varepsilon$ whose union covers $\Theta$.

For a probability measure $\mu$ and a measurable function $f$, let $\|f\|_{L^{2}(\mu)}:=[\E_{X \sim \mu}f^2(X)]^{1/2}$ as long as the expectation is finite. Let $a>0$ and denote by ${L^{2}([0,a]^{p})}$ the square integrable function space with respect to domain $[0,a]^{p}$, and define the ${L^{2}}$-norm for a square integrable function $g$ as $\|g\|_{[0,a]^{p}}:=[\int_{[0,a]^{p}}g^2(x){\rm{d}}x]^{1/2}$.
Let $\lfloor x \rfloor$ be the largest integer strictly smaller than $x$. A function $f$ is in the $\gamma$-H{\"o}lder function class with smoothness index $\gamma>0$ if all partial derivatives of $f$ up to order $\lfloor\gamma\rfloor$ exist and are bounded, and the $\gamma$-H{\"o}lder function space with domain $D \subset \mathbb{R}^{p}$ and radius $R>0$ is defined as
\begin{small}
\begin{align}\label{eq:holder}
\mathcal{C}^{\gamma}(D, R)=\left\{f: D \rightarrow \mathbb{R}:\sum_{\alpha:\|\alpha\|_{1}<\gamma}\left\|\partial^{\alpha} f\right\|_{\infty}+\sum_{\alpha:\|\alpha\|_{1}=\lfloor\gamma\rfloor} \sup _{\substack{\mathbf{x}, \mathbf{y} \in D  \atop
\mathbf{x} \neq \mathbf{y}}} \frac{\left|\partial^{\alpha} f(\mathbf{x})-\partial^{\alpha} f(\mathbf{y})\right|}{|\mathbf{x}-\mathbf{y}|_{\infty}^{\gamma-\lfloor\gamma\rfloor}} \leq R\right\},
\end{align}
\end{small}
where $\partial^{\alpha}=\partial^{\alpha_{1}} \ldots \partial^{\alpha_{p}}$ is multi-index notation with $\alpha=\left(\alpha_{1}, \ldots, \alpha_{p}\right) \in \mathbb{N}^{p}$.  For two matrices $A$ and $B$ with compatible dimensions, denote $A \succ B$ if $B-A$ is positive definite. Let $\|A\|_{F}:= \sqrt{\sum_{i = 1}^m \sum_{j = 1}^n a_{ij}^2}$ be the Frobenius norm of a square matrix $A =(a_{ij}) \in \mathbb{R}^{m \times n}$.

The rest of the paper is organized as follows. Section \ref{Log-truncated} introduces the robust estimator based on the log-truncated loss function and provides three main theorems: Theorems \ref{thm1loss},\ref{thm1loss2} and \ref{thm:Elastic}, while Sections \ref{se:example} and \ref{se:dnn} give examples for Theorems \ref{thm1loss2} and \ref{thm:Elastic} respectively.
 Section \ref{simulation} includes simulations and real data analysis, which evaluate the effectiveness of the proposed log-truncated estimation for some regressions discussed in Section 3, and Section \ref{conclusion} provides some perspectives for the future study.

\section{Estimation with log-truncated loss and main results}\label{Log-truncated}

\subsection{Problem setup} \label{ssPS}
We assume that $\{(X_{i}, Y_{i})\}_{i=1}^n$ {are} a sequence of $\mathbb{R}^{d} \times \mathbb{R}$-valued independent identically distributed (i.i.d.) r.v.s and that each $(X_{i}, Y_{i})$ is a copy of r.v. $(X, Y)$. Denote loss function by ${l(y,x,\theta)}$, where $y \in \mathbb{R}$ is the output variable, $x \in \mathbb{R}^d$ is the input variable, and $\theta \in {\Theta}$ with $\Theta \subset \mathbb{R}^p$ being the hypothesis space. Define
$${{R}_l}(\theta ):=\E[{l({Y} , X, \theta )} ],$$
the true parameter set $\Theta^*\subset \mathbb{R}^p$ is defined as the collection of all minimizers of $R_l(\cdot)$, i.e.,
\begin{equation}\label{eq:ture}
\Theta^*:=\left\{\theta^* \in  \mathop {\arg\min }\limits_{\theta  \in {\Theta}} {{R}_l}(\theta )\right\}.
\end{equation}
Any $\theta^* \in \Theta^*$ is called true parameter, it may be not unique since our loss function $l(\cdot,\cdot, \theta)$ may be non-convex.
%, and thus has many local minimizers in the above ERM problem.  In below, suppose that
%\begin{center}
%$\theta_{1}^*,\cdots,\theta_{k}^*$ in \eqref{eq:ture} are the critical values of $\nabla {{R}_l}(\theta )|_{\theta=\theta_{j}^*}=0~\text { for all }~j \in [k]$.
%\end{center}

The estimator of $\theta^*$ based on the i.i.d. data $\{(X_{i}, Y_{i})\}_{i=1}^n$ is obtained by the following \emph{empirical risk minimization} (ERM, see \cite{koltchinskii2011oracle}) problem:
\begin{equation}\label{eq:ERM}
\bar\theta_n \in  \mathop {\arg\min }\limits_{\theta  \in {\Theta}} {{\hat R}_l}(\theta ) ~~{\rm with} \ \  {{\hat R}_l}(\theta):= \frac{1}{n}\sum\limits_{i = 1}^n {l({Y_i}, X_i, \theta )}.
\end{equation}
%\marginnote{$\bar \theta \to \bar \theta_n$ and $\hat \theta \to \hat \theta_n$.}
When the data are heavy tailed,  the estimator $\bar\theta_n$ in \eqref{eq:ERM} may not be a robust estimator, see for instance \cite{ostrovskii2021finite},   \cite{mathieu2021excess} and the references therein. Given an estimator $\hat \theta_n$, its excess risk bound is given by
$${R_l}(\hat \theta_{n} ) -\mathop {\inf}\limits_{\theta  \in {\Theta}} {R_l}({\theta}).$$

In this paper, we aim to study the statistical learning problems with heavy-tailed data by modifying the ERM. In the sequel, we focus on log-truncated loss under infinite variance data assumption.

\subsection{Log-truncated loss and our estimators}\label{se:Log-t}
\cite{catoni2012challenging} put forward a non-deceasing truncation function $\psi$ such that
\begin{equation}\label{eq:bounds}
-\log \left(1-x+\frac{x^{2}}{2}\right) \leq \psi(x) \leq \log \left(1+x+\frac{x^{2}}{2}\right),
\end{equation}
and obtained a robust mean estimator for i.i.d. data with finite variance.
The truncation function $\psi(x)$ not only reduces the value of the exponential-scaled outliers but also largely retains the data fluctuation in an unbounded way, whereas the classical bounded truncated M-functions often lose the information of the data with large values.

\begin{table}[!ht]
\centering
\caption{Excess risk bound guarantees under various functions  $\lambda(x)$ for the log-truncated losses and $\beta \in (1,2)$.}
\begin{tabular}{@{}llll@{}}
\toprule
Higher order functions  $\lambda(x)$     &    References      & Moment conditions                \\ \toprule
0    & \cite{xu2020learning} & $\E\left|X\right|^2<\infty$\\\hline
$\frac{1}{2}|x|^2$    &      \cite{catoni2012challenging}       & $\E\left|X\right|^2<\infty$        \\ \hline
{$\left(\frac{{\beta-1}}{{\beta}} \vee \sqrt{\frac{2-{\beta}}{\beta}}\right)|x|^{\beta}$}   &  \cite{minsker2018sub}  & $\E\left|X\right|^\beta<\infty$         \\  \hline
$\frac{1}{\beta}|x|^{\beta}$   &  \cite{chen2020generalized}  & $\E\left|X\right|^\beta<\infty$         \\  \hline
${\left[ {2{{(\frac{{2-{\beta}}}{{\beta-1}})}^{1 - 2/(1+\varepsilon) }} + {{(\frac{{2-{\beta}}}{{\beta-1}})}^{2 - 2/{\beta} }}} \right]^{ - {\beta} /2}}|x|^{\beta}$  &  \cite{lee2020optimal} & $\E\left|X\right|^\beta<\infty$       \\  \hline
${\beta}^{-\beta/ 2}({2-{\beta}})^{1-\beta/ 2}({\beta-1})^{(\beta-1)}|x|^\beta$     & \cite{lam2021robust} & $\E\left|X\right|^\beta<\infty$         \\ \hline
$\sum_{k=2}^{m} \frac{x^{k}}{ k !},~(m\ge 2)$     & \cite{xu2020learning} & $\sum_{k=2}^{m} \frac{\E\left|X\right|^k}{ k !}<\infty$\\\hline
 the function $\lambda(x)$ in (C.1) & this paper & ${{\E}}[\lambda(H_{Y,X})]<\infty$\\\bottomrule
\end{tabular}
\label{table:distributions}
\end{table}

We replace the function $x^2/2$ in \eqref{eq:bounds} with a function $\lambda(x)$ satisfying the following conditions:
\begin{itemize}
\item [\textbullet] (C.1) The function $\lambda(x): \R_+ \rightarrow \R_+$ is a continuous non-decreasing function such that
$\lim_{x \rightarrow \infty} \frac{\lambda(x)}{x}=\infty$.  Moreover, there exist some $c_2>0$ and a function $f: \R_+ \rightarrow \R_+$ such that
\begin{itemize}
\item (C.1.1) $\lambda(tx) \le f(t)\lambda(x)$ for all $t, x \in \R_+$, where $\lim_{t \rightarrow 0^{+}} {f(t)}/{t}=0$;
\item (C.1.2)
$\lambda(x+y)\le c_2[\lambda(x)+\lambda(y)]$ for all $x, y \in \R_+$.
\end{itemize}
 \end{itemize}
 We further replace $\psi$ in \eqref{eq:bounds}  with $\psi_\lambda$ which satisfies:
\begin{align}\label{eq:CANTON}
-\log \left[1-x+\lambda(|x|)\right]\leq {\psi_\lambda}(x) \leq \log \left[1+x+\lambda(|x|)\right], \quad \forall x \in \mathbb{R}
\end{align}

We assume that our parameter space $\Theta$ satsifies
\begin{itemize}
\item [\textbullet] (C.2):  The parameter space $\Theta \subseteq \mathbb{R}^{p}$ is convex and there exists some $r_n \in (0,\infty)$, which may depend on the size $n$ of the observed data, such that $\|\theta\|_{2} \le r_n,~\forall \theta \in \Theta$.
\end{itemize}
The condition (C.2) naturally induces a ridge penalty for the ERM problem \eqref{eq:ERM} as the following:
\begin{equation}\label{eq:cantonil2}
\hat \theta_{n} \in \mathop {\arg\min }\limits_{\theta  \in {\Theta}} \{{{\hat R}_{\psi_\lambda,l,\alpha}}(\theta )+\rho\|\theta\|_2^2\}\ \ \ {\rm with} \ \ \ \ \ \
{\hat R}_{\psi_\lambda,l,\alpha}(\theta):=\frac{1}{{n\alpha }}\sum\limits_{i = 1}^n {\psi_\lambda}[\alpha l({Y_i} , {{X}}_i,\theta )],
\end{equation}
where $\alpha>0$ is a \emph{robustification parameter} to be tuned, and $\rho>0$ is a \emph{penalty parameter} for $\ell_2$-regularization.

As we shall see below, the estimator defined by \eqref{eq:cantonil2}  can only work for the case of $p<n$ with $p=o(n/\log n)$ and thus rules out the high dimensional learning problems with $p>n$.
To solve this problem, we assume an $s_n$-sparsity condition:
\begin{equation} \label{e:sparsity}
\Theta^{s}:=\{\theta \in \Theta: ~\|\theta\|_{0} \le s_n\},
\end{equation}
and introduce an elastic net \citep{zou2005regularization} as follows:
\begin{equation}\label{eq:cantoninet}
\hat \theta_{n} \in \mathop {\arg\min }\limits_{\theta  \in {\Theta}} \{\frac{1}{{n\alpha }}\sum\limits_{i = 1}^n {\psi_\lambda}[\alpha l({Y_i} , {{X}}_i,\theta )]+\rho\|\theta\|_2^2+\gamma\|\theta\|_{1}\},
\end{equation}
where $\rho>0$ and $\gamma>0$ are both tuning parameter to be chosen later.

% Our proposed estimator is solving from the non-convex optimization. The solution for the non-convex truncated Losses could be solved by stochastic gradient descent (SGD) method with $\alpha$ being a tuning parameter, see \cite{xu2020learning}. Truncating loss functions has potential to be less vulnerable and more robust to large noise in observations that could be adversarial.

%Note that  $|x| \ge \sqrt {1 + {x^2}}  - 1$ and $\frac{{\sqrt {1 + {e^2}}  - 1}}{e}|x|\log (|x| \vee e) \ge \sqrt {1 + {x^2}}  - 1$. We could put $\lambda(|x|)=|x|$ or $\lambda(|x|)=\frac{{\sqrt {1 + {e^2}}  - 1}}{e}|x|\log (|x| \vee e)$. However, $\lambda(|x|)=|x|$ (The ${\psi^\lambda}(x)$ coincides loss function for squared Cauchy variables \citep{eicker1985sums}) do not satisfy (C.2) since $f(t)=|x|$ and $f(t)/|t|$ a constant function of $t$.

The $\alpha$ will be chosen according to the sample size $n$ and tend to $0$ as $n\rightarrow \infty$. In order to obtain the optimal ${\alpha }$ as deriving excess risk bounds, we assume the following conditions for further analysis:
\begin{itemize}
\item [\textbullet] (C.3) \textbf{Local Lipschitz condition}: $\exists$ {\emph{locally Lipschitz constant}} $H_{y,x}$ s.t. $l(y,x,\cdot)$ satisfies
\[|l(y,x,{\theta_2}) - l(y,x,{\theta _1})| \le H_{y,x}\|{\theta _2} - {\theta _1}\|_{2};{\theta _1},{\theta _2} \in \Theta.\]

\item [\textbullet] (C.4) \textbf{Moment condition}:  ${{\E}}[\lambda(H_{Y,X})]<\infty$.

\item [\textbullet] (C.5) \textbf{The existence of risk functions}: {$R_{\lambda \circ l}(\Theta):={\sup }_{\theta  \in \Theta } R_{\lambda \circ l}(\theta )<\infty$.}
 \end{itemize}
\begin{remark}
 The assumptions (C.1)-(C.5) hold for a large class of examples including classical regressions and high dimensional non-convex regressions via DNN, see concrete examples in the next sections.

(C.4) is essentially an assumption on the moments of $X$ and $Y$. For instance, as
$\lambda(x)={|x|^{\beta}}/{\beta}$ with $\beta \in (1,2)$, (C.4) implies that $H_{Y,X}$ has $\beta$ moments.

As the data satisfy the condition $E[l(Y,X,0)]<\infty$, which is true for all the examples in this paper, by (C.2), (C.3) and (C.4),
we immediately see that (C.5) holds.
\end{remark}

{For further use, we define the following risk function:
$${R_{\lambda \circ l}}({\theta}): ={{\E}}\{\lambda[ l({Y},X,\theta)]\}, \ \ \ \ \theta \in \Theta.$$}

\subsection{Main results}\label{Regression}
In this subsection, we state our main results, Theorems \ref{thm1loss}, \ref{thm1loss2} and \ref{thm:Elastic} below,  the first theorem is a general result for the ridge regression \eqref{eq:cantonil2}  under the assumptions (C.1)-(C.5), while the other two provide the error bounds of excess risks of the regressions \eqref{eq:cantonil2}  and \eqref{eq:cantoninet} as $\lambda(x)={|x|^{\beta}}/{\beta},~\beta \in (1,2)$.

\begin{theorem}\label{thm1loss}
Let $\hat \theta_{n}$ be defined by \eqref{eq:cantonil2}. For a $\delta \in (0,1/2)$ and $\kappa>0$, under (C.1)-(C.5), we have with probability at least $1 - 2\delta$
\begin{align*}
&~~~~{R_l}(\hat \theta_{n} ) -\mathop {\inf}\limits_{\theta  \in {\Theta}} {R_l}({\theta}) \\
& \le 2\kappa\{{{\E}}H_{Y,X} + \frac{{{c_2}f(\alpha \kappa)}}{\alpha \kappa}{{\E}}[\lambda(H_{Y,X})]\} +\frac{{(c_2+1)f(\alpha )}}{\alpha }R_{\lambda \circ l}(\Theta)+ \frac{1}{{n\alpha }}\log \frac{N(\Theta, \kappa)}{\delta^2}+\rho\|\Theta^*\|_2^2,
\end{align*}
where $c_2$ is a constant in (C.1.1); $f(t)$ is the function in (C.1.2); $\|{\Theta^*}\|_2:=\inf_{\theta^*\in \Theta^*}\|\theta^*\|_2$.

In particular, choose $\kappa=1/n$ and tune $\alpha$ accordingly, then with probability at least $1 - 2\delta$
\begin{equation}\label{eq:clossexcess}
{R_l}(\hat \theta_{n} ) -\mathop {\inf}\limits_{\theta  \in {\Theta}} {R_l}({\theta})  \le \frac{{2{{\E}}H_{Y,X}}}{{n}} + \frac{{{c_2}{{\E}}[\lambda(H_{Y,X})] }}{\alpha}f(\frac{\alpha}{n}) +\frac{2}{{n\alpha }}\log \frac{N(\Theta ,n^{-1} )}{\delta^2}+\rho\|\Theta^*\|_2^2,
\end{equation}
where $\alpha = {f^{ - 1}}\left( {\frac{1}{{n({c_2} + 1)}}[R_{\lambda \circ l}(\Theta)]^{ - 1}\log \frac{{N(\Theta ,n^{-1} )}}{{{\delta ^2}}}} \right)$.
\end{theorem}
\begin{remark}
The terms $\frac{{(c_2+1)f(\alpha )}}{\alpha }R_{\lambda \circ l}(\Theta)$ and $\frac{1}{{n\alpha }}\log \frac{{N({\Theta},\kappa )}}{{{\delta ^2}}}$ in Theorem \ref{thm1loss} can be viewed as variance and bias respectively. We choose the tuning parameter $\alpha$ by setting $\frac{{(c_2+1)f(\alpha )}}{\alpha }R_{\lambda \circ l}(\Theta) = \frac{1}{{n\alpha }}\log \frac{{N({\Theta},\kappa )}}{{{\delta ^2}}}$. Note that \cite{zhang2018ell_1} and  \cite{chen2020generalized} chose their $\alpha$ without this delicate consideration.
\end{remark}

Under infinite variance assumption, by Theorem \ref{thm1loss} we can derive our \emph{second main result}, in which we need the condition $p<n$ but allow $p$ to grow with $n$.

\begin{theorem}\label{thm1loss2}
Set $\lambda(x)={|x|^{\beta}}/{\beta},~\beta \in (1,2)$,
$\alpha = \frac{1}{n^{1/{\beta}}}\left[\frac{C_{\delta,n,r}(p)}{(2^{\beta-1} + 1)R_{\lambda \circ l}(\Theta)}\right]^{1/{\beta}}$
in Theorem~\ref{thm1loss} with $C_{\delta,n,r}(p):= {{\log ({\delta ^{ - 2}}) + p\log \left( {1 + 2n r_n } \right)}}$, and assume ${{\E}}H_{Y,X}^{\beta}<\infty$. Then, with probability at least $1 - 2\delta$, one has
\begin{align*}
&~~~~{R_l}(\hat \theta_{n} ) -\mathop {\inf}\limits_{\theta  \in {\Theta}} {R_l}({\theta})\\
&\le \frac{2{{\E}}H_{Y,X}}{n}+C_{\beta,R_{\lambda \circ l}} \left[\frac{C_{\delta,n,r}(p) }{n}\right]^{\frac{\beta-1}{\beta}}+\rho\|\Theta^*\|_2^2=O\left[{{{\left( {\frac{{p\log (nr_n)}}{n}} \right)}^{\frac{\beta-1}{\beta}}}} \right]+\rho\|\Theta^*\|_2^2,
\end{align*}
where $C_{\beta,R_{\lambda \circ l}}:= \left[{2({2^{\beta-1}} + 1)}{R_{\lambda \circ l}(\Theta)}+ \frac{{{2^{\beta-1}}}}{{ {\beta }}}{{\E}}H_{Y,X}^{\beta}\right]/[{{(2^{\beta-1} + 1){R_{\lambda \circ l}(\Theta)}}}]^{\frac{\beta-1}{\beta}}$. Additionally, if we replace ${{\E}}H_{Y,X}^{\beta}<\infty$ with
\begin{equation}\label{eq:pn}
{{\E}}H_{Y,X}=o(n),{{\E}}H_{Y,X}^{\beta} = O({n^{\beta}}).
\end{equation}
Then, the consistency of excess risk is valid, i.e.
${R_l}(\hat \theta_{n} ) -\mathop {\inf}\nolimits_{\theta  \in {\Theta}} {R_l}({\theta})=o_p(1).$
\end{theorem}

{
\begin{remark}
 The high probability upper bound of ${R_l}(\hat \theta_{n} ) -\mathop {\inf}\nolimits_{\theta  \in {\Theta}} {R_l}({\theta})$ in Theorem \ref{thm1loss2} tells us that the convergence rate of excess risk is of an order $O( {{( {\frac{{p\log (nr_n)}}{n}})}^{  (\beta-1)/{{\beta}}}} )$ if the regularization error $\rho\|\theta^*\|_2^2\lesssim {{( {\frac{{p\log (nr_n)}}{n}})}^{  (\beta-1)/{{\beta}}}}$, { and it gives the consistency of excess risk if $ {{( {\frac{{p\log (nr_n)}}{n}})}^{  (\beta-1)/{{\beta}}}}=o(1)$, i.e. $r_n=n^{-1}e^{o(n/p)}$.} For $\beta \in (1,2)$, the smaller $\beta$ is, the slower the convergence rate will be.
\end{remark}
\begin{remark}
 For condition \eqref{eq:pn}, we provide two examples. In quantile regressions, $H_{y,x}\propto \| x \|_2 \le \sqrt{d}\| x \|_{\infty}$;  as long as $d=o(n^2)$ and ${{\E}}\| X \|_{\infty}^{\beta}<\infty$, we have ${{\E}}H_{Y,X}\propto{{\E}}[\sqrt{d}\| X \|_{\infty}]=o(n)$ and ${{\E}}H_{Y,X}^{\beta}\propto{{\E}}[\sqrt{d}\| X \|_{\infty}]^{\beta}=O(n^{\beta})$ in condition \eqref{eq:pn}. Similarly, negative binomial loss satisfies condition \eqref{eq:pn} if $d=o(n^2)$, ${{\E}}Y^{2\beta}<\infty$ and ${{\E}}\| X \|_{\infty}^{2\beta}<\infty$, by $H_{y,x}\propto y\left\| x \right\|_2$.

% NBR loss has  ${H}(y,x)\propto y\left\| x \right\|_2$, and we need ${{\E}}H_{Y,X}\propto{{\E}}[\sqrt{p}Y\| X \|_{\infty}]\le \sqrt{p}[{{\E}}Y^2]^{1/2}[{{\E}}\| X \|_{\infty}^2]^{1/2}=o(n)$ and ${{\E}}H_{Y,X}^{\beta}\propto{{\E}}[\sqrt{p}Y\| X \|_{\infty}]^{\beta}\le p^{\beta/2}[{{\E}}Y^{2\beta}]^{1/2}[{{\E}}\| X \|_{\infty}^{2\beta}]^{1/2}=O(n^{\beta})$ under conditions $p=o(n^2)$, ${{\E}}Y^{2\beta}<\infty$ and ${{\E}}\| X \|_{\infty}^{2\beta}<\infty$.
\end{remark}

\begin{remark}
Theorem 1 in \cite{zhang2018ell_1} focused on the excess risk bound of robust LAD regression under ${{\E}}\| X \|^2<\infty$. As a special case of \eqref{eq:cantonil2}, while Theorem 4.1 in \cite{chen2020generalized} considered the excess risk bound of robust LAD regression that allows infinite variance of input using $\lambda(x)=|x|^{\beta}/{\beta}$ with $\beta \in (1,2)$. Theorem \ref{thm1loss2} extends these two results to a large class of loss functions, which include many other regressions.
\end{remark}

%In the high dimensional setting with $p \gg n$,  under an $s$-sparse parameter assumption, we introduce an elastic net as follows:
%\begin{equation}\label{eq:cantoninet}
%\hat \theta_{n} \in \mathop {\arg\min }\limits_{\theta  \in {\Theta}} \{\frac{1}{{n\alpha }}\sum\limits_{i = 1}^n {\psi_\lambda}[\alpha l({Y_i} , {{X}}_i,\theta )]+\rho\|\theta\|_2^2+\gamma\|\theta\|_{1}\},
%\end{equation}
%where $\rho>0$ and $\gamma>0$ are both tuning parameter to be chosen later.
Our \emph{third main result} is the following theorem about the estimator of elastic net defined by \eqref{eq:cantoninet} under the $s_n$-sparsity condition \eqref{e:sparsity}.
\begin{theorem}\label{thm:Elastic}
Let ${\Theta^*}$ be defined by \eqref{eq:ture} and let $\hat \theta_n$ be given by \eqref{eq:cantoninet} with $\lambda(x)=|x|^{\beta}/{\beta}$ with $\beta \in (1,2)$. If
\begin{center}
$\alpha = \frac{1}{n^{1/{\beta}}}\left(\frac{\log ({\delta ^{ - 2}}/\sqrt{2 e s_n}) +s_n\log\left[(1+2nr_n){ep}/{s_n} \right]}{(2^{\beta-1} + 1)R_{\lambda \circ l}(\Theta )}\right)^{{1}/{\beta}}$ with (C.3), $R_{\lambda \circ l}(\Theta)<\infty$ and ${{\E}}H_{Y,X}^{\beta}<\infty$,
\end{center}
then with probability at least $1 - 2\delta$ one has
\begin{small}
\begin{align*}
{R_l}(\hat \theta_{n} ) -\mathop {\inf}\limits_{\theta  \in {\Theta}} {R_l}({\theta})\le \frac{2{{\E}}H_{Y,X}}{n}+\frac{C_{\beta,R_{\lambda \circ l}}}{n^{\frac{\beta-1}{\beta}}}\left({\log (\frac{\delta ^{ - 2}}{2 e s_n}) + s_n\log\left[(1+2nr_n)\frac{ep}{s_n} \right]}\right)^{\frac{\beta-1}{\beta}}+\|{\Theta^*}\|_{\rho,\gamma},
\end{align*}
\end{small}
where $C_{\beta,R_{\lambda \circ l}}$ is a constant given in Theorem \ref{thm1loss2}, and $\|{\Theta^*}\|_{\rho,\gamma}:=\inf_{\theta^*\in \Theta^*}(\rho\|\theta^*\|_2^2+\gamma\|\theta^*\|_1)$.
\end{theorem}
\begin{remark}
Suppose that $\Theta^*$ is a bounded set, Theorem \ref{thm:Elastic} implies a rate $O( {({ {{{s_n\log (nr_n)}}/{n}} )}^{(\beta-1)/\beta}})$ excess risk bound if
\begin{center}
$\rho \vee \gamma \lesssim {({ {{{s_n\log (nr_n)}}/{n}} )}^{(\beta-1)/\beta}}$,
\end{center}
 and it works for the high-dimensional setting $p \gg n$. Moreover, put ${({ {{{s_n\log (nr_n)}}/{n}} )}^{(\beta-1)/\beta}}=o(1)$, which implies the consistency of excess risk if $r_n=n^{-1}e^{o(n/s_n)}$.
\end{remark}
%Corollary \ref{eq:elastic} in below obtains a rate $O( {{( {\frac{{s\log n}}{n}})}^{  (\beta-1)/{{\beta}}}} )$ excess risk bound for robust parametric regressions by $\ell_1+\ell_2$ regularised ERM, by modifying the proof of Theorem \ref{thm1loss2}

%In Section \ref{se:example} below, we will illustrate several applications of Theorem \ref{thm1loss2} under the finite $\beta$-th moment, including quantile regressions and GLMs. In Section \ref{se:dnn}, we will consider an Elastic-net penalized version of \label{eq:cantoniloss} in Theorem \ref{eq:DNNsThm3} for studying robust and sparse deep neural network regressions, when the number of deep neural network parameter exceeds the sample size. In Appendix, we will gives examples of robust two-component mixed linear regressions.

\section{Examples for Theorem \ref{thm1loss2} ($p<n$)}\label{se:example}
 This section provides examples of several robust regressions, which include quantile regression and GLMs. We assume that the data in this section has the finite $\beta$-th moment with $\beta \in (1,2)$. In the all the models in the section, the dimension of the input $X$ equals that of the parameter $\theta$, i.e. $d=p$.

%\subsection{Robust regressions with convex losses}

\subsection{Robust quantile regressions}\label{se:quantile}
Consider
\begin{equation}\label{eq:QR}
Y_i= {{X}}_i^ \top \theta^*+\epsilon_i,(i=1, \ldots, n),
\end{equation}
where ${{X}}_i=(X_{i1},\ldots,X_{ip})^\top$ is the $i$-th stochastic design point in $\mathbb{R}^{p}$, and random errors $\epsilon_i$'s are i.i.d. and satisfy  $P \left(\epsilon_i < 0 | X_ { i } \right) = \tau$ for $0<\tau<1$. The unknown regression coefficient $\theta^*=(\theta^*_{1},\ldots,\theta^*_{p})^\top$ may depend on $\tau$, but we suppress such dependence for the notational simplicity. The conditional distribution of $Y$ given $x$ is $F(y\vert x) = P(Y \leq y\vert x)$ and the $\tau\mathrm { th }$ conditional quantile of $Y$ given $x$ is $Q_{y| x} (\tau)= \inf \{t: F(t\vert x) \geq \tau\}$. The problem of interest is to estimate the unknown slope coefficient $\theta^*$ by regressing the conditional quantile function
\begin{center}
$Q_{Y_i| X_i} (\tau)={{X}}_i^ \top \theta^*,(i=1, \ldots, n)$.
\end{center}
%It gives birth to the quantile loss by the M-estimations in Chapter 5 of \cite{van1998asymptotic}.
Recall that the loss function of quantile regression is
\begin{center}
$l(y , x,\theta )={\rho _\tau }(y - x^ \top \theta )$
 with $\rho_{\tau}(u)=u[\tau-I(u<0)]$,
 \end{center}
see more details in \cite{koenker1978regression}.

{Under the i.i.d. data $\{(X_{i}, Y_{i})\}_{i=1}^n$, we study the log-truncated estimator $\hat \theta_n$ for the quantile loss:
\begin{equation}\label{eq:M-estimatorQ}
\hat \theta_n : =\mathop {\arg\min }\limits_{\theta  \in {\Theta}}{{\hat R}_{\psi_\lambda,{\rho _\tau },\alpha}}(\theta ),
\end{equation}
where ${{\hat R}_{\psi_\lambda,{\rho _\tau },\alpha}}(\theta ):=\frac{1}{{n\alpha }}\sum_{i = 1}^n \psi_\lambda [\alpha {\rho _\tau }({Y_i} - {{X}}_i^ \top \theta )]$ and $\lambda(x)=\frac{1}{\beta}|x|^{\beta},~{\beta} \in (1,2)$. The \emph{tuning parameter} $\alpha$ will be specified. The true }parameter $\theta^*$ is defined as the minimizer
\begin{equation}\label{eq:tureQ}
\theta^* : =   \mathop {\arg\min }\limits_{\theta  \in {\Theta}} {{R}_{\rho _\tau }}(\theta ),
\end{equation}
where $ {{R}_{\rho _\tau }}(\theta ):=\E[ {{\rho _\tau }({Y} - X^ \top \theta )} ]$ for ${\theta  \in {\Theta}}$. Besides (C.2), {we further assume}
 \begin{itemize}
\item [\textbullet] (Q.1): ${{\E}}\left\| {{X}} \right\|_{{2}}^{\beta}<\infty$.
\item [\textbullet] (Q.2):  ${{ R}_{\lambda \circ \rho _\tau}}(\Theta):=\sup_{\theta  \in \Theta }{{ R}_{\lambda \circ \rho _\tau}}(\theta )<\infty$ with $ {{ R}_{\lambda \circ \rho _\tau}}(\theta ):=\E[ \lambda({\rho_\tau }({Y} - {{X}}^ \top \theta ))]$.
%\item [\textbullet] (Q.3): The ${\varepsilon}_1$ is the continuous r.v. and its density function is uniformly bounded away from 0:
%$\mathop {\inf}\limits_{|t| < \infty }{f_{\varepsilon_1}}(t) > {f_l},$
%where ${f_l}$ is a positive constant.
%\item [\textbullet] (Q.4):  Assume that the matrix $\E[XX^ \top ]$ is non-singular. That is to say there exists a positive constant ${c_\Sigma }$ such that
%${\E}[XX^ \top ] \succ {c_\Sigma } \mathrm{I}_{p}.$
\end{itemize}
%By the Taylor-like expansion (Knight's identity, \cite{knight1998limiting}) of the check function
%\begin{center}
%$\rho_{\tau}(u-v)-\rho_{\tau}(u)=-v[\tau-{\rm{1}}(u<0)]+\int_{0}^{v}[{\rm{1}}(u \leq s)-{\rm{1}}(u \leq 0)] d s,~u, v\in \mathbb{R},$
%\end{center}
%{\huiming one can check $H_{y,x}=l_\tau\left\| x \right\|_2$ under (C.2), where $l_\tau:=\max \{1+ \tau ,2 - \tau \}$; see Section \ref{eq:knight} in Appendix. Thus $H_{y,x}$ is free of $y$ and Theorem \ref{thm1loss2} implies the following excess risk bound.}

\begin{corollary}\label{Thm2}
Let $\tau \in (0,1)$, $\delta \in (0,1/2)$. Define $\hat \theta_n$ by \eqref{eq:M-estimatorQ}, and ${\theta^*}$ is given by \eqref{eq:tureQ}. Under (C.2), (Q.1) and (Q.2), if we put
$\alpha = \frac{1}{n^{1/{\beta}}}\left[\frac{C_{\delta,n,r}(p)}{({2^{\beta-1}} + 1){{ R}_{\lambda \circ \rho _\tau}}(\Theta)}\right]^{1/{\beta}}$. Then, with probability at least $1 - 2\delta$ one has
\begin{align*}
{R_l}(\hat \theta_n ) - {R_l}({\theta^*})\le \frac{{2{l_\tau }}{\E}{\left\| {{X}} \right\|_{{2}}}}{n}+C_{\beta,R_{\lambda \circ \rho _\tau}}\left[\frac{C_{\delta,n,r}(p) }{n}\right]^{\frac{\beta-1}{\beta}}+\rho\|\theta^*\|_2^2,
\end{align*}
where $C_{\beta,R_{\lambda \circ \rho _\tau}}:= \left[{2({2^{\beta-1}} + 1)}{R_{\lambda \circ \rho _\tau}(\Theta)}+ \frac{{{2^{\beta-1}}l_\tau^\beta}}{{ {\beta }}}{{\E}}\left\| {{X}} \right\|_{{2}}^{\beta}\right]/[{{(2^{\beta-1} + 1){R_{\lambda \circ \rho _\tau}(\Theta)}}}]^{\frac{\beta-1}{\beta}}$ and $l_\tau:=\max \{1+ \tau ,2 - \tau \}$ .
%If (Q.3) and (Q.4) hold additionally, we have with probability at least $1 - 2\delta$
%\[{\| {\hat \theta_n  - \theta^*} \|_{2}^2} \le \frac{1}{{f_l{c_\Sigma }}}\left[ \frac{{2{l_\tau }}{\E}{\left\| {{X}} \right\|_{{2}}}}{n}+C_{\beta,R_{\lambda \circ \rho _\tau}}\left[\frac{C_{\delta,n,r}(p) }{n}\right]^{\frac{\beta-1}{\beta}}+\rho\|\theta^*\|_2^2\right].\]
\end{corollary}
 \cite[Section 4.1.2]{Koenker2005} stressed that {$2$nd moment condition of the input }is required to show the consistency for the ERM estimator $\bar \theta_n : = {\arg\min }_{\theta  \in {\Theta}} \frac{1}{{n }}\sum_{i = 1}^n {\rho _\tau }({Y_i} - {{X}}_i^ \top \theta )$, so our new method is essential in quantile regression for the data with $\beta$-th moment ($1<\beta<2$).

%\marginnote{Move this part to appendix. }

\subsection{Robust generalized linear models (GLMs)}\label{se:rglms}
We consider the general loss function {of GLMs \citep{mccullagh1989generalized} as below}. In this part, we assume that $\{ Y_i\}_{i=1}^{n}$ satisfy some moment conditions rather than put specific conditions on its distribution (as in the classical GLMs).

Let $u(\cdot)$ be a known link function. Consider the quasi-GLMs loss function:
\begin{equation}\label{eq-log-like}
{{\hat R}_l}(\theta ):=\frac{1}{n}\sum_{i=1}^{n}l({Y_{i}} ,X_{i}^ \top \theta ),~\theta \in \mathbb{R}^{p}
\end{equation}
where the loss function is given by $l(y ,x^\top \theta ):=k(x^ \top \theta)-y u(x^ \top \theta)$ with $k(t):=b(u (t))$ for a given function $b(\cdot)$. If $u(t)=t$, we say the quasi-GLMs has the \emph{canonical link}.

 Let $\alpha$ be a tuning parameter to be specified, the log-truncated robust estimator $\hat \theta_n$ for quasi-GLMs is
\begin{equation}\label{eq:M-estimatorglm}
\hat \theta_n : =\mathop {\arg\min }\limits_{\theta  \in {\Theta}}{{\hat R}_{\psi_\lambda,l,\alpha}}(\theta ),
\end{equation}
where ${{\hat R}_{\psi_\lambda,l,\alpha}}(\theta ):=\frac{1}{{n\alpha }}\sum_{i = 1}^n \psi_\lambda [\alpha (k(X_{i}^ \top \theta)-Y_{i} u(X_{i}^ \top \theta))]$ with $\psi_\lambda(x):={\rm{sign}}(x)\log (1 + |x| +{\beta^{-1}}|x|^{\beta})$. We assume the GLMs-related conditions:

\begin{itemize}

\item [\textbullet] (G.1): {Assume that $u(\cdot)$ is continuous differentiable and $\dot u(\cdot)\ge 0$, there exist a positive constant $A$ and a positive function ${g_A}(\cdot)$:}
\[0 \le \dot u(x^ \top \theta ) \le {g_A}(x),~\text{for}~\| \theta\|_2\le A.\]
\item [\textbullet](G.2): {Given a function $ b(\cdot)$ such that $\ddot b(\cdot)>0$, suppose that  $k(\cdot)$ is continuous differentiable and $k(\cdot)\ge 0$, there exist a positive function ${h_A}(\cdot)$:}
\[0<\dot k(x^ \top \theta) \le {h_A}(x),~\text{for}~\| \theta\|_2\le A.\]

\item [\textbullet] (G.3): Mixed moment condition: ${\E}|[|Y|{g_A}(X)+{h_A}(X)]\left\| X \right\|_2|^{\beta} <\infty$.

\item [\textbullet] (G.4): Assume $\sigma _R: ={\sup }_{\theta  \in {\Theta}} {{\E}}[k(X^ \top \theta ) - Yu(X^ \top \theta )]^{\beta}/{\beta}<\infty$.
\end{itemize}

The conditions (G.1), (G.2) and (G.3) imply (C.4), while (G.4) implies (C.5); see Remark \ref{eq:GLMS} in Appendix for more discussions for (G.1) and (G.2). Theorem \ref{thm1loss2} is applicable to obtain the following result.
\begin{corollary}\label{eq:GLMThm2}
Let $\theta^* =  \mathop {\arg\min }\nolimits_{\theta  \in {\Theta}} \E[{l(y, x,\theta )} ]$ with loss $l(y, x,\theta ):=k(x^ \top \theta)-y u(x^ \top \theta)$ defined in \eqref{eq-log-like}, and $\hat \theta_n$ is given by  \eqref{eq:M-estimatorglm}. Under (C.2), (G.1)-(G.4), if
$\alpha = \frac{1}{n^{1/{\beta}}}\left[\frac{C_{\delta,r}(p) }{(2^{\beta-1} + 1)\sigma_R}\right]^{1/{\beta}}$, then with probability at least $1 - 2\delta$
\begin{align}\label{eq:coglm}
{R_l}(\hat \theta_n ) - {R_l}({\theta^*})&\le {{\E}}\{[|Y|{g_{r_n}}(X)+{h_{r_n}}(X)]\left\| X \right\|_2\}\frac{2}{n} +C_{\beta,\sigma _R}\left[\frac{C_{\delta,r}(p) }{n}\right]^{\frac{\beta-1}{\beta}}+\rho\|\theta^*\|_2^2,
\end{align}
where $C_{\beta,\sigma _R}:= \left[{2({2^{\beta-1}} + 1)}\sigma _R+ \frac{{{2^{\beta-1}}}}{{ {\beta }}}{{\E}} |[|Y|{g_{r_n}}(X)+{h_{r_n}}(X)]\left\| X \right\|_2|^{\beta}\right]/[{{(2^{\beta-1} + 1)\sigma _R}}]^{\frac{\beta-1}{\beta}}$.
\end{corollary}

Corollary \ref{eq:GLMThm2} can be applied to the following two examples, robust logistic regression and robust negative binomial regression; see Appendix \ref{se:GLMap} for derivations.

\begin{example}[Robust logistic regression]
The output in logistic regression can take only two values: ``0, 1''.  Formally, let $Y_i$'s $\in\{0,1\}$ be the random outputs and ${\theta ^{\rm{*}}}$ be a $p\times1$ vector of unknown regression coefficients belonging to a compact subset of $\mathbb{R}^p$. Given $n$ random input $X_i$'s $\in\mathbb{R}^{n \times p}$, the logistic regression assumes $P({Y_i} = 1|{X_i};{\theta ^{\rm{*}}}): = \frac{{{e^{X_i^ \top {\theta ^{\rm{*}}}}}}}{{1 + {e^{X_i^ \top {\theta ^{\rm{*}}}}}}}.$ The empirical loss function of logistic regression is
\begin{center}
 $\hat R_l(\theta ) = \frac{-1}{n}\sum_{i = 1}^n {[{Y_i}X_i^ \top {\theta} - \log (1 + {e^{X_i^ \top {\theta}}})} ]$.
\end{center}
Note that ${H}_{y,x}=2\left\| x \right\|_2$  in Corollary \ref{eq:GLMThm2} under logistic loss. To obtain the finite excess risk \eqref{eq:coglm}, the robust logistic regression requires the moment condition
\begin{center}
${\E} \| {{X} }\|_2^{\beta}< \infty.$
\end{center}
%We obtain that ${R_l}(\hat \theta_n ) - {R_l}({\theta^*})$ is bounded by
%\begin{align*}
%\frac{{4}{{\E}}{\left\| {X} \right\|_2 }}{n}+\frac{C_{\delta,n,\varepsilon,r}(p)}{{{n^{\varepsilon/(\varepsilon+1)}}}}\left[{2({2^\varepsilon} + 1)}{{ R_{\lambda \circ l}^{{1/(\varepsilon+1)}}(\Theta)}}+\frac{{2^{1+2\varepsilon}}{{\E}}{\left\| {X} \right\|_2^{\beta}} }{{ {\beta}n^{\beta}{{R_{\lambda \circ l}^{\varepsilon/(\varepsilon+1)}(\Theta)}}}}  \right]+\rho\|\theta^*\|_2^2
%\end{align*}
%with probability at least $1 - 2\delta$.
\end{example}

For modeling count data regressions, {the classical Poisson regression as the canonical link GLMs has equal dispersion assumption (i.e. $\E(Y|X)=\mathrm{Var}(Y|X)$), which has little practical motivation.} Nevertheless, it motivates us to study the more flexible count data regressions, as shown below.

\begin{example}[Robust negative binomial regression]
As a generalization of Poisson regression,  negative binomial regression (NBR) relaxes the equadispersion assumption to the quadratic relationship between the mean and variance of the responses. NBR assumes that the overdispersed {responses $\{Y_i\}_{i=1}^n$} are modelled by two-parameter negative binomial distribution with the connection of covariates:
$P({Y_i=y}|\theta,\mu_{i})=\frac{\Gamma(\eta+y)}{\Gamma(\eta)y!}(\frac{\mu_{i}}{\eta+\mu_{i}})^{y}(\frac{\eta}{\eta+\mu_{i}})^{\eta}~\text{with}~\log\mu_{i}=X_i^ \top \theta,$
where $\eta>0$ is the known dispersion parameter, which can be estimated previously. One has ${\E}({Y_i}| {X}_i ) = {\mu _i} \le {\rm{Var}}({Y_i}| X_i ) = {\mu _i} + {{\mu _i^2}}/{\eta }$. The NBR empirical loss function is
\begin{center}
${{\hat R}_l}(\theta )=\frac{-1}{n}\sum\limits_{i=1}^{n}\{Y_{i}[{X}_{i}^ \top \theta-\log(\eta+e^{{X}_{i}^ \top \theta})]-\eta\log(\eta+e^{{X}_{i}^ \top \theta})\},$
\end{center}
see \cite{zhang2022elastic} for details. In Corollary \ref{eq:GLMThm2}, NBR loss has  ${H}_{y,x}=(y+\eta)\left\| x \right\|_2$. Note that there are no assumptions for the distribution of output, and it only requires the moment conditions
\begin{center}
${\E}\| {{X}{Y}} \|_2^{\beta} < \infty$ and ${\E} \| {{X} }\|_2^{\beta}< \infty$
\end{center}
to guarantee the excess risk bound \eqref{eq:coglm}.
\end{example}

\section{Examples for Theorem \ref{thm:Elastic} ($p>n$): non-convex regressions via DNN}\label{se:dnn}
 In many statistical learning problems, loss functions are non-convex, whereby the associated ERMs have multiple local minima; see \cite{guan2017truncated,chen2021bridging,klusowski2019estimating}. Regressions via DNN is a large family of highly non-convex learning problems due to the multiple compositions of activation functions. In this section, we shall apply Theorem \ref{thm:Elastic} to study high dimensional non-convex regressions via DNN.

%By modifying the proof of Theorem \ref{thm1loss2} with sparsity assumption and approximation error, we can derive a general theorem (see Theorem \ref{eq:DNNsThm3} below) for sparse DNN regressions.\\

We consider the {DNN} function class as follows: % $f_\theta({x}): \mathbb{R}^d \to \mathbb{R}$
\begin{equation}\label{eq:NN}
\mathcal{N} \mathcal{N}(N, L):= \big\{ f_\theta({x}) =W_{L} \sigma_{L}\left(W_{L-1} \sigma_{L-1}\left(\ldots W_{1}\sigma_{1}\left(W_{0} {x}\right)\right)\right) | \, \theta: = ( W_0, \ldots, W_L )\big\},
\end{equation}
where $W_{j} \in \mathbb{R}^{N_{j} \times N_{j+1}}$ for $j=0,1,\dots,L-1$ with $N_{0}=d$. Here $L$ represents the depth of this class of {DNNs}, each activation function $\sigma_j: \mathbb{R}^{N_j} \to \mathbb{R}^{N_j}, j=1,2,\dots,L$, and $\theta$ is the vectorized parameter consisting of weighted matrices with the width $N=\max \left\{N_{1}, \ldots, N_{L}\right\}$; see \cite{fan2021selective} for details.

For i.i.d. observations $\{  (X_i, Y_i) \}_{i = 1}^n$ and a given loss function $l(\cdot,\cdot)$, the risk function is
$$R_l(f)= \E [ l ( Y,f({X}) )], \ \ \ \ {\rm for \ a \ function} \ \ f: \R^d \rightarrow \R.$$ In general, the true function $f^*$ belongs to a certain function family and is defined by \citep{fan2021selective}
\begin{equation}\label{ture}
    f^*\in \mathop{\rm{argmin}}\limits_{f}R_l(f).
\end{equation}
From DNN function class \eqref{eq:NN} with parameter space $\Theta  \subset \mathbb{R}^{\sum_{l=0}^{L} N_{l+1} N_{l}}$, we define $\theta_{\mathcal{N}}^*$ as
\begin{equation}\label{NN.ture}
\theta_{\mathcal{N}}^*\in \mathop{\rm{argmin}}\limits_{\theta \in \Theta}R_l(f_\theta)~\text{for}~f_\theta \in \mathcal{N} \mathcal{N}(N, L).
\end{equation}
Note that $p={\sum_{l=0}^{L} N_{l+1} N_{l}}$ and $d=N_0$ in this case. Denote $\Theta_{\mathcal{N}}^*:=\{\theta_{\mathcal{N}}^* \in  \mathop {\arg\min }\nolimits_{\theta  \in {\Theta}} {{R}_l}(f_\theta)\}$ for $f_\theta \in \mathcal{N} \mathcal{N}(N, L)$.

Now we fit the regression problem \eqref{NN.ture} into the framework of the elastic net regression \eqref{eq:cantoninet}, whose corresponding form is as follows:
\begin{equation}\label{eq:NNN}
\hat \theta_n \in \mathop {\arg\min }\limits_{\theta  \in {\Theta}}\left\{\frac{1}{{n\alpha }}\sum_{i = 1}^n \psi_\lambda  \big(\alpha l ( Y_i, f_\theta(X_i) ) \big)+\rho\|\theta\|_2^2+\gamma\|\theta\|_{1}\right\},
\end{equation}
where $\rho~\text{and}~\gamma>0$ are \emph{penalty parameters}.
We have the following assumptions:
 \begin{itemize}
\item [\textbullet] (D.1): Assume that the $l(\cdot,\cdot)$ satisfies Lipschitz condition with a \emph{Lipschitz constant} $D_{x,y}$:
\begin{center}
$|l(y,f_{\theta _2}(x)) - l(y,f_{\theta _1}(x))|\le D_{x,y}{{|f_{\theta _2}(x)-f_{\theta _1}(x)|}}~\text{for}~\theta_1,~\theta_2 \in
\Theta$,
\end{center}
where the DNN function is indexed by the \emph{$s_n$-sparse parameter space}
%\end{itemize}
\begin{equation}\label{eq:l1}
\Theta:=\{\theta:=( W_1, \ldots, W_L ):\|\theta\|_{2} \le r_n,~\|\theta\|_{0} \le s_n\} \subseteq \mathbb{R}^p,
\end{equation}
{ where $r_n, s_n$ are both allowed to increase with the size $n$ of the observed data.} Further assume that there exists some $W>0$ so that $\max_{0 \le j \le L}{\sigma _{\max }}({W_j}) \le W$, where ${\sigma _{\max }}({W_j})$ is the largest singular value of $W_j$.

\item [\textbullet] (D.2):  We assume that the true function $f^*$ belongs to the H{\"o}lder function class
$f^* \in \mathcal{C}^\gamma([0,a]^{d},B)$ with smoothness index $\gamma$, where $a$ and $B$ are both positive constants (see definition of $\mathcal{C}^\gamma([0,a]^{d},B)$ in \eqref{eq:holder}).
\item [\textbullet] (D.3): For a fixed $d$, we assume that: $\E\|X\|_{\infty} \le b$ for $X \in \mathbb{R}^d$ and $b>0$. Moreover, $\E D_{X,Y}^2 <\infty$.
\end{itemize}

\begin{remark}
The Lipschitz condition (D.1), together with Proposition 6 in \cite{taheri2021statistical}, immediately implies (C.3). In the real world applications, the input data are usually transformed into a bounded interval $[0,a]^d$, this motivates the assumption (D.2). %{\huiming %Technically, the (D.1)-(D.3) and the event $\mathcal{A}_n:=\{\|X\|_{\infty} \le a_n \}$ for a given non-decreasing and positive sequence $\{a_n\}$ imply an upper bound of the approximation error
%\begin{center}
%$\inf\limits_{f \in \F} | {{{R}_l}(f)- {R_l}(f^*)}|\lesssim \inf\limits_{f \in \F}\left\|f(X)-f^{*}(X)\right\|_{L^{2}(\nu)}\le \inf\limits_{f \in \F}[\|f-f^*\|_{L^{\infty}([0,a_n]^d)}^2+F^2\mathbb{P}(\mathcal{A}_n^c)]^{1/2}$
%\end{center}
%for any $\|f-f^*\|_{\infty}\le F<\infty$ with ${f \in \mathcal{N}\mathcal{N}(N, L)=:\F}$ in the proof; see Appendix \ref{eq:DNNPROOF}.}

%a sharper maximal inequality: $\E{\max_{1 \le i \le n} |X_i|}=o(n^{1 / z})$ [see Corollary 7.1 in \cite{zhang2020concentration}] for i.i.d. r.v.s $\{X_i\}_{i=1}^n$ with $\E|X_1|^z < \infty,(z \ge 1)$.
\end{remark}

\begin{theorem}\label{eq:DNNsThm3}
Assume that (D.1)-(D.3) hold and that ${{\E}}|\|X\|_{2}D_{X,Y}|^{\beta}<\infty, {\beta} \in (1,\infty)$. Let $f^*$ be defined by \eqref{ture}, and let $\hat \theta_n$ be given by \eqref{eq:NNN} with $\lambda(x)=|x|^{\beta}/{\beta}$ and $\beta \in (1,2)$. For a $\delta \in (0,1/2)$, if we choose
\begin{center}
$\alpha = \frac{1}{n^{1/{\beta}}}\left(\frac{\log ({\delta ^{ - 2}}/\sqrt{2 e s_n}) + s_n\log\left[(1+2nr_n){ep}/{s_n} \right]}{(2^{\beta-1} + 1)R_{\lambda \circ l}(\Theta )}\right)^{{1}/{\beta}}$,
\end{center}
then with probability at least $1 - 2\delta$ we have
\begin{small}
\begin{align}\label{eq:DNNER}
{R}_l( f_{\hat \theta_n})- {R}_l (f	^*) \le E_{1}+E_{2}+E_{3}+E_4,
\end{align}
\end{small}
for any $\|f-f^*\|_{\infty}\le F<\infty$ with ${f \in \mathcal{N}\mathcal{N}(N, L)}$, where
$$E_{1}:=\frac{4W^{L} \sqrt{L}}{n}{\E}[\|X\|_{2}|D_{X,Y}|],~E_{2}:=\inf_{\theta_{\mathcal{N}}^*\in \Theta_{\mathcal{N}}^*}(\rho\|\theta_{\mathcal{N}}^*\|_2^2+\gamma\|\theta_{\mathcal{N}}^*\|_1),$$
$$E_{3}:=2\sqrt{\E D_{X,Y}^2}{b^{\frac{\gamma}{2\gamma+1}}}{\left[\frac{(2B+1)\left(1+d^{2}+\gamma^{2}\right)  6^d MF^2}{ 2^{m}}+\frac{ 3^{\gamma}BF^2} {N^{{\gamma}/d}}\right]^{\frac{1}{2\gamma+1}}},$$
$$E_{4}:=\frac{F_{\beta,L,W}(R_{\lambda \circ l})}{n^{{(\beta-1)}/{\beta}}}\left({\log (\frac{\delta ^{ - 2}}{\sqrt{2 e s_n}}) + s_n\log\left[(1+2nr_n)\frac{ep}{s_n} \right]}\right)^{{(\beta-1)}/{\beta}},$$
with
$F_{\beta,L,W}(R_{\lambda \circ l}):=\left[2{(2^{\beta-1} + 1)R_{\lambda \circ l}(\Theta )}+\frac{(4W^{L} \sqrt{L})^\beta}{2\beta }{{{\E}} |\|X\|_{2}D_{X,Y}|^\beta}\right]/{[(2^{\beta-1} + 1)R_{\lambda \circ l}(\Theta )]^{{(\beta-1)}/{\beta}}}$, the integer $m\ge 1$ and $M\ge (\gamma+1)^d \vee(B+1) e^d$, and $L\le 8+(m+5)\left(1+\left\lceil\log _{2}(d \vee \gamma)\right\rceil\right)$.
\end{theorem}

\begin{remark}
$E_{1}, E_2, E_3, E_4$ can be interpreted as the bias, the penalization error, the error between the true function $f^*$ and the DNN, and the statistical error, respectively. For $E_3$, we require sparsity $s \leq 141(d+\gamma+1)^{3+d} M(m+6)$ in \eqref{eq:l1}.
\end{remark}

The following corollary gives an upper bound for the depth and a lower bound for the width of a DNN designed to realize the regression \eqref{eq:NNN}.
\begin{corollary}\label{coroln}
Under the setting in Theorem \ref{eq:DNNsThm3},  if  \emph{depth-sample} and \emph{width-sample} of DNNs, which may increase with $n$, satisfy the following conditions:
\begin{equation}\label{eq:Nn}
{L_n} \lesssim  \frac{\log n+{\log( {{s_n\log(npr_n)}})}}{\log W},~~N_n \gtrsim b^d\left[\frac{n}{{{s_n\log(npr_n)}}} \right]^{{\frac{d(2\gamma+1)}{\gamma}}\cdot\frac{\beta-1}{\beta}}
\end{equation}
and \emph{order of tuning parameters} $\rho \vee \gamma \lesssim  {({ {{{s_n\log(npr_n)}}/{n}} )}^{(\beta-1)/\beta}}$, then
\begin{center}
${R}_l( f_{\hat \theta_n})- {R}_l (f	^*)  \le C_\delta ( {({ {{{s\log(npr_n)}}/{n}} )}^{(\beta-1)/\beta}})$
\end{center}
 with probability at least $1 - 2\delta$, for a certain constant $C_\delta>0$.
\end{corollary}

%The term ${\log(\log(nP))}$ in \eqref{eq:Nn} reveals that the number of DNN parameter does not has a great impact on depth-sample condition. The $\{b_n\}$ in (D.3) is related to the tail condition of input that has an impact on DNN regressions by the width-sample condition \eqref{eq:Nn}. The more heavy tails of input, the larger width is needed in DNN.\\

We finish this section by giving three concrete examples of robust DNN regressions, and the models therein will be used in simulations or real data studies.
%Therefore, the excess risk bounds follow via Theorem \ref{thm1loss2}.
\begin{example}[Robust DNN LAD regression]\label{eg:DNNLAD1}
Suppose that i.i.d. observations $\{(Y_i, X_i)\sim (Y, X)\}_{i=1}^n \in \mathbb{R}\times \mathbb{R}^d$ satisfy
\begin{center}
$Y_i=f^*(X_{i})+e_i, \quad \E(e_i | X_i)=0,~i=1,2,\cdots,n,$
\end{center}
{where $\{e_i \sim e\}_{i=1}^n$ are i.i.d. noise. Similar to QR with ${\tau}=0.5$}, the robust DNN LAD regression problem \eqref{eq:NNN} has loss function $l(x,y,\theta)=|y - f_\theta(x)|$ with $f_{\theta}\in \mathcal{N}\mathcal{N}(N, L)$. Thus $ {{ R}_{\lambda \circ l}}(\theta ):={\beta^{-1}}\E|Y - f_\theta(X)|^{\beta}=\E|e|^{\beta}/\beta$ and we have ${{ R}_{\lambda \circ l}}(\theta )<\infty$ in Theorem \ref{eq:DNNsThm3} if
\begin{center}
$\E|e|^{\beta}<\infty$.
\end{center}
The LAD regression loss has Lipschitz constant $D_{x,y}=1$ and thus $H_{y,x}=2W^{L} \sqrt{L}\|{x}\|_{2}$ in Theorem \ref{eq:DNNsThm3} also requires
\begin{equation}\label{eq:heavy}
{\E}\left\| X \right\|_2^{\beta}<\infty.
\end{equation}
\end{example}
Recently, \cite{padilla2020quantile,shen2021deep} studied the DNN quantile regression with fixed inputs, and their estimators are only robust for output. Their setting can not deal with the robustness of the random input with heavy-tail condition \eqref{eq:heavy}.

\begin{example}[Robust DNN logistic regression]\label{eg:DNNLAD2}  %\marginnote{Add $H(x,y)$ in the two examples below}
Assume that  i.i.d.  observations $\{(Y_i, X_i)\sim (Y, X)\}_{i=1}^n \in \{0,1\}\times \mathbb{R}^d$ satisfy
\begin{equation}\label{eq:1DNN}
P({Y_i} = 1|{X_i}): = \frac{{{e^{f^*(X_i)}}}}{{1 + {e^{f^*(X_i)}}}},~~P({Y_i} = 0|{X_i})=1-P({Y_i} = 1|{X_i}).
\end{equation}
 The robust DNN logistic regression problem \eqref{eq:NNN} has loss function $l(x,y,\theta)=-[y{f_{\theta}(x)}  - \log (1 + {e^{f_{\theta}(x)}})]$ with $f_{\theta}\in \mathcal{N}\mathcal{N}(N, L)$. Theorem \ref{eq:DNNsThm3} requires $R_{\lambda \circ l}(\Theta):= {\beta^{-1}} {\sup_{\theta  \in \Theta }}\E[Y{f_{\theta}(X)}  - \log (1 + {e^{f_{\theta}(X)}})]^{\beta}\lesssim{\sup\nolimits_{\theta  \in \Theta }}\E|{f_{\theta}(X)}|^{\beta} < \infty$. For logistic loss, it gives $D_{x,y}=y+1 \le 2$ and $H_{y,x}=4W^{L} \sqrt{L}\|{x}\|_{2}$. The moment conditions for robust DNN logistic regression are
\begin{center}
 ${\E}\left\| X \right\|_2^{\beta}<\infty$ and ${\sup\limits_{\theta  \in \Theta }}\E|{f_{\theta}(X)}|^{\beta} < \infty$.
\end{center}
\end{example}
\begin{example}[Robust DNN NBR]\label{eg:DNNLAD3}
Let $\eta>0$ be  the known dispersion parameter. Suppose that i.i.d. observations $\{(Y_i, X_i)\sim (Y, X)\}_{i=1}^n \in \mathbb{Z}\times \mathbb{R}^d$ satisfy
\begin{equation*}\label{eq:NBDNN} P({Y_i=y}|X_i)=\frac{\Gamma(\eta+y)}{\Gamma(\eta)y!}(\frac{\mu_{i}}{\eta+\mu_{i}})^{y}(\frac{\eta}{\eta+\mu_{i}})^{\eta},~\text{with}~\log\mu_{i}=f^* (X_i).
\end{equation*}
The robust DNN NBR problem \eqref{eq:NNN} has loss function $l(x,y,\theta)=-y[{f_{\theta}(x)}  - \log (\eta + {e^{f_{\theta}(x)}})]-\eta\log(\eta + {e^{f_{\theta}(x)}})$. If ${\sup\nolimits_{\theta  \in \Theta }}{{\E}}|Y{f_{\theta}(X)}|^{\beta}< \infty $, then Theorem \ref{eq:DNNsThm3} gives
\begin{align*}
R_{\lambda \circ l}(\Theta):& ={\beta^{-1}} {\sup_{\theta  \in \Theta }}{{\E}}{\{Y[{f_{\theta}(X)}  - \log (\eta + {e^{f_{\theta}(X)}})]-\eta\log(\eta + {e^{f_{\theta}(X)}})\}^{\beta}}\\
&\lesssim {\sup\limits_{\theta  \in \Theta }}{{\E}}|Y{f_{\theta}(X)}|^{\beta}+ {\sup\limits_{\theta  \in \Theta }}{{\E}}|{f_{\theta}(X)}|^{\beta}< \infty.
\end{align*}
For NBR loss, we get $D_{x,y}=y+\eta $ and $H_{y,x}=2W^{L} \sqrt{L}\|{x}\|_{2}(y+\eta)$. (D.3) needs ${\E}(Y+\eta)^2<\infty$. In summary, the required moment conditions are
\begin{center}
 ${\E}(Y+\eta)^2<\infty$, ${\E} [\|X\|_{2}(Y+\eta)]^{\beta}<\infty$ and ${\sup\limits_{\theta  \in \Theta }} {{\E}}|Y{f_{\theta}(X)}|^{\beta} < \infty$.
\end{center}
\end{example}
Note that if $L=0$ and $\theta =  W_0 \in \mathbb{R}^d$, Theorem \ref{eq:DNNsThm3} cannot work since the proof of excess risk bound requires $L\ge 1$. In this degenerate DNN regression, it is just the common robust ERM problem with elastic net penalty \eqref{eq:cantoninet}. Under $s_n$-sparse parameter space, we obtain the excess risk bound in Theorem \ref{thm:Elastic} for this special and important parametric regressions when $d>n$.

\section{Simulation and real data studies} \label{simulation}
	\subsection{Simulations on normal regression models}
In this part, by stochastic gradient descent (SGD) algorithms, we illustrate the effectiveness of regressions based on log-truncated ERM by the numerical experiments of ordinary logistic regression and negative binomial regression. The elastic net DNN regressions are optimized by the Adam algorithm (SGD-based algorithm, \cite{kingma2014adam}), which is an extension of SGD. Moreover, the Adam algorithm is more computationally efficient than SGD under a large number of parameters, and it has few memory requirements.

	\subsubsection{SGD}\label{se:sgdsimulation}
\begin{itemize}
	\item[1.] \textbf{SGD for our estimation}

Let us consider a regularized optimization with a given penalty function $\Omega(\theta)$:
\begin{equation}\label{eq:ridge}
	\hat \theta_n(\alpha,\rho): =\mathop {\arg\min }\limits_{\theta \in {\Theta}}\{{{\hat R}_{\psi_\lambda,l,\alpha}}(\theta )+\Omega(\theta)\},
\end{equation}
where ${{\hat R}_{\psi_\lambda,l,\alpha}}(\theta ):=\frac{1}{{n\alpha }}\sum_{i = 1}^n \psi_\lambda[\alpha l({Y_i} , {{X}}_i,\theta )]$ with $l(y , x,\theta )$ being some specific losses, and $\alpha>0$ is another tuning parameter to be chosen.
\begin{itemize}
	\item For $\Omega(\theta)=\rho\|\theta\|_2^2$, this is a $\ell_2$-regularization, where $\rho>0$ is the penalty parameter.
	
	\item For $\Omega(\theta)=\rho\|\theta\|_2^2+\gamma\|\theta\|_{1}$, this is an elastic net regularization, where $\rho,\gamma>0$ are two penalty parameters.
\end{itemize}

In practice, this optimization problem is solved by stochastic gradient descent (SGD) as the following:
\begin{equation}\label{eq:sgd}
	\theta_{t+1}=\theta_{t}-\frac{r_t}{\alpha}\nabla_{\theta}\{\psi_{\lambda}[\alpha l(Y_{i_t}, X_{i_t}, \theta_{t})]+\Omega(\theta_{t})\}, t=0,1,2,\cdots,
\end{equation}
where $i_t$ denotes a random sampled index, $\{r_t\}$ is the learning rate. For $\ell_2$-regularization, we employ five-fold cross validation (CV) method to find the optimal parameter pair $(\alpha, \rho)$ in a certain effective subset of $\R_+^2$.
For the elastic net regularized DNN model, we select the optimal parameters $(\alpha, \beta, \gamma)$ by evaluating the performances of their corresponding training models on validation data set whose size is $1/5$ of the size of the training set.

\item[2.] \textbf{SGD for the comparative estimations}

For the standard ridge regression without truncation, the corresponding optimization problem is
\begin{equation}\label{eq:ridge-1}
	\hat \theta_n(\rho): =\mathop {\arg\min }\limits_{\theta \in {\Theta}}\{\frac{1}{{n}}\sum_{i = 1}^n l({Y_i} , {{X}}_i,\theta)+\Omega(\theta)\},
\end{equation}
where $\rho$ is the penalty parameter for regularization, this optimization problem can be solved by the following SGD:
\begin{equation}\label{eq:sgd}
	\theta_{t+1}=\theta_{t}-r_t \nabla_{\theta}\{l(Y_{i_t}, X_{i_t}, \theta_{t})+\Omega(\theta_{t})\}, t=0,1,2,\cdots.
\end{equation}

%We provide a theoretical finding that SGD $\{\theta_t\}_{t \ge 1}$ \eqref{eq:sgd} archives $\epsilon$-stationary points with the iteration complexity $O\left(1 / \epsilon^{4}\right)$ in Lemma \ref{lemSGD} in Appendix, if ${r_{t}}=O(t^{-1/2})$.\\

We also consider the Cauchy log-truncated function in Table 1, which has the form $\phi_{\alpha}(x)=\alpha\log(1+\frac{x}{\alpha})$. Similarly, the estimator $\hat \theta_n^C$ is solved by
\begin{equation*}\label{eq:cauchy-1}
	\hat \theta_n^C: =\mathop {\arg\min }\limits_{\theta \in {\Theta}}\{\frac{1}{{n}}\sum_{i = 1}^n \phi_{\alpha}\big(l({Y_i} , {{X}}_i,\theta)\big)+\Omega(\theta)\}.
\end{equation*}
The corresponding SGD iterations are
\begin{equation*}\label{eq:cauchy-sgd}
	\theta_{t+1}=\theta_{t}-r_t \nabla_{\theta}\{\phi_{\alpha}(Y_{i_t}, X_{i_t}, \theta_{t})+\Omega(\theta_{t})\}, t=0,1,2,\cdots.
\end{equation*}
In both standard ridge and Cauchy log-truncated regressions, we also take five-fold CV to find the optimal parameters $\rho \in \R_+$ and $(\alpha,\rho) \in \R_+^2$.
\end{itemize}
\subsubsection{Numerical experiments}\label{subsub:numerical}
\begin{itemize}
\item[1.] \textbf{Simulation study}

For $\R^d$-valued covariates $\{X_i\}_{i=1}^n$, each $X_i$ can be written as
$$X_i=X_i'+\xi_i,$$
where $\{X_i'\}_{i=1}^n$ are i.i.d. $\R^d$-valued random vectors with normal distribution $N({\bf 0},\mathbf{Q}(\varsigma))$. Here the covariance matrix $\mathbf{Q}(\varsigma)$ is an identity matrix ($\varsigma=0$) or a Toeplitz matrix ($\varsigma=0.5$) which is formed
$$
\mathbf{Q}(\varsigma)=\left[\begin{array}{cccccc}
	1 & \varsigma & \varsigma^2 & \cdots & \cdots & \varsigma^{d-1} \\
	\varsigma & 1 & \varsigma & \ddots & & \vdots \\
	\varsigma^2 & \varsigma & \ddots & \ddots & \ddots & \vdots \\
	\vdots & \ddots & \ddots & \ddots & \varsigma & \varsigma^2 \\
	\vdots & & \ddots & \varsigma & 1 & \varsigma \\
	\varsigma^{d-1} & \cdots & \cdots & \varsigma^2 & \varsigma & 1
\end{array}\right].
$$
$\{\xi_i\}_{i=1}^n$ are i.i.d. $\R^d$-valued random noisy vectors, which satisfy one of the following conditions:
\begin{itemize}
	\item[(i)] \textbf{Pareto noise} (heavy tail): the noise {$\xi_i:=(\xi_{i1},...,\xi_{id})^\top$} whose entries $\{\xi_{ij}\}$ are independently drawn from Pareto distribution with scale parameter $1$ and shape parameter  $\tau \in \{1.6,1.8,2.01,4.01,6.01\}$.
	\item[(ii)]\textbf{Uniform noise} (outlier): the noise $\xi_i=Z\xi_i'$ where {$\xi_i':=(\xi_{i1}',...,\xi_{id}')^\top$} are independently drawn from uniform distribution $U(-2,2)$ in logistic regression and negative binomial regression, and $Z:={\rm{diag}}(\zeta_1,...,\zeta_d)$ where $\{\zeta_j\}_{j=1}^d$ are i.i.d. Bernoulli r.v.s with probability $\pi \in (0,1)$ taking 1. We will choose $ \pi \in \{0.3,0.5,0.8\}$.
\end{itemize}
{For each fixed pair $(d,n)$, the true value $\theta=(\theta_1,...,\theta_d)^\top \in \R^d$} is designed by drawing each $\theta_j$ from $U(0,1)$ independently. We compute the $\ell_2$-estimation error for each estimator $\hat{\theta}$, i.e., $\|\hat{\theta}-\theta\|_2$. We choose the high order log-truncated function as \eqref{eq:CANTON} with $\lambda(|x|)={|x|^{\beta}}/{\beta}$. When $\{\xi_i\}_{i=1}^n$ are Pareto noises, we choose
$\beta=1.5$ as $\tau \in \{1.6,1.8\}$ and $\beta=2.0$ as $\tau \in \{2.01,4.01,6.01\}$. When $\{\xi_i\}_{i=1}^n$ are uniform noises, we always choose $\beta=2.0$.

We will conduct experiments for the following three cases:
$$(d,n)=(100, 200), (200,500), (1000,1000).$$

Tables \ref{tab:LGpareto} and \ref{tab:LGuniform} present the comparison results of average $\ell_2$-estimation errors and standard errors (in bracket) for predicted logistic regression coefficients with 100 replications.  It is obvious that the log-truncated estimators perform much better than standard regression estimators under two noise settings. Our proposed log-truncated estimators based on high-order functions have smaller estimation errors than Cauchy log-truncated estimators. It reveals that the log-truncated regression with a high-order function is more flexible than the $1$-order log-truncation in coping with the contaminated or heavy-tailed data. We obtain similar results from simulations on NBR, which are displayed in Appendix \ref{se:appS}.

We also use the Pareto noisy model with $\tau=1.6$ to illustrate the rate $O\big((\frac{d\log(n)}{n})^{\frac{\beta-1}{\beta}}\big)$ of excess risk bound $R(\hat{\theta})-R(\theta^*)$ for different $\beta$ and sample size $n$ in Theorem \ref{thm1loss2}, if $r_n$ is constant. Figure \ref{fig:graphs} demonstrates that the numerical excess risk bound linearly decreases with the increase of $n$, and $\beta$ is closer to $1$, the excess risk bound is larger.
\begin{figure}
	\centering
	\includegraphics[width=0.6\textwidth]{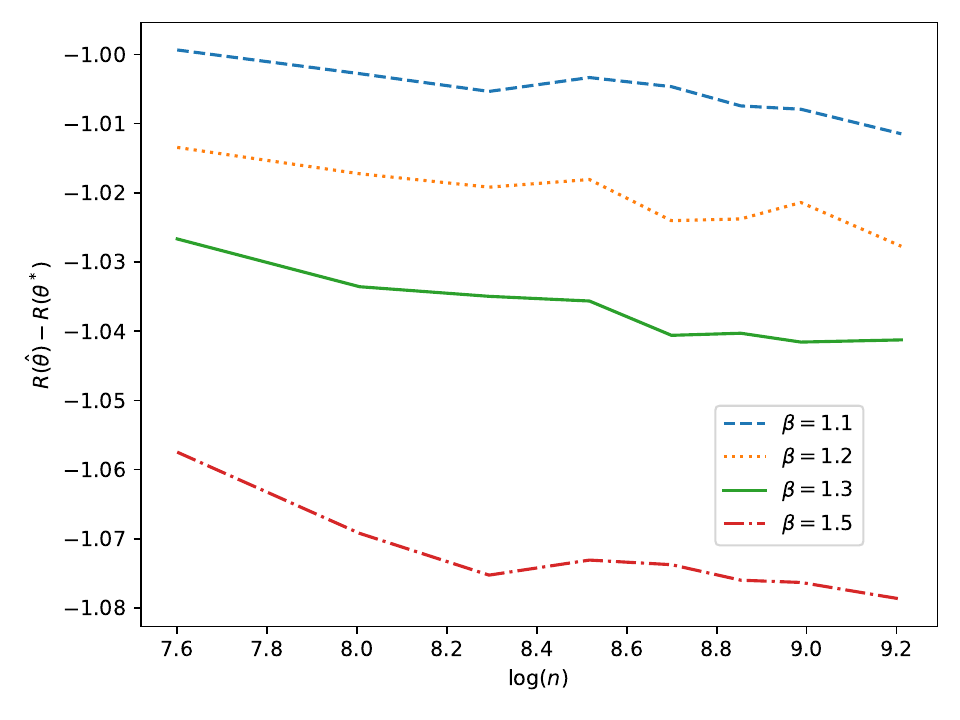}
	% \caption{b.}
	\caption{Plot of $R(\hat{\theta})-R(\theta^*)$ for different $\beta$ over various $n$ based on Pareto noise model with $\tau=1.6$, $d=100$ and $n$ in $[2000, 10000]$.}
	\label{fig:graphs}
\end{figure}

\item[2.] \textbf{Tuning Parameter selection}
\begin{figure}
	\centering
	\includegraphics[width=0.7\textwidth]{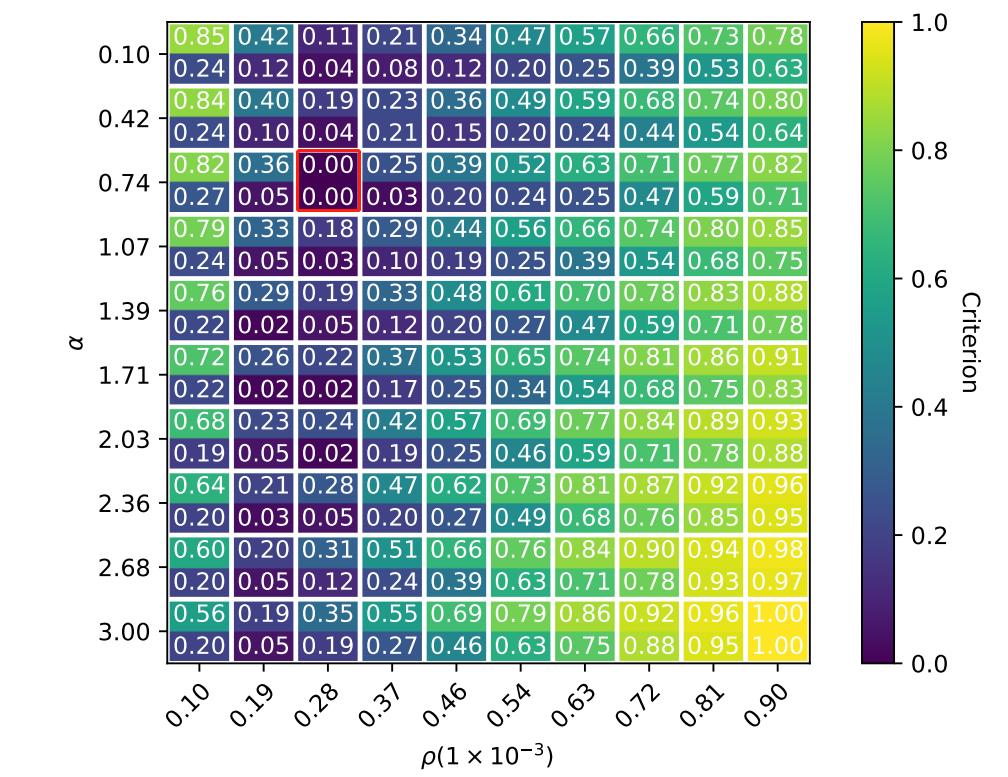}
	% \caption{b.}
	\caption{Comparisons of the tuning parameter selection by grid search and five-fold CV under the Pareto noise model as $d=200,n=500$. For each cell, the number in the upper half of the cell is the criterion of the grid search, i.e., $\|\hat{\theta}(\alpha,\rho)-\theta\|_2$. The number in the lower half of the cell is the criterion of five-fold CV, i.e., $\frac{1}{n}\sum_{j=1}^{5}\sum_{i\in K_j}|\hat{f}_{\alpha,\rho}^{-j}(X_i)-Y_i|$. All the values are normalized in $(0,1)$.}
	\label{fig:graph-para}
\end{figure}

Tuning parameter selection is a crucial step in the experiments, and correct parameters can enhance the prediction accuracy of a model. It is interesting to explore how to select the optimal tuning parameter in our proposed model. A simple and user-friendly tuning parameter selection method is the grid search. For example, for the regularized optimization problem (\ref{eq:ridge}), we can use the binary search to find an effective interval of the parameter pair $(\alpha, \rho)$, and use the grid search to select the optimal $(\hat{\alpha}, \hat{\rho})$ which minimizes the $l_2$ estimation error $\|\hat{\theta}(\alpha,\rho)-\theta\|_2$. However, a good fitting model on the training data can not say the model exactly works well. Thus, cross-validation is a popular technique for selecting the optimal tuning parameter. In our experiments, we use five-fold CV to select the optimal $(\hat{\alpha}, \hat{\rho})$ optimizing $\frac{1}{n}\sum_{j=1}^{5}\sum_{i\in K_j}|\hat{f}_{\alpha,\rho}^{-j}(X_i)-Y_i|$ in an effective subset of $\mathbb{R}^2_{+}$, where $K_j$ is the validation data set in five-fold CV. Figure \ref{fig:graph-para} plots the heat map of the criterion values of grid search and five-fold CV under the Pareto noise model as $d=200,n=500$. The number in the upper half of the cell is the criterion of the grid search, i.e., $\|\hat{\theta}(\alpha,\rho)-\theta\|_2$. The number in the lower half of the cell is the criterion of five-fold CV, i.e., $\frac{1}{n}\sum_{j=1}^{5}\sum_{i\in K_j}|\hat{f}_{\alpha,\rho}^{-j}(X_i)-Y_i|$. All the values are normalized in $(0,1)$. We find that the optimal parameter pairs $(\hat{\alpha}, \hat{\rho})$ selected by grid search and five-fold CV are consistent, and their corresponding criterion values are the smallest.
\end{itemize}
\subsection{Simulations on robust deep regression models}\label{se:Rdnn}

We also compare our proposed robust elastic net penalized DNN logistic regression and DNN NBR in Examples \ref{eg:DNNLAD2} and \ref{eg:DNNLAD3} of Section
$3$ with their non-truncated elastic net penalized DNN versions. The non-parametric function $f^*(x)$ in (\ref{eq:NNN}) satisfies
\begin{equation}\label{eq:f0}
	f^*(X_i)=f^*(X_i'+\xi_i), \ i=1,...,n,
\end{equation}
where $\{X_i'\}_{i=1}^n$ are i.i.d. with law $N({\bf 0},\mathbf{I}_{d\times d})$. Here
$\{\xi_i\}_{i=1}^n\in\mathbb{R}^d$ also follow Pareto or uniform distribution in the above noise setting. Differently, in the uniform noise setting, {$\xi_i':=(\xi_{i1}',...,\xi_{id}')^\top$} are independently drawn from uniform distribution $U(2,10)$. Two types of the real function $f^*$ are considered:
\begin{itemize}
	\item [(1)] \textbf{A complex function}: $f^*$ is a complex function (referred to \citep{ohn2022nonconvex}):
	$$f^*(x)=0.8\exp\big(0.03x_1+x_2^2-\sqrt{x_3+5}\big)-\cot\bigg(\frac{1}{0.01+|x_4-2x_5+x_6|}\bigg),$$
	where $d=6$. And we put $n=200$.
	
	\item [(2)] \textbf{DNN}: taking into account the higher dimensional input data, we generate a real two-layers deep neural network by Pytorch as the function $f^*$:

	%The continuous function $f^*(X_i)$: $\mathbb{R}^{p}\to\mathbb{R}$, subjects to the additive model for $\{X_{i}\}_{i=1}^n$:
	%$$ f^*(X_i)=\sum_{j=1}^{p}f_j(X_{ij}),~i=1,...,n,$$
	%where $f_j$ are smooth functions selected in a set of univariate functions
	%$$\mathcal{G}=\{g_k(x),k=1,2,3\},$$
	%with $g_1(x)=-10x+0.3$, $g_2(x)=0.7x^3-0.2x^2+0.3x-0.3$, $g_3(x)=0.3\sin(x)\sqrt{|x|}$ referring to the experiments in \cite{shen2021robust}. For the choice of $\{f_j\}_{j=1}^p$, we first uniformly generate a random matrix with $n\times p$ integers in $\{1,2,3\}$. Then, viewing the integers in generated random matrix as the indices, we choose $f_j$ in $\mathcal{G}$ for $j=1,...,p$.
	
	\textit{The function $f^*(x)$: $\mathbb{R}^{d}\to\mathbb{R}$ is generated by a ReLu activated fully connected two-layers deep neural network with network width $(d, 0.6d, 0.4d, 1)$ by Pytorch. The real weights of the two-layers DNN are drawn from $U(-1,1)$ independently. We set $(d,n)=(100, 1000)$.}
\end{itemize}

The elastic net penalized robust DNN logistic regression is trained by the ERM problem (\ref{eq:NNN}) with the ReLU activated 2-layers DNN model and network width $(d, 0.6d, 0.4d, 1)$. We use the Adam optimization algorithm in PyTorch as implement with $n/4$ batch size in each case. The same network configurations and optimization algorithms are used to train the non-truncated elastic net penalized  DNN regressions.
For the elastic net based robust DNN logistic regression, we compute the accuracy of predictors $\{\hat{Y}_i\}_{i=1}^n$ for responses $\{Y_i\}_{i=1}^n\in\{0,1\}$, defined as
$$\text{Accuracy}:= \frac{1}{n}\sum_{i=1}^{n}\mathbf{1}(\hat{Y}_i=Y_i)\times 100\%.$$

Table \ref{tab:DNNLG} shows the average accuracy for robust DNN logistic regression under Pareto and uniform noise settings with 100 replications. Different values of $\beta$ are selected for different noises. The results reveal that the proposed elastic net penalized robust DNN logistic regression has higher flexibility and stronger robustness than the non-truncated elastic net-based DNN logistic regression for fitting the contaminated or heavy-tail data. The elastic net penalized robust DNN NBR results are demonstrated in Appendix \ref{se:appS}.

% Table generated by Excel2LaTeX from sheet 'Sheet1'
\begin{table}[htbp]
\caption{Comparison of average $\ell_2$-estimation error for logistic regression on Pareto noise model.}
\resizebox{1\textwidth}{!}{
	\begin{tabular}{cccccrlll}
		\toprule
		\multicolumn{9}{c}{$\ell2$-estimation error for logistic regression} \\
		\midrule
		&       & \multicolumn{3}{c}{$\varsigma=0$} &       & \multicolumn{3}{c}{$\varsigma=0.5$} \\
		\cmidrule{3-5}\cmidrule{7-9}    \multicolumn{9}{c}{$d = 100, n = 200$} \\
		\midrule
		Pareto & $\beta$   & High-order  & Cauchy & Non-truncation &       & \multicolumn{1}{c}{High-order } & \multicolumn{1}{c}{Cauchy} & \multicolumn{1}{c}{Non-truncation} \\
		\midrule
		1.60   & 1.5   & 2.9463(0.0110) & 3.1587(0.0167) & 3.5344(0.1532) &       & \multicolumn{1}{c}{2.9279(0.1401)} & \multicolumn{1}{c}{3.0992(0.1490)} & \multicolumn{1}{c}{3.5128(0.0306)} \\
		1.80   & 1.5   & 2.9922(0.0811) & 3.2056(0.0260) & 3.6353(0.0753) &       & \multicolumn{1}{c}{2.9241(0.0664)} & \multicolumn{1}{c}{3.0896(0.1063)} & \multicolumn{1}{c}{3.5103(0.0241) } \\
		2.01  & 2.0     & 2.9335(0.0217) & 3.2157(0.0416) & 3.6312(0.0715) &       & \multicolumn{1}{c}{2.7939(0.0906)} & \multicolumn{1}{c}{2.9930(0.1713)} & \multicolumn{1}{c}{3.5097(0.0425)} \\
		4.01  & 2.0     & 2.9210(0.0184) & 3.1100(0.0144) & 3.5047(0.0803) &       & \multicolumn{1}{c}{2.8205(0.0151)} & \multicolumn{1}{c}{2.9485(0.0275)} & \multicolumn{1}{c}{3.5084(0.0162)} \\
		6.01  & 2.0     & 2.8361(0.0153) &  3.0689(0.0458) & 3.4677(0.1294) &       & \multicolumn{1}{c}{2.8538(0.1114)} & \multicolumn{1}{c}{2.8994(0.1178)} & \multicolumn{1}{c}{3.5092(0.0410)} \\
		\midrule
		\multicolumn{9}{c}{$d = 200, n = 500$} \\
		\midrule
		Pareto & $\beta$   & High-order  & Cauchy & Non-truncation &       & \multicolumn{1}{c}{High-order } & \multicolumn{1}{c}{Cauchy} & \multicolumn{1}{c}{Non-truncation} \\
		\midrule
		1.60   & 1.5   & 3.8489(0.0378) & 4.0187(0.1169) & 4.4003 (0.0203) &       & \multicolumn{1}{c}{4.0401(0.1436)} & \multicolumn{1}{c}{4.0698(0.1761)} & \multicolumn{1}{c}{4.5583(0.1816)} \\
		1.80   & 1.5   & 3.8551(0.0225) & 4.0001(0.1370) & 4.4332(0.1774) &       & \multicolumn{1}{c}{3.9112(0.2145)} & \multicolumn{1}{c}{4.1223(0.1159)} & \multicolumn{1}{c}{4.5953(0.0784)} \\
		2.01  & 2.0     & 3.8271(0.0667) & 3.9885(0.0727) & 4.2360(0.0605) &       & \multicolumn{1}{c}{4.0839(0.1136)} & \multicolumn{1}{c}{4.1373(0.2145)} & \multicolumn{1}{c}{4.6304(0.1079)} \\
		4.01  & 2.0     & 3.9291(0.0194) & 4.0467(0.1243) & 4.4708(0.1353) &       & \multicolumn{1}{c}{3.9896(0.0853)} & \multicolumn{1}{c}{4.0554(0.1133)} & \multicolumn{1}{c}{4.5967(0.1821)} \\
		6.01  & 2.0     & 3.9502(0.0724) & 4.0155(0.1278) & 4.3148(0.1211) &       & \multicolumn{1}{c}{3.9835(0.0880)} & \multicolumn{1}{c}{4.0297(0.1287)} & \multicolumn{1}{c}{4.5948(0.1424)} \\
		\midrule
		\multicolumn{9}{c}{$d = 1000, n = 1000$} \\
		\midrule
		Pareto & $\beta$   & High-order  & Cauchy & Non-truncation &       & \multicolumn{1}{c}{High-order } & \multicolumn{1}{c}{Cauchy} & \multicolumn{1}{c}{Non-truncation} \\
		\midrule
		1.60   & 1.5   & 10.2445(0.0274) & 10.4860(0.0925) & 11.2177(0.0302) &       & \multicolumn{1}{c}{10.1364(0.0633)} & \multicolumn{1}{c}{10.4548(0.0382)} & \multicolumn{1}{c}{10.2014(0.1289)} \\
		1.80   & 1.5   & 10.2147(0.0372) & 10.4757(0.1345) & 11.1923(0.0221) &       & \multicolumn{1}{c}{10.0405(0.0857)} & \multicolumn{1}{c}{10.4431(0.0118)} & \multicolumn{1}{c}{4.5953(0.0784)} \\
		2.01  & 2.0     & 10.2983(0.0174) & 10.6056(0.0327) & 10.9366(0.0202) &       & \multicolumn{1}{c}{10.0366(0.0951)} & \multicolumn{1}{c}{10.4361(0.0318)} & \multicolumn{1}{c}{11.4579(0.0854)} \\
		4.01  & 2.0     & 10.2070(0.0206) & 10.4250(0.1389) & 11.7001(0.0307) &       & \multicolumn{1}{c}{9.9773(0.0584)} & \multicolumn{1}{c}{10.4361(0.0318)} & \multicolumn{1}{c}{11.5536(0.1378)} \\
		6.01  & 2.0     & 10.2290(0.0208) & 10.2782(0.0216) & 11.1155(0.0433) &       & 10.0486(0.0893) & 10.4462(0.0052) & 11.4853(0.0799) \\
		\bottomrule
	\end{tabular}}%
	\label{tab:LGpareto}%
\end{table}%

% Table generated by Excel2LaTeX from sheet 'Sheet1'
\begin{table}[htbp]
	\centering
		\caption{Comparison of average $\ell_2$-estimation error for logistic regression on Uniform noise model.}
	\resizebox{1\textwidth}{!}{
	\begin{tabular}{cccccrlcc}
		\toprule
		\multicolumn{9}{c}{$\ell_2$-estimation error for logistic regression} \\
		\midrule
		&       & \multicolumn{3}{c}{$\varsigma=0$} &       & \multicolumn{3}{c}{$\varsigma=0.5$} \\
		\cmidrule{3-5}\cmidrule{7-9}    \multicolumn{9}{c}{$d = 100, n = 200$} \\
		\midrule
		Uniform & $\beta$   & High-order  & Cauchy & Non-truncation &       & \multicolumn{1}{c}{High-order } & Cauchy & Non-truncation \\
		\midrule
		0.3   & 2.0     & 2.9615(0.0958) & 3.0826(0.1750) & 3.9509(0.0060) &       & \multicolumn{1}{c}{2.8887(0.0685)} & 2.9750(0.1000) & 3.3121(0.0938) \\
		0.5   & 2.0     & 2.9567(0.0141) & 3.0464(0.1687) & 3.9428(0.0024) &       & \multicolumn{1}{c}{2.9549(0.1425)} & 3.0426(0.1195) & 3.3372(0.1275) \\
		0.8   & 2.0     & 3.0271(0.1597) & 3.0896(0.1546) & 3.9447(0.0081) &       & \multicolumn{1}{c}{2.9613(0.0184)} & 3.0237(0.1622) & 3.3591(0.0957) \\
		\midrule
		\multicolumn{9}{c}{$d = 200, n = 500$} \\
		\midrule
		Uniform & $\beta$   & High-order  & Cauchy & Non-truncation &       & \multicolumn{1}{c}{High-order } & Cauchy & Non-truncation \\
		\midrule
		0.3   & 2.0     & 4.3418(0.0154) & 4.9339(0.0452) & 5.4107(0.0018) &       & \multicolumn{1}{c}{4.0917(0.1150)} & 4.2971(0.0507) & 4.8207(0.1894) \\
		0.5   & 2.0     & 4.4815(0.0745) & 5.0787(0.0687) & 5.4075(0.0149) &       & \multicolumn{1}{c}{4.1469(0.1168)} & 4.3671(0.1057) & 4.8559(0.0574) \\
		0.8   & 2.0     & 4.4230(0.0312) & 5.0416(0.1101) & 5.3869(0.0046) &       & \multicolumn{1}{c}{4.1265(0.0615)} & 4.4216(0.0935) & \multicolumn{1}{c}{4.8762(0.1569)} \\
		\midrule
		\multicolumn{9}{c}{$d = 1000, n = 1000$} \\
		\midrule
		Uniform & $\beta$   & High-order  & Cauchy & Non-truncation &       & \multicolumn{1}{c}{High-order } & Cauchy & Non-truncation \\
		\midrule
		0.3   & 2.0     & 10.6409(0.0149) & 10.6697(0.0314) & 10.7218(0.0176) &       & \multicolumn{1}{c}{10.4514(0.0753)} & 10.7567(0.1724) & 11.2571(0.0977) \\
		0.5   & 2.0     & 10.6744(0.1574) & 10.7760(0.1275) & 10.7279(0.0128) &       & \multicolumn{1}{c}{10.4686(0.0857)} & 10.6281(0.0694) & 11.2776(0.1124) \\
		0.8   & 2.0     & 10.7316(0.0715) & 10.8551(0.0957) & 10.7961(0.0196) &       & 10.4816(0.0965) & 10.7613(0.2020) & 11.4005(0.0542) \\
		\bottomrule
	\end{tabular}}%
	\label{tab:LGuniform}%
\end{table}%

\begin{table}[htbp]
	\centering
	\caption{Comparison of average accuracy for DNN logistic regression under two noise settings.}
	\begin{tabular}{|cc|cc|cc|}
		\hline
		\multicolumn{6}{|c|}{Accuracy (\%) for DNN logistic regression } \\
		\hline
		&       & \multicolumn{2}{c|}{$d = 6$, $n = 200$ (Complex function)} & \multicolumn{2}{c|}{$d = 100$, $n = 1000$ (DNN)} \\
		\hline
		\multicolumn{1}{|l}{$\beta$ } & \multicolumn{1}{l|}{Pareto ($\tau$) } & High-order  & Non-truncation  & High-order  & Non-truncation  \\
		\hline
		1.5   & 1.60   & 86.06(0.04) & 82.46(0.02) & 81.90(0.01) & 74.33(0.01)  \\
		1.5   & 1.80   & 85.60(0.02) & 82.25(0.02) & 80.04(0.01)  & 77.97(0.01)  \\
		2.0     & 2.01  & 86.57(0.04) & 81.07(0.02) & 83.49(0.01)  & 80.62(0.01)  \\
		2.0     & 4.01  & 85.93(0.04) & 83.98(0.02) & 93.36(0.02)  & 80.13(0.01)  \\
		2.0     & 6.01  & 87.90(0.01) & 85.06(0.05) & 95.75(0.02)  & 84.54(0.01)  \\
		\hline
		\multicolumn{1}{|l}{$\beta$ } & \multicolumn{1}{l|}{Uniform ($\pi$)} & High-order  & Non-truncation  & High-order  & Non-truncation  \\
		\hline
		2.0     & 0.3   & 84.00(0.02) & 81.33(0.02) & 89.05(0.01)  & 88.28(0.01)  \\
		2.0     & 0.5   & 82.64(0.02) & 80.26(0.03) & 88.28(0.01)  & 87.86(0.01)  \\
		2.0     & 0.8   & 82.04(0.01) & 80.47(0.02) & 88.61(0.01)  & 87.81(0.01)  \\
		\hline
	\end{tabular}%
	\label{tab:DNNLG}%
\end{table}%

\subsection{Real data analysis}
\subsubsection{Boston housing dataset}
We use the Boston housing dataset provided by the python library Scikit-Learn to learn the log-truncated standard and deep LAD models. Boston housing dataset contains $n=506$ cases, and each case includes 14 variables. We aim to predict Median Value (MEDV) of Owner-Occupied Housing Units as output, by the remaining 13 variables as input. To this end, we randomly split the dataset into two groups, one as the training set and the other as the testing set,  and train a $l_2$-regularized standard LAD regression in (\ref{eq:ridge}) and a 3-layers elastic net penalized DNN LAD regression model separately.

We denote the variable MEDV by $Y \in \R$ and the other 13 variables by $X \in \R^{13}$, so the Boston housing dataset can be represented as $\{(Y_i, X_i)\}_{i=1}^{506}$. In our experiment, $n_1=339$ samples are randomly selected for training and validation, and the remaining $n_2=167$ samples are the testing set, denoted by
$\{(Y_{tr,1}, X_{tr,1}),...,(Y_{tr,n_1},X_{tr,n_1})\}$ and $\{(Y_{te,1}, X_{te,1}),...,(Y_{te,n_2},X_{te,n_2})\}$ respectively. We use $4/5$ of the training samples to train the log-truncated standard LAD model (DNN model), then we select the optimal parameters on the remaining $1/5$ of the training set, and feed the testing set $\{X_{te,1},...,X_{te,n_2}\}$ into the trained model to get a prediction set $\{\hat Y_1,...,\hat Y_{n_2}\}$. To assess the obtained model,
we compute the absolute average errors (MAEs) of the prediction, which is defined as
\begin{equation*}
	\text{MAE}:=\frac{1}{n_{2}}\sum_{i=1}^{n_2}|\hat{Y}_i-Y_{te,i}|.
\end{equation*}

The function $\lambda$ in \eqref{eq:CANTON} is chosen as $\lambda(|x|)=|x|^{\beta}/{\beta}$. To select an appropriate $\beta$, we consider ten values of $\beta \in (1,2]$ with $\beta=1.1, 1.2,...,1$; see Table \ref{tab:Boston}. The corresponding normal LAD regressions without truncation are also considered, which are trained in the same way. Table \ref{tab:Boston} indicates that the log-truncated $l_2$-regularized standard LAD regression has smaller prediction errors than its non-truncated version. The truncated deep LAD  model outperforms the non-truncated deep LAD model in all settings. In particular, as $\beta=1.8$, the prediction errors are smallest for both truncated $l_2$-regularized standard LAD model and truncated deep LAD model. Thus, we choose $\beta=1.8$ for these two models.

% Table generated by Excel2LaTeX from sheet 'Sheet2'
\begin{table}[htbp]
	\centering
	\caption{Comparison of MAEs on Boston housing dataset.}
	\resizebox{0.8\textwidth}{!}{
		\begin{tabular}{cccccc}
			\toprule
			& \multicolumn{2}{c}{Standard LAD regression } &       & \multicolumn{2}{c}{Deep LAD regression} \\
			\cmidrule{1-3}\cmidrule{4-6}    	$\beta$      & Truncation & Non-truncation &       & Truncation & Non-truncation \\
			\cmidrule{1-3}\cmidrule{4-6}    1.1   & 6.2538 & 6.4843 &       & 6.2822 & 6.3186 \\
			1.2   & 6.2551 & 6.4843 &       & 6.2203 & 6.3186 \\
			1.3   & 6.2563 & 6.4843 &       & 6.2150 & 6.3186 \\
			1.4   & 6.2578 & 6.4843 &       & 6.1898 & 6.3186 \\
			1.5   & 6.2592 & 6.4843 &       & 6.1810 & 6.3186 \\
			1.6   & 6.2611 & 6.4843 &       & 6.2096 & 6.3186 \\
			1.7   & 6.2631 & 6.4843 &       & 6.1620 & 6.3186 \\
			1.8   & \textbf{6.2448} & 6.4843 &       & \textbf{6.0628} & 6.3186 \\
			1.9   & 6.2676 & 6.4843 &       & 6.1598 & 6.3186 \\
			2.0     & 6.2705 & 6.4843 &       & 6.1711 & 6.3186 \\
			\bottomrule
		\end{tabular}%
		\label{tab:Boston}}%
\end{table}%

The optimal $\beta$ in Table \ref{tab:Boston} can be roughly interpreted by Theorem \ref{thm1loss2}, from which we can see that there is a trade-off between the constant $\E H_{Y,X}^{\beta}$ and the excess risk convergence rate $\frac{1}{{{n^{(\beta-1)/\beta}}}}$ if $\beta$ varies from 1.1 to 2.0.

\subsubsection{MNIST database}
%(Example of Elastic-net penalized DNN LAD model)
We use a handwritten digits database MNIST to learn a 6-layers elastic net penalized DNN LAD model, and compared it with the non-truncated model, which is learned in the same way. The activation function in \eqref{eq:NN} is ReLU. The two DNN models with elastic-net regularization are learned with Adam optimization
algorithm. MNIST database contains 70000 28$\times$28 grayscale images of the 10 digits. In experiments, we treat the digit images as the input variables $X$ and their corresponding labels as output $Y \in \{0,1,2,\cdots,9\}$. We randomly split the $70000$ images into three groups: the validation set ($10000$ images), denoted by $\{(X_{va,i}, Y_{va,i})\}\}_{i=1}^{10000}$; training set ($50000$ images), denoted by $\{(X_{tr,i}, Y_{tr,i})\}\}_{i=1}^{50000}$ and testing set ($10000$ images), denoted by  and $\{(X_{te,i}, Y_{te,i})\}_{i=1}^{10000}$ respectively. Firstly, we normalize the 28$\times$28 pixels of each image in range $(-1,1)$ and train several candidate DNN models using the parameters $(\alpha,\rho,\gamma)$ in a certain subset of $\mathbb{R}_{+}^3$.  Then, we select the optimal parameters by computing the classification accuracy of their corresponding trained DNN models on the validation set, that is:

\begin{equation}
	\text{Accuracy}(\alpha,\rho,\gamma):= \frac{1}{10000} \sum_{i=1}^{10000}\mathbf{1} (\hat{Y}_{va,i}(\alpha,\rho,\gamma)=Y_{va,i})\times100\%,
\end{equation}
where $\mathbf{1}(\cdot)$ denotes the indicator function, $\{\hat{Y}_{va,i}\}_{i=1}^{10000}$ are the predictors from 10000 images in validation set. The batch size is $64$. Next, we feed the 10000 testing images into the trained DNN model which is corresponded to the optimal parameters and compute their classification accuracy.

Three types of noises are added on the 28$\times$28 pixels of the original data: Gaussian noise $N(0.5,2)$, uniform noise $U(2,5)$ and Pareto noise with shape parameter $\beta=2.01$. For each batch, we randomly generate $20\%, 50\%$ and $80\%$ noises to contaminate the data. The truncation function is also chosen as \eqref{eq:CANTON} with $\lambda(|x|)=|x|^{\beta}/{\beta}$. We select the optimal values of $\beta$ on $(1,2]$ with step 0.1 for the three classes of noisy settings. We repeat the experiment 100 times and record the average accuracy, see Table \ref{tab:MNIST} (the values in the bracket are standard errors of accuracy). The results in Table \ref{tab:MNIST} show that the classification results of the truncated DNN LAD model are better than the standard DNN LAD model. Especially under strong noise disturbance, the truncated DNN LAD model is more robust than the standard DNN LAD model.

We can see from the tables that the optimal index $\beta$ does not change according to the proportions of noises, and this is a significant advantage of our truncation regression models.

\begin{table}[htbp]
	\centering
	\caption{Comparison of classification accuracy on MNIST dataset}
	\begin{tabular}{ccccc}
		\toprule
		\multicolumn{5}{c}{Accuracy (\%)} \\
		\midrule
		\multicolumn{5}{c}{Gaussian Noise $N(0.5,2)$} \\
		\midrule
		$\beta$ & Noise proportion & Truncation &       & Non-truncation \\
		2.0     & 20\%   & 86.85 (0.01) &       & 75.71 (0.01) \\
		2.0     & 50\%   & 84.39 (0.02) &       & 74.27 (0.03) \\
		2.0     & 80\%   & 76.06 (0.03) &       & 72.96 (0.03) \\
		\midrule
		\multicolumn{5}{c}{Uniform Noise $U(2,5)$} \\
		\midrule
		$\beta$ & Noise proportion & Truncation &       & Non-truncation \\
		1.5   & 20\%   & 96.56 (0.01) &       & 95.91 (0.09) \\
		1.5   & 50\%   & 93.87 (0.01) &       & 92.46 (0.01) \\
		1.5   & 80\%   & 93.45 (0.03) &       & 91.65 (0.01) \\
		\midrule
		\multicolumn{5}{c}{Pareto Noise $\beta = 2.01$} \\
		\midrule
		$\beta$ & Noise proportion & Truncation &       & Non-truncation \\
		2.0   & 20\%   & 88.75 (0.02) &       & 87.85 (0.01) \\
		2.0   & 50\%   & 76.03 (0.03) &       & 74.74 (0.01) \\
		2.0   & 80\%   & 65.12 (0.02) &       & 60.96 (0.04) \\
		\bottomrule
	\end{tabular}%
	\label{tab:MNIST}%
\end{table}%

\section{Discussion and Future Study}\label{conclusion}
{Our proposed robust elastic net estimators, in practice, need to be obtained by SGD-based algorithms, which are deserved to be studied in future research.}

From the simulations and real data analysis above, we can see that selecting the $\beta$ is a crucial issue for prediction. The tail index estimation has been intensively studied in extreme-value statistics; see \cite{fedotenkov2020review} for a review. We leave the research of estimating the index $\beta$ to future study.

As mentioned above, Theorem \ref{thm1loss2} or Theorem \ref{thm:Elastic} can be applied to study other robust statistical learning models, in which one may have to design specific algorithms
according to the concrete problems at hand. We conclude this paper with the following two examples,
robust two-component mixed linear regression and robust non-negative matrix factorization, whose algorithms differ from SGD. Here we only roughly address their theoretical results and leave detailed study to future work. \\

\textbf{Robust two-component mixed linear regression}. One challenging non-convex problem is the mixture of two linear
regressions. Suppose $\{(Y_{i}, X_{i})\sim (Y, X)\}_{i=1}^n$ are $\mathbb{R} \times \mathbb{R}^{d}$-valued i.i.d. random variables. With probability $\pi$, $(X, Y)$ has conditional density function $p(y,{x^ \top }{\eta _0}),~{\eta _0}\in {\mathbb{R}^d}$ for $Y=y|X=x$, and with probability $1-\pi$, $(X, Y)$ has conditional density function $p(y,{x^ \top }{\eta _1}),~{\eta _1}\in {\mathbb{R}^d}$ for $Y=y|X=x$, {where ${\eta _0}$ and ${\eta _1}$ are unknown coefficients.} Given the input $x$, the negative log-likelihood function of the output $y$ is
\begin{align}\label{eq:mix}
l(y,x;{\pi, \eta_{0}, \eta_{1}})=-\log[\pi p(y,{x^ \top }{\eta _0})+(1-\pi) p(y,{x^ \top }{\eta _1})],
\end{align}
where $\pi \in(0,1)$ is an unknown mixing probability. \cite{mei2018landscape} studied Gaussian mixture models, while \cite{khamaru2019convergence} considered mixture density estimation under sub-exponential condition.

In order to fit this example to our theory, we write $\theta : = \left( {\pi ,{\eta _0},{\eta _1}} \right)$. Under the following moment conditions:
${\E}({\left\| X \right\|_2}|Y|)^\beta< \infty$, ${\E}\left\| X \right\|_2^{2\beta}< \infty$, by Theorem \ref{thm1loss2} with $p=2d+1$, we can obtain an excess risk in the order of $O( {{{( {{{(2d+1)\log (nr_n)}}/{n}} )}^{(\beta-1) /\beta}}})$ if the regularization error $\rho\|\theta^*\|_2^2\lesssim {{( {\frac{{(2d+1)\log (nr_n)}}{n}})}^{ (\beta-1)/{{\beta}}}}$, see Appendix \ref{mixed} for details. \\

%by noticing

%${\rm{E[}}{\left\| X\right\|_2}\mathop {\sup }_{{\eta _k} \in {B_s}} |Y - {X^ \top }{\eta _k}|/{\sigma ^2} + 1{]^{1 + \varepsilon }} < {C_1}{\rm{E(}}{\left\| X \right\|_2}|Y|{)^{1 + \varepsilon }} + {C_2}{\E}{\left\| X \right\|_2^{2 + 2\varepsilon }} < \infty,$ where $C_1$ and $C_2$ are some positive constants.\\
% is restricted in following space $\Theta \subset B_r \subset {\mathbb{R}^{1 + 2p}}$ with $r>1$:
%\[\theta \in \Theta : = \left\{ \left( {a,{b_0},{b_1}} \right) \in {\mathbb{R}^{1 + 2p}}:0<\rho \le a \le 1 - \rho, \max(\left\| {{b_1}} \right\|_2^{},\left\| {{b_2}} \right\|_2) \le s,~s: = \sqrt {({r^2} - 1)/2} \right\}.\]
%Without loss of generality, we assume that variance parameter ${\sigma_k^2}$ is known.

%In fact, we do not need Gaussian assumption for the output and we study the log-truncated estimator by \eqref{eq:cantoniloss} with loss function $l(y,x,;{\pi, \eta_{0}, \eta_{1}})$ and $\lambda(x)=\frac{1}{\beta}|x|^{\beta},~{\varepsilon} \in (0,1]$. To obtain rate $O( {{{( {{{p\log n}}/{n}} )}^{\varepsilon /(1 + \varepsilon )}}})$ excess risk, we only require moment conditions by Theorem \ref{thm1loss2}

\textbf{Robust non-negative matrix factorization}.
Given $n$ vector samples $\{X_{i}\}_{i=1}^n\in \mathbb{R}^p$ arranged in a nonnegative matrix $\mathrm{S}:=\left[X_{1}, \ldots, X_{n}\right] \in \mathbb{R}_{+}^{d \times n}$ and positive integer $z \ge 1$, the log-truncated non-negative matrix factorization (NMF) considers the decomposition for $\mathrm{S}$ into a product of two non-negative matrices:
$$
\mathrm{S}=\mathrm{B}\mathrm{C} +\mathrm{R},
$$
where $\mathrm{B} \in \mathbb{R}_{+}^{d \times z}$ is the given basis, $\mathrm{C}=\left[h_{1}, \ldots, h_{n}\right] \in \mathbb{R}_{+}^{z \times n}$ is the coefficients, and $\mathrm{R} \in \mathbb{R}^{d \times n}$ is the random error matrix.

By minimizing the $\ell_2$-distance between
their product and the original data matrix, the ordinary NMF decomposes a data matrix into the
product of two lower dimensional non-negative factor matrices $\mathrm{B}$ and $\mathrm{C}$. When the original data matrix is corrupted by heavy-tailed outliers that seriously violate the second-moment assumption \citep{guan2017truncated}. We assume the element of $R_{ij}$ in $\mathrm{R}$ has only $\beta$-th moment, i.e. $\sup_{i\in [n],j\in [d]}{\E}|R_{ij}|^\beta< \infty$ with $\beta \in (1,2)$, it is of interest to study the log-truncated NMF and the robust algorithm by using the following non-convex optimization problem:
$$
\hat{\mathrm{C}}=\mathop {\arg\min }\limits_{\mathrm{C} \in \mathbb{R}_{+}^{z \times n}}\frac{1}{{np\alpha }}\sum\limits_{i\in [n],j\in [d]} {\psi_\lambda}\left[\alpha \left({\mathrm{S}-\mathrm{B}\mathrm{C}}\right)_{i j}^{2}\right]
$$
where $\lambda(x)={\beta}^{-1}|x|^{\beta},$ and the $\alpha$ is tuning parameter implicitly determined by the random error matrix. We expect that $\hat{\mathrm{C}}$ is able to learn a subspace on a dataset through the original data matrix that is contaminated by a heavy-tailed noise matrix.\

 %For the third problem, more robust non-convex problems are intriguing and open to extend the non-robust algorithms to robust algorithms that are well-fitting for the heavy-tailed data.

% Acknowledgements should go at the end, before appendices and references
\acks{L. Xu is supported in part by NSFC Grant No.12071499, Macao S.A.R grant FDCT  0090/2019/A2 and University of Macau grant  MYRG2018-00133-FST. F. Yao is supported by NSFC Grants No.11931001 and 11871080, the LMAM, and the Key Laboratory of Mathematical Economics and Quantitative Finance (Peking University), Ministry of Education, National Key R\&D Program of China Grant (No. 2020YFE0204200). H. Zhang is supported in part by NSFC Grant No.12101630 and the University of Macau under UM Macao Talent Programme (UMMTP-2020-01).}

% Manual newpage inserted to improve layout of sample file - not
% needed in general before appendices/bibliography.
\vskip 0.2in
{\footnotesize{
\bibliographystyle{apalike}
\bibliography{ref}

\begin{thebibliography}{58}
\providecommand{\natexlab}[1]{#1}
\providecommand{\url}[1]{\texttt{#1}}
\expandafter\ifx\csname urlstyle\endcsname\relax
  \providecommand{\doi}[1]{doi: #1}\else
  \providecommand{\doi}{doi: \begingroup \urlstyle{rm}\Url}\fi

\bibitem[Baraud et~al.(2017)Baraud, Birg{\'e}, and Sart]{baraud2017new}
Yannick Baraud, Lucien Birg{\'e}, and Mathieu Sart.
\newblock A new method for estimation and model selection: $rho$-estimation.
\newblock \emph{Inventiones mathematicae}, 207\penalty0 (2):\penalty0 425--517,
  2017.

\bibitem[Bartlett and Mendelson(2006)]{bartlett2006empirical}
Peter~L Bartlett and Shahar Mendelson.
\newblock Empirical minimization.
\newblock \emph{Probability theory and related fields}, 135\penalty0
  (3):\penalty0 311--334, 2006.

\bibitem[Brownlees et~al.(2015)Brownlees, Joly, Lugosi,
  et~al.]{brownlees2015empirical}
Christian Brownlees, Emilien Joly, G{\'a}bor Lugosi, et~al.
\newblock Empirical risk minimization for heavy-tailed losses.
\newblock \emph{Annals of Statistics}, 43\penalty0 (6):\penalty0 2507--2536,
  2015.

\bibitem[Catoni(2012)]{catoni2012challenging}
Olivier Catoni.
\newblock Challenging the empirical mean and empirical variance: a deviation
  study.
\newblock In \emph{Annales de l'IHP Probabilit{\'e}s et statistiques},
  volume~48, pages 1148--1185, 2012.

\bibitem[Chen et~al.(2021{\natexlab{a}})Chen, Jin, Li, and
  Xu]{chen2020generalized}
Peng Chen, Xinghu Jin, Xiang Li, and Lihu Xu.
\newblock A generalized catoni's m-estimator under finite $\alpha$-th moment
  assumption with $\alpha \in (1, 2)$.
\newblock \emph{Electronic Journal of Statistics}, 15\penalty0 (2):\penalty0
  5523--5544, 2021{\natexlab{a}}.

\bibitem[Chen et~al.(2021{\natexlab{b}})Chen, Fan, Ma, and
  Yan]{chen2021bridging}
Yuxin Chen, Jianqing Fan, Cong Ma, and Yuling Yan.
\newblock Bridging convex and nonconvex optimization in robust pca: Noise,
  outliers and missing data.
\newblock \emph{The Annals of Statistics}, 49\penalty0 (5):\penalty0
  2948--2971, 2021{\natexlab{b}}.

\bibitem[Chi(2010)]{chi2010local}
Zhiyi Chi.
\newblock A local stochastic lipschitz condition with application to lasso for
  high dimensional generalized linear models.
\newblock \emph{arXiv preprint arXiv:1009.1052}, 2010.

\bibitem[Chinot et~al.(2019)Chinot, Lecu{\'e}, and Lerasle]{chinot2019robust}
Geoffrey Chinot, Guillaume Lecu{\'e}, and Matthieu Lerasle.
\newblock Robust statistical learning with lipschitz and convex loss functions.
\newblock \emph{Probability Theory and related fields}, pages 1--44, 2019.

\bibitem[Fan et~al.(2017)Fan, Li, and Wang]{fan2017estimation}
Jianqing Fan, Quefeng Li, and Yuyan Wang.
\newblock Estimation of high dimensional mean regression in the absence of
  symmetry and light tail assumptions.
\newblock \emph{Journal of the Royal Statistical Society. Series B, Statistical
  methodology}, 79\penalty0 (1):\penalty0 247, 2017.

\bibitem[Fan et~al.(2021)Fan, Ma, and Zhong]{fan2021selective}
Jianqing Fan, Cong Ma, and Yiqiao Zhong.
\newblock A selective overview of deep learning.
\newblock \emph{Statistical Science}, 36\penalty0 (2):\penalty0 264--290, 2021.

\bibitem[Fan et~al.(2022)Fan, Gu, and Zhou]{fan2022noise}
Jianqing Fan, Yihong Gu, and Wen-Xin Zhou.
\newblock How do noise tails impact on deep relu networks?
\newblock \emph{arXiv preprint arXiv:2203.10418}, 2022.

\bibitem[Fedotenkov(2020)]{fedotenkov2020review}
Igor Fedotenkov.
\newblock A review of more than one hundred pareto-tail index estimators.
\newblock \emph{Statistica}, 80\penalty0 (3):\penalty0 245--299, 2020.

\bibitem[Frankle and Carbin(2018)]{frankle2018lottery}
Jonathan Frankle and Michael Carbin.
\newblock The lottery ticket hypothesis: Finding sparse, trainable neural
  networks.
\newblock In \emph{International Conference on Learning Representations}, 2018.

\bibitem[Guan et~al.(2017)Guan, Liu, Zhang, Tao, and Davis]{guan2017truncated}
Naiyang Guan, Tongliang Liu, Yangmuzi Zhang, Dacheng Tao, and Larry~S Davis.
\newblock Truncated cauchy non-negative matrix factorization.
\newblock \emph{IEEE Transactions on pattern analysis and machine
  intelligence}, 41\penalty0 (1):\penalty0 246--259, 2017.

\bibitem[{Huber}(1964)]{huber1964robust}
Peter~J. {Huber}.
\newblock Robust estimation of a location parameter.
\newblock \emph{Annals of Mathematical Statistics}, 35\penalty0 (1):\penalty0
  492--518, 1964.

\bibitem[Khamaru and Wainwright(2019)]{khamaru2019convergence}
Koulik Khamaru and Martin~J Wainwright.
\newblock Convergence guarantees for a class of non-convex and non-smooth
  optimization problems.
\newblock \emph{Journal of Machine Learning Research}, 20\penalty0
  (154):\penalty0 1--52, 2019.

\bibitem[Kingma and Ba(2015)]{kingma2014adam}
Diederik~P Kingma and Jimmy Ba.
\newblock Adam: A method for stochastic optimization.
\newblock In \emph{ICLR (Poster)}, 2015.

\bibitem[Klusowski et~al.(2019)Klusowski, Yang, and
  Brinda]{klusowski2019estimating}
Jason~M Klusowski, Dana Yang, and WD~Brinda.
\newblock Estimating the coefficients of a mixture of two linear regressions by
  expectation maximization.
\newblock \emph{IEEE Transactions on Information Theory}, 65\penalty0
  (6):\penalty0 3515--3524, 2019.

\bibitem[{Knight}(1998)]{knight1998limiting}
Keith {Knight}.
\newblock Limiting distributions for $l\sb 1$ regression estimators under
  general conditions.
\newblock \emph{Annals of Statistics}, 26\penalty0 (2):\penalty0 755--770,
  1998.

\bibitem[Koenker(2005)]{Koenker2005}
Roger Koenker.
\newblock \emph{Quantile regression}.
\newblock Cambridge University Press, New York, 2005.

\bibitem[Koenker and Bassett~Jr(1978)]{koenker1978regression}
Roger Koenker and Gilbert Bassett~Jr.
\newblock Regression quantiles.
\newblock \emph{Econometrica}, pages 33--50, 1978.

\bibitem[Koltchinskii(2011)]{koltchinskii2011oracle}
Vladimir Koltchinskii.
\newblock \emph{Oracle inequalities in empirical risk minimization and sparse
  recovery problems: {\'E}cole D'{\'E}t{\'e} de Probabilit{\'e}s de Saint-Flour
  XXXVIII-2008}, volume 2033.
\newblock Springer Science \& Business Media, 2011.

\bibitem[Lam and Cheng(2021)]{lam2021robust}
Clifford Lam and Wenyu Cheng.
\newblock Robust mean and eigenvalues regularized covariance matrix estimation.
\newblock \emph{London School of Economics and Political Science}, 2021.

\bibitem[Lecu{\'e} and Mendelson(2013)]{lecue2013learning}
Guillaume Lecu{\'e} and Shahar Mendelson.
\newblock Learning subgaussian classes: upper and minimax bounds.
\newblock \emph{Topics in Learning Theory-Societe Mathematique de France,(S.
  Boucheron and N. Vayatis Eds.)}, 2013.

\bibitem[Lederer(2020)]{Lederer2020Risk}
Johannes Lederer.
\newblock Risk bounds for robust deep learning.
\newblock \emph{arXiv:2009.06202}, 2020.

\bibitem[{Lee} et~al.(2020){Lee}, {Yang}, {Lim}, and {Oh}]{lee2020optimal}
Kyungjae {Lee}, Hongjun {Yang}, Sungbin {Lim}, and Songhwai {Oh}.
\newblock Optimal algorithms for stochastic multi-armed bandits with heavy
  tailed rewards.
\newblock In \emph{Advances in Neural Information Processing Systems},
  volume~33, 2020.

\bibitem[Lerasle(2019)]{lerasle2019lecture}
Matthieu Lerasle.
\newblock Lecture notes: Selected topics on robust statistical learning theory.
\newblock \emph{arXiv preprint arXiv:1908.10761}, 2019.

\bibitem[Liu and Tao(2014)]{liu2014robustness}
Tongliang Liu and Dacheng Tao.
\newblock On the robustness and generalization of cauchy regression.
\newblock In \emph{2014 4th IEEE International Conference on Information
  Science and Technology}, pages 100--105. IEEE, 2014.

\bibitem[Loh(2017)]{loh2017statistical}
Po-Ling Loh.
\newblock Statistical consistency and asymptotic normality for high-dimensional
  robust $ m $-estimators.
\newblock \emph{The Annals of Statistics}, 45\penalty0 (2):\penalty0 866--896,
  2017.

\bibitem[Lugosi and Mendelson(2019)]{lugosi2019mean}
G{\'a}bor Lugosi and Shahar Mendelson.
\newblock Mean estimation and regression under heavy-tailed distributions: A
  survey.
\newblock \emph{Foundations of Computational Mathematics}, 19\penalty0
  (5):\penalty0 1145--1190, 2019.

\bibitem[Mathieu and Minsker(2021)]{mathieu2021excess}
Timoth{\'e}e Mathieu and Stanislav Minsker.
\newblock Excess risk bounds in robust empirical risk minimization.
\newblock \emph{Information and Inference: A Journal of the IMA}, 2021.

\bibitem[McCullagh and Nelder(1989)]{mccullagh1989generalized}
P~McCullagh and John~A Nelder.
\newblock \emph{Generalized Linear Models}, volume~37.
\newblock CRC Press, 1989.

\bibitem[Mei et~al.(2018)Mei, Bai, Montanari, et~al.]{mei2018landscape}
Song Mei, Yu~Bai, Andrea Montanari, et~al.
\newblock The landscape of empirical risk for nonconvex losses.
\newblock \emph{Annals of Statistics}, 46\penalty0 (6A):\penalty0 2747--2774,
  2018.

\bibitem[Minsker(2018)]{minsker2018sub}
Stanislav Minsker.
\newblock Sub-gaussian estimators of the mean of a random matrix with
  heavy-tailed entries.
\newblock \emph{The Annals of Statistics}, 46\penalty0 (6A):\penalty0
  2871--2903, 2018.

\bibitem[Nemirovskij and Yudin(1983)]{nemirovskij1983problem}
Arkadij~Semenovi{\v{c}} Nemirovskij and David~Borisovich Yudin.
\newblock Problem complexity and method efficiency in optimization.
\newblock 1983.

\bibitem[Ohn and Kim(2022)]{ohn2022nonconvex}
Ilsang Ohn and Yongdai Kim.
\newblock Nonconvex sparse regularization for deep neural networks and its
  optimality.
\newblock \emph{Neural Computation}, 34\penalty0 (2):\penalty0 476--517, 2022.

\bibitem[Ostrovskii and Bach(2021)]{ostrovskii2021finite}
Dmitrii~M Ostrovskii and Francis Bach.
\newblock Finite-sample analysis of $ m $-estimators using self-concordance.
\newblock \emph{Electronic Journal of Statistics}, 15\penalty0 (1):\penalty0
  326--391, 2021.

\bibitem[Padilla et~al.(2022)Padilla, Tansey, and Chen]{padilla2020quantile}
Oscar Hernan~Madrid Padilla, Wesley Tansey, and Yanzhen Chen.
\newblock Quantile regression with relu networks: Estimators and minimax rates.
\newblock \emph{Journal of Machine Learning Research}, 23\penalty0
  (247):\penalty0 1--42, 2022.

\bibitem[Resnick(2007)]{resnick2007heavy}
Sidney~I Resnick.
\newblock \emph{Heavy-tail phenomena: probabilistic and statistical modeling}.
\newblock Springer Science \& Business Media, 2007.

\bibitem[Schmidt-Hieber(2020)]{schmidt2020nonparametric}
Johannes Schmidt-Hieber.
\newblock Nonparametric regression using deep neural networks with relu
  activation function.
\newblock \emph{The Annals of Statistics}, 48\penalty0 (4):\penalty0
  1875--1897, 2020.

\bibitem[Shen et~al.(2021{\natexlab{a}})Shen, Jiao, Lin, Horowitz, and
  Huang]{shen2021deep}
Guohao Shen, Yuling Jiao, Yuanyuan Lin, Joel~L Horowitz, and Jian Huang.
\newblock Deep quantile regression: Mitigating the curse of dimensionality
  through composition.
\newblock \emph{arXiv preprint arXiv:2107.04907}, 2021{\natexlab{a}}.

\bibitem[Shen et~al.(2021{\natexlab{b}})Shen, Jiao, Lin, and
  Huang]{shen2021non}
Guohao Shen, Yuling Jiao, Yuanyuan Lin, and Jian Huang.
\newblock Non-asymptotic excess risk bounds for classification with deep
  convolutional neural networks.
\newblock \emph{arXiv preprint arXiv:2105.00292}, 2021{\natexlab{b}}.

\bibitem[Shen et~al.(2021{\natexlab{c}})Shen, Jiao, Lin, and
  Huang]{shen2021robust}
Guohao Shen, Yuling Jiao, Yuanyuan Lin, and Jian Huang.
\newblock Robust nonparametric regression with deep neural networks.
\newblock \emph{arXiv preprint arXiv:2107.10343}, 2021{\natexlab{c}}.

\bibitem[Sun(2021)]{sun2021we}
Qiang Sun.
\newblock Do we need to estimate the variance in robust mean estimation?
\newblock \emph{arXiv preprint arXiv:2107.00118}, 2021.

\bibitem[Sun et~al.(2020)Sun, Zhou, and Fan]{sun2020adaptive}
Qiang Sun, Wen-Xin Zhou, and Jianqing Fan.
\newblock Adaptive huber regression.
\newblock \emph{Journal of the American Statistical Association}, 115\penalty0
  (529):\penalty0 254--265, 2020.

\bibitem[Taheri et~al.(2021)Taheri, Xie, and Lederer]{taheri2021statistical}
Mahsa Taheri, Fang Xie, and Johannes Lederer.
\newblock Statistical guarantees for regularized neural networks.
\newblock \emph{Neural Networks}, 142:\penalty0 148--161, 2021.

\bibitem[Tukey(1960)]{tukey1960a}
John~W Tukey.
\newblock A survey of sampling from contaminated distributions.
\newblock \emph{Contributions to probability and statistics}, pages 448--485,
  1960.

\bibitem[Vershynin(2009)]{vershynin2009role}
Roman Vershynin.
\newblock On the role of sparsity in compressed sensing and random matrix
  theory.
\newblock In \emph{2009 3rd IEEE International Workshop on Computational
  Advances in Multi-Sensor Adaptive Processing (CAMSAP)}, pages 189--192. IEEE,
  2009.

\bibitem[Vershynin(2018)]{vershynin2018high}
Roman Vershynin.
\newblock \emph{High-dimensional probability: An introduction with applications
  in data science}, volume~47.
\newblock Cambridge university press, 2018.

\bibitem[Wang et~al.(2022)Wang, Zheng, Zhou, and Zhou]{wang2022new}
Lili Wang, Chao Zheng, Wen Zhou, and Wen-Xin Zhou.
\newblock A new principle for tuning-free huber regression.
\newblock \emph{Statistica Sinica}, 32:\penalty0 1--25, 2022.

\bibitem[Wen et~al.(2021)Wen, Xu, Zheng, Yang, and Wang]{wen2021sparse}
Ming Wen, Yixi Xu, Yunling Zheng, Zhouwang Yang, and Xiao Wang.
\newblock Sparse deep neural networks using $l^{1,\infty}$-weight
  normalization.
\newblock \emph{Statistica Sinica}, 31:\penalty0 1397--1414, 2021.

\bibitem[Xu et~al.(2020)Xu, Zhu, Yang, Zhang, Jin, and Yang]{xu2020learning}
Yi~Xu, Shenghuo Zhu, Sen Yang, Chi Zhang, Rong Jin, and Tianbao Yang.
\newblock Learning with non-convex truncated losses by sgd.
\newblock In \emph{Uncertainty in Artificial Intelligence}, pages 701--711.
  PMLR, 2020.

\bibitem[Yang et~al.(2021)Yang, Song, and Zhang]{yang2021law}
Xiaowei Yang, Shuang Song, and Huiming Zhang.
\newblock Law of iterated logarithm and model selection consistency for
  generalized linear models with independent and dependent responses.
\newblock \emph{Frontiers of Mathematics in China}, pages 1--32, 2021.

\bibitem[Yi et~al.(2020)Yi, Wang, and Ma]{yi2020non}
Mingyang Yi, Ruoyu Wang, and Zhi-Ming Ma.
\newblock Non-asymptotic analysis of excess risk via empirical risk landscape.
\newblock \emph{arXiv preprint arXiv:2012.02456}, 2020.

\bibitem[{Zhang} and {Jia}(2022)]{zhang2022elastic}
Huiming {Zhang} and Jinzhu {Jia}.
\newblock Elastic-net regularized high-dimensional negative binomial
  regression: Consistency and weak signals detection.
\newblock \emph{Statistica Sinica}, 32:\penalty0 181--207, 2022.

\bibitem[Zhang and Zhou(2018)]{zhang2018ell_1}
Lijun Zhang and Zhi-Hua Zhou.
\newblock $\ell_1$-regression with heavy-tailed distributions.
\newblock In \emph{NeurIPS}, 2018.

\bibitem[Zhu and Zhou(2021)]{zhu2021taming}
Ziwei Zhu and Wenjing Zhou.
\newblock Taming heavy-tailed features by shrinkage.
\newblock In \emph{International Conference on Artificial Intelligence and
  Statistics}, pages 3268--3276. PMLR, 2021.

\bibitem[Zou and Hastie(2005)]{zou2005regularization}
Hui Zou and Trevor Hastie.
\newblock Regularization and variable selection via the elastic net.
\newblock \emph{Journal of the royal statistical society: series B (statistical
  methodology)}, 67\penalty0 (2):\penalty0 301--320, 2005.

\end{thebibliography}
}}
%\begin{appendix}
\appendix
%\section*{Appendix A.}
%\label{app:theorem}

% Note: in this sample, the section number is hard-coded in. Following
% proper LaTeX conventions, it should properly be coded as a reference:
\newpage
\section*{Appendix}
The appendix includes the proofs of the lemmas, corollaries and theorems in the main body.

\section{Proofs}

\subsection{Useful lemmas of covering number bounds}

The following covering number bound of $\ell_2$-ball is in Corollary 4.2.13 in \cite{vershynin2018high}.
\begin{lemma}[Covering number bound of the $\ell_2$-ball]\label{net cardinality}
For any $\kappa>0$, the covering number of the $p$-dimensional unit $\ell_2$-ball $B_{2}^{p}(1)$ satisfy
\begin{center}
$\left(\frac{1}{\kappa}\right)^{p} \leq {N}\left(B_{2}^{p}(1),\kappa\right) \leq\left(\frac{2}{\kappa}+1\right)^{p}.$
\end{center}
\end{lemma}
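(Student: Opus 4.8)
The plan is to run the classical volumetric packing--covering argument, whose only real input is that the Lebesgue volume of a Euclidean ball is homogeneous of degree $n$ in its radius: $\mathrm{vol}(t B_{2}^{n}) = t^{n}\,\mathrm{vol}(B_{2}^{n})$ for every $t>0$. Both inequalities follow by comparing volumes of suitable unions of translated balls.

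For the lower bound, I would take an arbitrary $\kappa$-net $\mathcal{N}$ of $B_{2}^{n}$. By definition $B_{2}^{n} \subseteq \bigcup_{x \in \mathcal{N}}\big(x + \kappa B_{2}^{n}\big)$, so comparing volumes and using subadditivity gives $\mathrm{vol}(B_{2}^{n}) \le |\mathcal{N}|\,\kappa^{n}\,\mathrm{vol}(B_{2}^{n})$, i.e. $|\mathcal{N}| \ge \kappa^{-n}$. Since this holds for every net, minimizing over nets yields $N(B_{2}^{n},\kappa) \ge (1/\kappa)^{n}$.

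For the upper bound, I would instead fix a maximal $\kappa$-separated subset $\mathcal{N} \subseteq B_{2}^{n}$, meaning $\|x-y\|_{2} \ge \kappa$ for all distinct $x,y \in \mathcal{N}$ and no further point of $B_{2}^{n}$ can be added without destroying this property. Maximality forces $\mathcal{N}$ to be a $\kappa$-net: any $z \in B_{2}^{n}$ at distance $>\kappa$ from every element of $\mathcal{N}$ could be adjoined, a contradiction. On the other hand, the balls $x + (\kappa/2)B_{2}^{n}$, $x \in \mathcal{N}$, have pairwise disjoint interiors and are all contained in $(1+\kappa/2)B_{2}^{n}$; comparing volumes yields $|\mathcal{N}|\,(\kappa/2)^{n} \le (1+\kappa/2)^{n}$, hence $|\mathcal{N}| \le (2/\kappa + 1)^{n}$. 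Combining with $N(B_{2}^{n},\kappa) \le |\mathcal{N}|$ finishes the proof.

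I expect no genuine obstacle here; the only points requiring care are the packing--covering duality (that a maximal $\kappa$-separated set is automatically a $\kappa$-net) and keeping the $\ge$ versus $>$ conventions consistent so that the half-radius balls indeed have disjoint interiors. Since the statement coincides with Corollary 4.2.13 in \cite{vershynin2018high}, a direct citation is of course also available.
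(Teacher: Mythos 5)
Your volumetric argument is correct and complete: the lower bound by comparing the volume of $B_{2}^{n}$ with that of the union of $\kappa$-balls centred at the net, and the upper bound via a maximal $\kappa$-separated set (which is automatically a $\kappa$-net) whose half-radius balls pack disjointly into $(1+\kappa/2)B_{2}^{n}$. The paper itself offers no proof of this lemma --- it simply cites Corollary 4.2.13 of \cite{vershynin2018high} --- and your argument is precisely the standard proof given there, so there is nothing to compare beyond noting that you have supplied the proof the paper outsources to the reference.
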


The next lemma is modified from \cite{vershynin2009role}. Here, we provide a sharper covering number bound, while (2) in \cite{vershynin2009role} contains a unknown universal constant.
%\cite{van1998asymptotic}.
\begin{lemma}[Covering number bound of the $s$-sparse $\ell_2$-ball]\label{net cardinality1}
For any $\kappa>0$, the covering number of the $p$-dimensional unit $s$-sparse $\ell_2$-ball satisfy
\begin{center}
${N}\left(B_{2}^{p}(1)\cap B_{0}^{p}(s),\kappa\right) < \frac{1}{ \sqrt{2 e s}}\left(\frac{(\kappa+2)ep}{\kappa s}\right)^s.$
\end{center}
\end{lemma}
\begin{proof}
To get the result, we consider a union bound over $k$-dimensional subspaces of $B_{2}^{p}(1)$ by using the upper bound in Lemma \ref{net cardinality} to bound the covering number of $s$-sparse $\ell_2$-ball,
\begin{center}
${N}\left(B_{2}^{p}\cap B_{0}^{p}(s), \kappa\right) \leq \left(\begin{array}{c}
p \\
s
\end{array}\right){N}\left(B_{2}^{s},\kappa\right) < \frac{(e p / s)^{s}}{ \sqrt{2 e s}}\left(\frac{2}{\kappa}+1\right)^{s}=\frac{1}{ \sqrt{2 e s}} \left(\frac{(\kappa+2)ep}{\kappa s}\right)^s,$
\end{center}
where the last inequality is by Stirling's approximation $\left(\begin{array}{l}p \\ s\end{array}\right)<\frac{(e p / s)^{s}}{ \sqrt{2 e s}}$ for $1 \leq s \leq p$.
%$$\left(\begin{array}{l}p \\ s\end{array}\right)=\left(1-\frac{1}{p}\right) \cdots\left(1-\frac{s-1}{p}\right) \cdot \frac{p^{s}}{s!}< \frac{p^{s}}{s!}\leq \frac{p^{s}}{\left(\frac{s}{e}\right)^{s} \sqrt{2 e s}}\leq \frac{(e p / s)^{s}}{ \sqrt{2 e s}}$$
% for $1 \leq s \leq p$, and here we use the Stirling's approximation $\left(\frac{s}{e}\right)^{s} \sqrt{2 e s} \leq s !$.

\end{proof}

\subsection{The proof of Theorem \ref{thm1loss}}

\begin{proof}
By the definition of $\hat \theta_n$, we have for all $\theta^*\in \Theta^*$
\begin{align}\label{eq:defGLM0}
{{\hat R}_{\psi_\lambda,l,\alpha}}(\hat\theta_n )+\rho\|\hat\theta_n\|_2^2&=\frac{1}{{n\alpha }}\sum\limits_{i = 1}^n \psi_\lambda  [\alpha l({Y_i}, {{X}}_i,\hat \theta_n )]+\rho\|\hat\theta_n\|_2^2 \nonumber\\
&\le \frac{1}{{n\alpha }}\sum\limits_{i = 1}^n \psi_\lambda  [\alpha l({Y_i} , {{X}}_i, \theta^*  )]+\rho\|\theta^*\|_2^2=:{{\hat R}_{\psi_\lambda,l,\alpha}}({\theta^*})+\rho\|\theta^*\|_2^2
\end{align}
which yields ${{\hat R}_{\psi_\lambda,l,\alpha}}(\hat \theta_n ) - {{\hat R}_{\psi_\lambda,l,\alpha}}({\theta^*})\le \rho(\|\theta^*\|_2^2-\|\hat\theta_n\|_2^2)\le \rho\|\theta^*\|_2^2$ and thus
\begin{align}\label{eq:decompGLM0}
{R_l}(\hat \theta_n ) - R_l({\theta^*})&= [ {{R_l}(\hat \theta_n ) - {{\hat R}_{\psi_\lambda,l,\alpha}}(\hat \theta_n )}] +[{{\hat R}_{\psi_\lambda,l,\alpha}}(\hat \theta_n ) - {{\hat R}_{\psi_\lambda,l,\alpha}}({\theta^*})] +  [ {{{\hat R}_{\psi_\lambda,l,\alpha}}({\theta^*}) - {R_l}({\theta^*})} ]\nonumber\\
 &\le  [{{{\hat R}_{\psi_\lambda,l,\alpha}}({\theta^*}) - {R_l}({\theta^*})} ]+ [ {{R_l}(\hat \theta_n ) - {{\hat R}_{\psi_\lambda,l,\alpha}}(\hat \theta_n )}]+ \rho\|\theta^*\|_2^2.
\end{align}

Under the two high provability events $\mathcal{E}_1(l,\lambda,{\theta^*})$ in Lemma \ref{eq:LOSSshp1} and $\mathcal{E}_2(l,\lambda,\hat \theta_n)$ in Lemma \ref{eq:LOSSwhp2} below, we have by inequality \eqref{eq:decompGLM0}
\begin{align}\label{eq:lem1lem2}
&~~~~{R_l}(\hat \theta_n ) - {R_l}({\theta^*}) \le \left(\frac{f(\alpha)}{\alpha}{R_{\lambda \circ l}}({\theta^*}) + \frac{1}{{n\alpha }}\log \frac{1}{\delta }\right)\nonumber\\
 &+2\kappa {{\E}}H_{Y,X} + \frac{{{c_2}f(\alpha )}}{\alpha }\mathop {\sup }\limits_{\theta  \in {\Theta}}R_{\lambda \circ l}( \theta )+ \frac{{{c_2}f(\alpha \kappa )}}{\alpha }{{\E}}[\lambda(H_{Y,X})]    + \frac{1}{{n\alpha }}\log \frac{N({\Theta}, \kappa)}{\delta }+\rho\|\theta^*\|_2^2\nonumber\\
& =2\kappa {{\E}}H_{Y,X} + \frac{{{c_2}f(\alpha \kappa )}}{\alpha }{{\E}}[\lambda(H_{Y,X})]  +\frac{{(c_2+1)f(\alpha )}}{\alpha }R_{\lambda \circ l}(\Theta)+ \frac{1}{{n\alpha }}\log \frac{N({\Theta}, \kappa)}{\delta^2}+\rho\|\theta^*\|_2^2,
\end{align}
with probability at least $1 - 2\delta$.

For the last two terms in \eqref{eq:lem1lem2}, we put $\frac{{(c_2+1)f(\alpha )}}{\alpha }R_{\lambda \circ l}(\Theta) = \frac{1}{{n\alpha }}\log \frac{{N({\Theta},\kappa )}}{{{\delta ^2}}}$, i.e. the variance term equals to the bias term. Then we get
$\alpha = {f^{ - 1}}\left( {\frac{1}{{n({c_2} + 1)}}R_{\lambda \circ l}^{ - 1}(\Theta)\log \frac{{N(\Theta ,\kappa )}}{{{\delta ^2}}}} \right)$. So \eqref{eq:lem1lem2} implies
\begin{align*}
{R_l}(\hat \theta_n ) - {R_l}({\theta^*})\le 2\kappa {{\E}}H_{Y,X} + \frac{{{c_2}f({\alpha }\kappa )}}{\alpha }{{\E}}[\lambda(H_{Y,X})]  + \frac{2}{{n\alpha }}\log \frac{N(\Theta ,\kappa )}{\delta^2}+\rho\|\theta^*\|_2^2.
\end{align*}
Let $\kappa  = \frac{1}{n}$ and take infimum over for each $\theta^*\in \Theta^*$, we obtain
$${R_l}(\hat \theta_{n} ) -\mathop {\inf}\limits_{\theta  \in {\Theta}} {R_l}({\theta})  \le \frac{{2{{\E}}H_{Y,X}}}{{n}} + \frac{{{c_2}{{\E}}[\lambda(H_{Y,X})] }}{\alpha}f(\frac{\alpha}{n}) +\frac{2}{{n\alpha }}\log \frac{N(\Theta ,n^{-1} )}{\delta^2}+\rho\|\Theta^*\|_2^2,$$
with probability at least $1 - 2\delta$.

\end{proof}

\begin{lemma}[Concentration error bound]\label{eq:LOSSshp1}
For general loss function $l(\cdot, \cdot, \cdot)$, under (C.5), we have for all $\theta^*\in \Theta^*$
\[\mathbb{P}\{\mathcal{E}_1(l,\lambda,{\theta^*})\}\ge 1 - \delta,\]
where $\mathcal{E}_1(l,\lambda,{\theta^*}):=\{ {{{\hat R}_{\psi_\lambda,l,\alpha}}({\theta^*}) - {{ R}_l}({\theta^*})\le \frac{f(\alpha)}{\alpha}{R_{\lambda \circ l}}({\theta^*})+ \frac{1}{{n\alpha }}\log \frac{1}{\delta }}\}$.
\end{lemma}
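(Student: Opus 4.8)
The plan is to run the classical Catoni-style exponential Chernoff argument at the single point $\theta^*$, which keeps everything at the level of one-dimensional tail bounds and requires no union bound. Abbreviate $l_i := l(Y_i,X_i,\theta^*)$, so that $n\alpha\,\hat R_{\psi_\lambda,l,\alpha}(\theta^*) = \sum_{i=1}^n \psi_\lambda(\alpha l_i)$, and recall that $l_i \ge 0$, so $\lambda$ is always applied to a nonnegative argument as required. First I would apply the upper half of the sandwich bound \eqref{eq:CANTON} with $x = \alpha l_i$ and exponentiate, yielding the pointwise estimate $e^{\psi_\lambda(\alpha l_i)} \le 1 + \alpha l_i + \lambda(\alpha l_i)$.

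Next I would take expectations term by term and use the homogeneous inequality (C.1.1), $\lambda(\alpha l_i) \le f(\alpha)\lambda(l_i)$, to get $\mathrm{E}[e^{\psi_\lambda(\alpha l_i)}] \le 1 + \alpha\,\mathrm{E}[l_i] + f(\alpha)\,\mathrm{E}[\lambda(l_i)] = 1 + \alpha R_l(\theta^*) + f(\alpha) R_{\lambda\circ l}(\theta^*)$; here $R_l(\theta^*)$ is finite by the definition \eqref{eq:ture} and $R_{\lambda\circ l}(\theta^*) \le \sup_{\theta\in\Theta}R_{\lambda\circ l}(\theta) < \infty$ by (C.4), so all quantities are well defined. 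Applying the elementary bound $1+u\le e^u$ then gives $\mathrm{E}[e^{\psi_\lambda(\alpha l_i)}] \le \exp\!\big(\alpha R_l(\theta^*) + f(\alpha) R_{\lambda\circ l}(\theta^*)\big)$.

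Then, by independence of the $(X_i,Y_i)$, the moment generating function of the sum factorizes, $\mathrm{E}\big[\exp(\sum_i \psi_\lambda(\alpha l_i))\big] = \prod_i \mathrm{E}[e^{\psi_\lambda(\alpha l_i)}] \le \exp\!\big(n\alpha R_l(\theta^*) + n f(\alpha) R_{\lambda\circ l}(\theta^*)\big)$, and I would finish with Markov's inequality applied to $\exp(\sum_i\psi_\lambda(\alpha l_i))$: choosing the threshold $n\alpha R_l(\theta^*) + n f(\alpha) R_{\lambda\circ l}(\theta^*) + \log(1/\delta)$ makes the tail probability at most $\delta$. On the complementary event, dividing the inequality $\sum_i\psi_\lambda(\alpha l_i)\le n\alpha R_l(\theta^*) + n f(\alpha) R_{\lambda\circ l}(\theta^*) + \log(1/\delta)$ by $n\alpha$ yields $\hat R_{\psi_\lambda,l,\alpha}(\theta^*) - R_l(\theta^*) \le \frac{f(\alpha)}{\alpha}R_{\lambda\circ l}(\theta^*) + \frac{1}{n\alpha}\log\frac{1}{\delta}$, which is exactly the event $\mathcal{E}_1(l,\lambda)$.

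Every step is a short estimate, so there is no serious obstacle; the only points demanding care are (i) confirming the loss is nonnegative so that $\lambda(\alpha l_i)$ and $R_{\lambda\circ l}(\theta^*)$ are well defined, (ii) peeling off exactly the factor $f(\alpha)$ via (C.1.1) without over- or under-counting it, and (iii) using the finiteness supplied by (C.4) together with \eqref{eq:ture} to legitimize passing from the pointwise inequality to the bound on the expectation. The weak triangle inequality (C.1.2) and the local Lipschitz/moment conditions (C.2)--(C.3) play no role in this particular lemma; they enter only in the companion bound on the $\hat\theta_n$ side (Lemma \ref{eq:LOSSwhp2}).
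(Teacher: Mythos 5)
Your proposal is correct and follows essentially the same route as the paper's own proof: bound $\mathrm{E}\,e^{n\alpha \hat R_{\psi_\lambda,l,\alpha}(\theta^*)}$ via the upper half of \eqref{eq:CANTON}, independence, the homogeneity condition (C.1.1) to extract $f(\alpha)R_{\lambda\circ l}(\theta^*)$, the bound $1+u\le e^u$, and Markov's inequality. The only cosmetic difference is the order in which you take expectations versus form the product over $i$, which is immaterial.
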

\begin{proof}
The proof is based on bounding the exponential moment of $n\alpha {{\hat R}_{\psi_\lambda,l,\alpha}}({\theta^*})$ from Markov's inequality. Applying the upper truncated function in \eqref{eq:CANTON}, $1+x \le e^{x}$, one has
\begin{align*}
{\E}{e^{n\alpha {{\hat R}_{\psi_\alpha \circ l}}({\theta^*})}} &= {\E}{e^{ \sum\limits_{i = 1}^n \psi  [\alpha l({Y_i} , X_i, {\theta^*})]}} \le {\E}\{ \prod\limits_{i = 1}^n {[1 + \alpha l({Y_i} , X_i, {\theta^*}) +\lambda[\alpha l({Y_i},X_i, \theta^*)]]} \} \\
[\text{By independence}]~& = \prod\limits_{i = 1}^n{\{ {\E}[1 + \alpha l({Y}_i , X_i, {\theta^*}) +\lambda[\alpha l({Y_i},X_i, \theta^*)]]\} } \\
&\le e^{\alpha \sum_{i = 1}^n {\E} l(Y_i, X_i, {\theta^*}) +\sum_{i = 1}^n {\E}\{ \lambda[\alpha l({Y_i},X_i, \theta^*)]\}}\le {e^{n[\alpha {R_l}({\theta^*}) +f(\alpha){R_{\lambda \circ l}}({\theta^*})]}}.
\end{align*}
Via Markov's inequality with the exponential transform, it gives
\begin{align*}
&~~~~\mathbb{P}\{\mathcal{E}_1^c(l,\lambda,{\theta^*})\}=\mathbb{P}\{ {{\hat R}_{\psi_\lambda,l,\alpha}}({\theta^*}) > [ {R_l}({\theta^*}) +\frac{f(\alpha)}{\alpha}{R_{\lambda \circ l}}({\theta^*})] + \frac{{\log (1/\delta )}}{{n\alpha }}\} \\
& = \mathbb{P}\{ {e^{n\alpha {{\hat R}_{\psi_\lambda,l,\alpha}}({\theta^*})}} > {e^{n\alpha[ {R_l}({\theta^*}) +\frac{f(\alpha)}{\alpha}{R_{\lambda \circ l}}({\theta^*})] + \log (1/\delta )}}\}  \le \frac{{{\E}{e^{n\alpha {{\hat R}_{\psi_\lambda,l,\alpha}}({\theta^*})}}}}{{{e^{n\alpha[ {R_l}({\theta^*}) +\frac{f(\alpha)}{\alpha}{R_{\lambda \circ l}}({\theta^*})] + \log (1/\delta )}}}} \le \delta.
\end{align*}
\end{proof}
\begin{lemma}[Generalization error bound]\label{eq:LOSSwhp2}
For any $\kappa>0$, under (C.1)-(C.5), one has
\[\mathbb{P}\{\mathcal{E}_2(l,\lambda,\hat \theta_n)\} \ge 1 - \delta,\]
where $\mathcal{E}_2(l,\lambda,\hat \theta_n):=\{ {{ R}_{l}}(\hat \theta_n )-{{\hat R}_{\psi_\lambda,l,\alpha}}(\hat \theta_n ) \le 2\kappa {{\E}}H_{Y,X} + \frac{{{c_2}f(\alpha )}}{\alpha }R_{\lambda \circ l}( \Theta)+ \frac{{{c_2}f(\alpha \kappa )}}{\alpha }{{\E}}[\lambda(H_{Y,X})] + \frac{1}{{n\alpha }}\log \frac{N({\Theta}, \kappa)}{\delta } \}$.
\end{lemma}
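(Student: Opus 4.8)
The plan is a covering (net) argument lifting the single-point concentration bound of Lemma~\ref{eq:LOSSshp1} to the random minimizer $\hat\theta_n$, with the discretization gap absorbed by the local Lipschitz condition (C.2). Fix a minimal $\kappa$-net $\mathcal{N}(\Theta,\kappa)=\{\theta_1,\dots,\theta_N\}$ of $\Theta$ with $N=N(\Theta,\kappa)$, and for the random $\hat\theta_n\in\Theta$ pick a net point $\theta_{j^*}$ with $\|\hat\theta_n-\theta_{j^*}\|_2\le\kappa$. Condition (C.2) gives pointwise $|l(y,x,\hat\theta_n)-l(y,x,\theta_{j^*})|\le\kappa H(y,x)$, hence $R_l(\hat\theta_n)\le R_l(\theta_{j^*})+\kappa\,{{\rm{E}}}[H(Y,X)]$; and since $\psi_\lambda$ is non-decreasing and $\alpha>0$, from $l(Y_i,X_i,\hat\theta_n)\ge l(Y_i,X_i,\theta_{j^*})-\kappa H(Y_i,X_i)$ we get
\[
\hat R_{\psi_\lambda,l,\alpha}(\hat\theta_n)=\frac{1}{n\alpha}\sum_{i=1}^n\psi_\lambda\big(\alpha\,l(Y_i,X_i,\hat\theta_n)\big)\ \ge\ \frac{1}{n\alpha}\sum_{i=1}^n\psi_\lambda\big(\alpha[l(Y_i,X_i,\theta_{j^*})-\kappa H(Y_i,X_i)]\big).
\]
So $R_l(\hat\theta_n)-\hat R_{\psi_\lambda,l,\alpha}(\hat\theta_n)\le\kappa{{\rm{E}}}[H(Y,X)]+R_l(\theta_{j^*})-\frac{1}{n\alpha}\sum_i\psi_\lambda(\alpha[l(Y_i,X_i,\theta_{j^*})-\kappa H(Y_i,X_i)])$, and the task reduces to lower bounding, uniformly over the $N$ net points, the average $\frac{1}{n\alpha}\sum_i\psi_\lambda(\alpha[l(Y_i,X_i,\theta_j)-\kappa H(Y_i,X_i)])$ by $R_l(\theta_j)$ up to a small deficit.

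Fix a net point $\theta_j$, write $Z_i:=\psi_\lambda(\alpha[l(Y_i,X_i,\theta_j)-\kappa H(Y_i,X_i)])$, and repeat the exponential-moment argument of Lemma~\ref{eq:LOSSshp1} using the \emph{lower} envelope of \eqref{eq:CANTON}, i.e.\ $e^{-\psi_\lambda(x)}\le 1-x+\lambda(|x|)$. By independence, ${{\rm{E}}}\,e^{-\sum_i Z_i}\le\prod_i{{\rm{E}}}[1-\alpha l(Y_i,X_i,\theta_j)+\alpha\kappa H(Y_i,X_i)+\lambda(\alpha|l(Y_i,X_i,\theta_j)-\kappa H(Y_i,X_i)|)]$. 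In each factor, monotonicity of $\lambda$ with $|l-\kappa H|\le|l|+\kappa H$, the weak triangle inequality (C.1.2), and then the homogeneous inequality (C.1.1) applied once with dilation $\alpha$ and once with dilation $\alpha\kappa$ bound the $\lambda$-term in expectation by $c_2 f(\alpha)R_{\lambda\circ l}(\theta_j)+c_2 f(\alpha\kappa){{\rm{E}}}\{\lambda[H(Y,X)]\}$, which (C.4) lets us relax to $c_2 f(\alpha)S+c_2 f(\alpha\kappa)T$ with $S:=\sup_{\theta\in\Theta}R_{\lambda\circ l}(\theta)$ and $T:={{\rm{E}}}\{\lambda[H(Y,X)]\}$. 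Using $1+x\le e^x$ gives ${{\rm{E}}}\,e^{-\sum_i Z_i}\le e^{n[-\alpha R_l(\theta_j)+\alpha\kappa{{\rm{E}}}H(Y,X)+c_2 f(\alpha)S+c_2 f(\alpha\kappa)T]}$, so Markov's inequality applied to $e^{-\sum_i Z_i}$ yields, for each fixed $j$ with probability at least $1-\delta/N$,
\[
\frac{1}{n\alpha}\sum_i Z_i\ \ge\ R_l(\theta_j)-\kappa{{\rm{E}}}H(Y,X)-\frac{c_2 f(\alpha)}{\alpha}S-\frac{c_2 f(\alpha\kappa)}{\alpha}T-\frac{1}{n\alpha}\log\frac{N}{\delta}.
\]

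A union bound over $j=1,\dots,N$ makes the last display hold for all net points simultaneously with probability at least $1-\delta$ (the factor $N$ turning $\log(1/\delta)$ into the $\log\frac{N(\Theta,\kappa)}{\delta}$ of $\mathcal{E}_2$). On that event, specialize to $j=j^*$ and substitute into the decomposition of the first paragraph: the $R_l(\theta_{j^*})$ terms cancel and
\[
R_l(\hat\theta_n)-\hat R_{\psi_\lambda,l,\alpha}(\hat\theta_n)\ \le\ 2\kappa{{\rm{E}}}H(Y,X)+\frac{c_2 f(\alpha)}{\alpha}\sup_{\theta\in\Theta}R_{\lambda\circ l}(\theta)+\frac{c_2 f(\alpha\kappa)}{\alpha}{{\rm{E}}}\{\lambda[H(Y,X)]\}+\frac{1}{n\alpha}\log\frac{N(\Theta,\kappa)}{\delta},
\]
which is precisely the event $\mathcal{E}_2(l,\lambda)$. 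The step I expect to require the most care is the bookkeeping around the data-dependent index $j^*$: the per-net-point bound must be proved for \emph{all} $N$ points before specializing, so that a genuine union bound — not a union over a random subset — controls the bad event; the remaining subtlety is ordering the two uses of (C.1.1) so that $f(\alpha)$ attaches to the loss contribution and $f(\alpha\kappa)$ to the Lipschitz contribution, with no further constant lost.
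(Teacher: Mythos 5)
Your proposal is correct and follows essentially the same route as the paper's proof: a $\kappa$-net discretization absorbed by (C.2), the exponential-moment bound via the lower envelope of $\psi_\lambda$ with (C.1.2) then (C.1.1) splitting the $\lambda$-term into the $f(\alpha)$ and $f(\alpha\kappa)$ contributions, Markov plus a union bound over all net points, and a second application of (C.2) to pass from $R_l(\theta_{j^*})$ back to $R_l(\hat\theta_n)$, yielding the $2\kappa\,{\rm{E}}[H(Y,X)]$ term. The point you flag about proving the bound for all $N$ net points before specializing to the data-dependent $j^*$ is exactly how the paper handles it.
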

\begin{proof}
Let $\mathcal{N}({\Theta}, \kappa)$ be an $\kappa$-net of ${\Theta}$ and denote its covering number as ${N}({\Theta}, \varepsilon)$. For each $\hat \theta_n \in {\Theta}$, the definition of $\kappa$-net implies that there exists a $\tilde{\theta} \in \mathcal{N}({\Theta}, \kappa)$ satisfying $\|\hat{\theta}_n-\tilde{\theta}\|_{{2}} \leq \kappa$, by Lipschitz condition (C.3), we obtain
\begin{align*}
l({Y_i} , {{X}}_i, \hat \theta_n) \ge l({Y_i}, {{X}}_i, \tilde{\theta})-\kappa H_{{Y_i}, {{X}}_i},~i=1,2,\cdots,n.
\end{align*}
Since $\psi_\lambda(\cdot)$ is non-decreasing and the last inequality gives
\begin{align}\label{eq:Wlower1}
{{\hat R}_{\psi_\lambda,l,\alpha}}(\hat \theta_n) &= \frac{1}{{n\alpha }}\sum\limits_{i = 1}^n \psi_\lambda[\alpha l({Y_i},X_i, \hat \theta_n )] \ge \frac{1}{{n\alpha }}\sum\limits_{i = 1}^n \psi_\lambda[\alpha l({Y_i},X_i, \tilde \theta ) - \kappa \alpha H_{{Y_i}, {{X}}_i}].
\end{align}

In below, we continue to derive the lower bound  in \eqref{eq:Wlower1} by applying covering number techniques. From the lower bound in \eqref{eq:CANTON}, we could estimate the exponential moment bound:
\begin{align*}
&~~~~ {\E}{e^{ - \sum_{i = 1}^n \psi_\lambda[\alpha l({Y_i} , X_i,\tilde \theta ) - \alpha \kappa H_{{Y_i}, {{X}}_i}]}}\\
& \le {\E} \prod\limits_{i = 1}^n \{1 - \alpha {\E}[l({Y_i},X_i, \tilde \theta )] + \alpha \kappa H_{{Y_i}, {{X}}_i}+\lambda[\alpha( l({Y_i},X_i, \tilde \theta ) - \kappa H_{{Y_i}, {{X}}_i})]\}\\
& \le \prod\limits_{i = 1}^n\left\{1 - \alpha {\E} [l({Y_i},X_i, \tilde \theta )] + \alpha \kappa {\E}H_{{Y_i}, {{X}}_i}+{\E}\{ \lambda[\alpha( l({Y_i},X_i, \tilde \theta ) - \kappa H_{{Y_i}, {{X}}_i})]\} \right\}\\
[\text{(C.1.2)}]& \le \prod\limits_{i = 1}^n\left\{1 - \alpha {\E} [l({Y_i},X_i, \tilde \theta )] + \alpha \kappa  {\E}H_{{Y_i}, {{X}}_i}+{c_2}{\E}\{ \lambda[\alpha (l({Y_i},X_i, \tilde \theta) )] + \lambda[\alpha \kappa H_{{Y_i}, {{X}}_i}]\}  \right\} \\
[\text{(C.1.1)}]&   \le {e^{n\alpha \{ { - R_l(\tilde \theta ) + \kappa {{\E}}H_{Y,X} + \frac{{{c_2}f(\alpha )}}{\alpha }R_{\lambda \circ l}(\tilde \theta ) + \frac{{{c_2}f(\alpha \kappa )}}{\alpha }{{\E}}[\lambda(H_{Y,X})] } \}}},
\end{align*}
where the last inequality stems from $1+x \leq e^{x}$. By the last exponential moment bound, Markov's inequality shows that, for a fixed $\tilde \theta \in \mathcal{N}({\Theta}, \kappa)$,
\begin{align}\label{eq:Whigpro}
 \mathbb{P}&\left\{ \frac{{ - 1}}{{n\alpha }}\sum\limits_{i = 1}^n \psi_\lambda(\alpha l({Y_i},X_i, \tilde \theta ) - \kappa \alpha H_{{Y_i}, {{X}}_i}) > - {R_l}(\tilde \theta )   + \frac{{{c_2}f(\alpha )}}{\alpha }R_{\lambda \circ l}(\tilde \theta ) \right. \nonumber\\
&\left. + \kappa {{\E}}H_{Y,X} +  \frac{{{c_2}f(\alpha \kappa )}}{\alpha }{{\E}}[\lambda(H_{Y,X})] + {\frac{{\log (1/s)}}{{n\alpha }}} \right\} \nonumber\\
& \le \frac{{e^{n\alpha \{ { - R_l(\tilde \theta ) + \kappa {{\E}}H_{Y,X} + \frac{{{c_2}f(\alpha )}}{\alpha }R_{\lambda \circ l}(\tilde \theta ) + \frac{{{c_2}f(\alpha \kappa )}}{\alpha }{{\E}}[\lambda(H_{Y,X})] } \}}}}{{{e^{n\alpha \{ - R_l(\tilde \theta ) + \kappa {{\E}}H_{Y,X} + \frac{{{c_2}f(\alpha )}}{\alpha }R_{\lambda \circ l}(\tilde \theta ) + \frac{{{c_2}f(\alpha \kappa )}}{\alpha }{{\E}}[\lambda(H_{Y,X})]\}  + \log (1/s)}}}}= s \in (0,1).
\end{align}

The set $\mathcal{N}({\Theta}, \kappa)$ has $N({\Theta}, \kappa)$ elements. From the single bound \eqref{eq:Whigpro} and putting $s=\delta/N({\Theta}, \kappa)$, we have for all $\tilde \theta \in \mathcal{N}({\Theta}, \kappa)$
\begin{align}\label{eq:higprounion0}
\mathbb{P}&\left(  \bigcup\limits_{\tilde \theta  \in \mathcal{N}({\Theta}, \kappa)}\left\{ \frac{{ - 1}}{{n\alpha }}\sum\limits_{i = 1}^n \psi_\lambda(\alpha l({Y_i},X_i, \tilde \theta ) - \kappa \alpha H_{{Y}, {{X}}})> - {R_l}(\tilde \theta )   + \frac{{{c_2}f(\alpha )}}{\alpha }R_{\lambda \circ l}(\tilde \theta ) \right. \right.\nonumber\\
&\left.\left. + \kappa {{\E}}H_{Y,X} +  \frac{{{c_2}f(\alpha \kappa )}}{\alpha }{{\E}}[\lambda(H_{Y,X})] + {\frac{{\log (1/s)}}{{n\alpha }}} \right\}\right) \nonumber\\
& \le N({\Theta}, \kappa) \cdot \mathbb{P}\left\{ \frac{{ - 1}}{{n\alpha }}\sum\limits_{i = 1}^n \psi_\lambda (\alpha l({Y_i},X_i,\tilde \theta ) - \kappa \alpha H_{{Y}, {{X}}}) \ge - {R_l}(\tilde \theta )   + \frac{{{c_2}f(\alpha )}}{\alpha }R_{\lambda \circ l}(\tilde \theta ) \right. \nonumber\\
&\left. + \kappa {{\E}}H_{Y,X} +  \frac{{{c_2}f(\alpha \kappa )}}{\alpha }{{\E}}[\lambda(H_{Y,X})] + {\frac{{\log (1/s)}}{{n\alpha }}} \right\}\le N({\Theta}, \kappa)s = :\delta.
\end{align}

Then the complementary set in \eqref{eq:higprounion0} hold with probability at least $1-\delta$. Inequalities \eqref{eq:Wlower1} and \eqref{eq:higprounion0} give the following lower bound for all $\tilde \theta \in \mathcal{N}({\Theta}, \kappa)$
\begin{align}\label{eq:cmgfupper1}
 & ~~~~{{\hat R}_{\psi_\lambda,l,\alpha}}(\hat \theta)\ge \frac{1}{{n\alpha }}\sum\limits_{i = 1}^n \psi_\lambda  [\alpha l({Y_i},X_i, \tilde \theta ) - \kappa \alpha H_{{Y}, {{X}}}]\nonumber\\
&\ge {{ R}_l}(\tilde \theta ) - \left\{\kappa {{\E}}H_{Y,X} + \frac{{{c_2}f(\alpha )}}{\alpha }R_{\lambda \circ l}(\tilde \theta )+ \frac{{{c_2}f(\alpha \kappa )}}{\alpha }{{\E}}[\lambda(H_{Y,X})] + \frac{1}{{n\alpha }}\log \frac{N({\Theta}, \kappa)}{\delta }\right\} \nonumber\\
&\ge {{ R}_l}(\tilde \theta ) - \left\{\kappa {{\E}}H_{Y,X} + \frac{{{c_2}f(\alpha )}}{\alpha }R_{\lambda \circ l}(\Theta)+ \frac{{{c_2}f(\alpha \kappa )}}{\alpha }{{\E}}[\lambda(H_{Y,X})]   + \frac{1}{{n\alpha }}\log \frac{N({\Theta}, \kappa)}{\delta }\right\}
\end{align}
 with probability at least $1-\delta$.

It remains to find the lower bound for ${{ R}_l}(\tilde \theta )$  in \eqref{eq:cmgfupper1} by the error bound of $|{R_l}(\hat \theta_n ) -{R_l}(\tilde \theta )|$. To this end, the (C.2) implies
\begin{center}
${R_l}(\hat \theta_n ) -{R_l}(\tilde \theta )\le  {\E}[H_{{Y}, {{X}}}\|\hat\theta_n-\tilde\theta\|_{2}]\le  \kappa {\E}H_{{Y}, {{X}}}~~\hat \theta,\tilde\theta \in \Theta,$
\end{center}
which gives ${R_{l}^n}(\tilde \theta )\ge {R_l}(\hat \theta_n ) -\kappa{\E}H_{{Y}, {{X}}}$. Thus, \eqref{eq:cmgfupper1} has a further lower bound:
\begin{align*}
&~~~~{{\hat R}_{\psi_\lambda,l,\alpha}}(\hat \theta_n) \\
 &\ge {{ R}_{l}}(\hat \theta_n )- \{2\kappa {{\E}}H_{Y,X} + \frac{{{c_2}f(\alpha )}}{\alpha }R_{\lambda \circ l}(\Theta)+ \frac{{{c_2}f(\alpha \kappa )}}{\alpha }{{\E}}[\lambda(H_{Y,X})]    + \frac{1}{{n\alpha }}\log \frac{N({\Theta}, \kappa)}{\delta }\}
\end{align*}
with probability at least $1-\delta$. Then, we conclude Lemma \ref{eq:LOSSwhp2}.
\end{proof}

\subsection{The proof of Theorem \ref{thm1loss2}}
\begin{proof}
Let $\kappa=1/n$ in \eqref{eq:lem1lem2}, we get with probability at least $1 - 2\delta$
\begin{align*}
&~~~~{R_l}(\hat \theta_{n} ) -\mathop {\inf}\limits_{\theta  \in {\Theta}} {R_l}({\theta}) \\
& \le \frac{2}{n}{{\E}}H_{Y,X} + \frac{{{c_2}f(\alpha /n)}}{\alpha }{{\E}}[\lambda(H_{Y,X})]  +\frac{{(c_2+1)f(\alpha )}}{\alpha }R_{\lambda \circ l}(\Theta)+ \frac{1}{{n\alpha }}\log \frac{N(\Theta, 1/n)}{\delta^2}+\rho\|\Theta^*\|_2^2.
\end{align*}
The $\lambda(x)=|x|^{\beta}/{\beta},~{\beta} \in (1,2)$ satisfies weak triangle inequality and homogeneous inequality
\begin{align}\label{eq:weak}
|x+y|^{\beta}/{\beta}\le 2^{{\beta}-1}[|x|^{\beta}/{\beta}+|y|^{\beta}/{\beta}],~~ |tx|^{\beta}/{\beta}\le |t|^{\beta}\cdot |x|^{\beta}/{\beta}
\end{align}
by $|a+b|^{\beta} \le 2^{{\beta}-1}(|a|^{\beta}+|b|^{\beta})$ for ${\beta}>1$. Thus, we have $c_2=2^{{\beta}-1}$ and $f(t)=t^{\beta}$ for $t>0$ in (C.1).

Using the variance-bias tradeoff, i.e. the variance term equals to the bias term, put $\frac{{(c_2+1)f(\alpha )}}{\alpha }R_{\lambda \circ l}( \Theta) = \frac{1}{{n\alpha }}\log \frac{{N({\Theta},\kappa )}}{{{\delta ^2}}}$. Note that $f^{-1}(t)=t^{1/\beta}$ for $t>0$. So we have
\begin{center}
$\alpha = f^{ - 1}\left( \frac{1}{n({c_2} + 1)}R_{\lambda \circ l}^{ - 1}( \Theta)\log \frac{{N(\Theta ,\kappa )}}{{\delta ^2}} \right)=\frac{1}{n^{1/\beta}}\left(\frac{\log [{{N(\Theta ,\kappa )}}/{{\delta ^2}}] }{(2^{{\beta}-1} + 1)R_{\lambda \circ l}(\Theta)}\right)^{1/\beta}$.
\end{center}
Observe that $f(t)/t=t^{\beta-1}$ for $t>0$. Then
\begin{align*}
&~~~~\frac{{(c_2+1)f(\alpha )}}{\alpha }R_{\lambda \circ l}(\Theta)+ \frac{1}{{n\alpha }}\log \frac{N(\Theta ,1/n )}{\delta^2}=\frac{{2(c_2+1)f(\alpha )}}{\alpha }R_{\lambda \circ l}(\Theta)\\
&=\frac{{2({2^{\beta  - 1}} + 1)}}{{{n^{(\beta  - {\rm{1)}}/\beta }}}}{\left( {\frac{{\log [N(\Theta ,1/n )/{\delta ^2}]}}{{({2^{\beta  - 1}} + 1)R_{\lambda \circ l}(\Theta)}}} \right)^{(\beta  - {\rm{1)}}/\beta }}R_{\lambda \circ l}(\Theta)\\
& =\frac{{2({2^{\beta  - 1}} + 1)}}{{{n^{(\beta  - {\rm{1)}}/\beta }}}}{\left( {\frac{{\log [N(\Theta ,1/n )/{\delta ^2}]}}{{{2^{\beta  - 1}} + 1}}} \right)^{(\beta  - {\rm{1)}}/\beta }}{[R_{\lambda \circ l}(\Theta)]^{\beta^{-1}}}\\
& \le \frac{{2({2^{\beta  - 1}} + 1)}}{{{n^{(\beta  - {\rm{1)}}/\beta }}}}{\left( {\frac{{\log ({\delta ^{ - 2}}) + p\log \left( {1 + 2r/\kappa } \right)}}{{{2^{\beta  - 1}} + 1}}} \right)^{(\beta  - {\rm{1)}}/\beta }}{[R_{\lambda \circ l}(\Theta)]^{\beta^{-1}}},
\end{align*}
where the last inequality is by
\begin{align}\label{eq:covering}
\log \frac{{N(\Theta ,\kappa )}}{{{\delta ^2}}}& \le \log \frac{{N(B_{2}^{p}(r_n),\kappa )}}{{{\delta ^2}}}\le  \log (\frac{1}{{{\delta ^2}}}) + p\log \left( {1 + \frac{{2r_n}}{\kappa }} \right)
\end{align}
from Lemma \ref{net cardinality} and (C.2).

Thus, $\frac{{f(\alpha/n )}}{\alpha }={\alpha }^{\beta-1}n^{-\beta}$ and $\lambda(x)=|x|^{\beta}/{\beta}$ show
\begin{align*}
\frac{{{c_2}f(\alpha /n)}}{\alpha }{{\E}}[\lambda(H_{Y,X})] & = \frac{{{2^{\beta  - 1}/\beta}}}{{ {n^{1 - {\beta ^{ - 1}} + \beta }}}}{\left( {\frac{{\log [N(\Theta ,1/n )/{\delta ^2}]}}{{({2^{\beta  - 1}} + 1)R_{\lambda \circ l}(\Theta)}}} \right)^{(\beta  - {\rm{1)}}/\beta }}{{\E}}H_{Y,X}^{\beta} \\
 &\le \frac{{{2^{\beta  - 1}/\beta }}}{{ {n^{1 - {\beta ^{ - 1}} + \beta }}}}{\left( {\frac{{\log ({\delta ^{ - 2}}) + p\log \left( {1 + 2rn } \right)}}{{{2^{\beta  - 1}} + 1}}} \right)^{(\beta  - {\rm{1)}}/\beta }}\frac{{{{\E}} H_{Y,X}^{\beta}  }}{{{{[R_{\lambda \circ l}(\Theta)]}^{(\beta  - 1)/\beta }}}},
\end{align*}
where the last inequality is by \eqref{eq:covering}. Then we have
\begin{align*}
{R_l}(\hat \theta_{n} ) -\mathop {\inf}\limits_{\theta  \in {\Theta}} {R_l}({\theta})&\le \frac{2}{n}{{\E}}H_{Y,X} +\frac{1}{{{n^{{{\frac{\beta  - 1}{\beta} }}}}}}{\left( {\frac{{\log ({\delta ^{ - 2}}) + p\log \left( {1 + 2r_n n} \right)}}{{{2^{\beta  - 1}} + 1}}} \right)^{{\textstyle{{\beta  - {\rm{1}}} \over \beta }}}}\\
& \cdot \left[ {\frac{{{2^{\beta  - 1} }{{\E}}H_{Y,X}^\beta }}{{{\beta n^{\beta } }{{[R_{\lambda \circ l}(\Theta)]}^{{\textstyle{{\beta  - {\rm{1}}} \over \beta }}}}}} + 2({2^{\beta  - 1}} + 1){{[R_{\lambda \circ l}(\Theta)]}^{{\beta ^{ - 1}}}}} \right]\\
& \le \frac{2{{\E}}H_{Y,X}}{n}+C_{\beta,R_{\lambda \circ l}} \left[\frac{C_{\delta,n,r}(p) }{n}\right]^{\frac{\beta-1}{\beta}}+\rho\|\Theta^*\|_2^2
\end{align*}
with probability at least $1 - 2\delta$.
\end{proof}

\label{eq:elastic}

\subsection{The proof of Theorem \ref{thm:Elastic}}
\begin{proof}
Similar to the inequality \eqref{eq:lem1lem2} in the proof of Theorem \ref{thm1loss}, with ridge penalty replaced by elastic net penalty, we have for all $\theta^*\in \Theta^*$
\begin{align}\label{eq:lem1lem2elastic}
{R_l}(\hat \theta_n ) - {R_l}({\theta^*})& \le 2\kappa {{\E}}H_{Y,X} + \frac{{{c_2}f(\alpha \kappa )}}{\alpha }{{\E}}[\lambda(H_{Y,X})]  \nonumber\\
&  +\frac{{(c_2+1)f(\alpha )}}{\alpha }R_{\lambda \circ l}(\Theta)+\frac{1}{{n\alpha }}\log \frac{N({\Theta}, \kappa)}{\delta^2}+\rho\|\theta^*\|_2^2+\gamma\|\theta^*\|_1,
\end{align}
with probability at least $1 - 2\delta$.

For the term $\frac{1}{{n\alpha }}\log \frac{N({\Theta}, \kappa)}{\delta^2}$ in \eqref{eq:lem1lem2elastic} with $\kappa=1/n$, Lemma \ref{net cardinality1} implies
\begin{align*}
\log {N}\left(\Theta,1/n\right)\le \log {N}\left(B_{2}^{p}(r_n)\cap B_{0}^{p}(s_n),1/n\right) \leq \log\left(\frac{1}{ \sqrt{2 e s_n}}\right)+s_n\log\left[\frac{e(1+2r_nn)p}{s_n}\right].
\end{align*}
Let $f(t)=t^{\beta}$ and we have $c_2=2^{{\beta}-1}$, which shows that
\begin{align}\label{eq:h2elastic}
&~~~~{R_l}(\hat \theta_n ) - {R_l}({\theta^*})\le \rho\|\theta^*\|_2^2+\gamma\|\theta^*\|_1+\frac{2{{\E}}H_{Y,X}}{n} + \frac{{{c_2}f(\alpha \kappa )}}{\alpha }{{\E}}[\lambda(H_{Y,X})]\nonumber\\
& +\frac{{(c_2+1)f(\alpha )}}{\alpha }R_{\lambda \circ l}(\Theta)+\frac{1}{{n\alpha }}\left\{s_n\log\left[{e(1+2r_nn)}\frac{p}{s_n}\right]+\log\left(\frac{\delta^{-2}}{{ \sqrt{2 e s_n}}}\right)\right\}
\end{align}
with probability at least $1 - \delta$.

For the last two terms in \eqref{eq:h2elastic}, we put
\begin{center}
${\alpha^{\beta-1}}{{(c_2+1)}}R_{\lambda \circ l}(\Theta) = \frac{1}{{n\alpha }}\left\{s_n\log\left[{e(1+2r_nn)}\frac{p}{s_n}\right]+\log\left(\frac{\delta^{-2}}{{ \sqrt{2 e s_n}}}\right)\right\}$.
\end{center}
Then we obtain $\alpha =\frac{1}{n^{1/{\beta}}}\left(\frac{\log ({\delta ^{ - 2}}/\sqrt{2 e s_n}) + s_n\log\left[(1+2r_nn){ep}/{s_n} \right]}{(2^{\beta-1} + 1)R_{\lambda \circ l}(\Theta )}\right)^{{1}/{\beta}}$. Moreover, in \eqref{eq:lem1lem2elastic}
\begin{align*}
\frac{{{c_2}f(\alpha /n)}}{\alpha }{{\E}}[\lambda(H_{Y,X})] & = \frac{{{2^{\beta  - 1}/\beta}}}{{ {n^{1 - {\beta ^{ - 1}} + \beta }}}}{\left( {\frac{{\log [N(\Theta ,1/n )/{\delta ^2}]}}{{({2^{\beta  - 1}} + 1)R_{\lambda \circ l}(\Theta)}}} \right)^{(\beta  - {\rm{1)}}/\beta }}{{\E}}H_{Y,X}^{\beta} \\
 &\le \frac{{{2^{\beta  - 1}/\beta }}}{{ {n^{1 - {\beta ^{ - 1}} + \beta }}}}{\left({\log (\frac{\delta ^{ - 2}}{2 es_n}) + s_n\log\left[(1+2r_n n)\frac{ep}{s_n} \right]}\right)^{(\beta  - {\rm{1)}}/\beta }}\frac{{{{\E}} H_{Y,X}^{\beta}  }}{{{{[R_{\lambda \circ l}(\Theta)]}^{(\beta  - {\rm{1)}}/\beta }}}}.
\end{align*}
By the above terms, with probability at least $1 - 2\delta$, one has
\begin{small}
\begin{align*}
{R_l}(\hat \theta_{n} ) -\mathop {\inf}\limits_{\theta  \in {\Theta}} {R_l}({\theta})\le \frac{2{{\E}}H_{Y,X}}{n}+\frac{C_{\beta,R_{\lambda \circ l}}}{n^{\frac{\beta-1}{\beta}}}\left({\log (\frac{\delta ^{ - 2}}{2 e s_n}) + s_n\log\left[(1+2r_n n)\frac{ep}{s} \right]}\right)^{\frac{\beta-1}{\beta}}+\|{\Theta^*}\|_{\rho,\gamma},
\end{align*}
\end{small}
where $C_{\beta,R_{\lambda \circ l}}$ is a constant given in Theorem \ref{thm1loss2}, and $\|{\Theta^*}\|_{\rho,\gamma}:=\inf_{\theta^*\in \Theta^*}(\rho\|\theta^*\|_2^2+\gamma\|\theta^*\|_1)$.
\end{proof}

\subsection{The proof of Corollary \ref{Thm2}}\label{eq:knight}

We use Knight's identity \citep{knight1998limiting} to obtain the expression of $H_{y,x}$ for QR.
$$\rho_{\tau}(u-v)-\rho_{\tau}(u)=-v[\tau-{\rm{1}}(u<0)]+\int_{0}^{v}[{\rm{1}}(u \leq s)-{\rm{1}}(u \leq 0)] d s,$$
which provides a Taylor-like expansion for non-smooth function. On the one hand, we have
\begin{align}
\rho_{\tau}(y - x^ \top {\eta _1})-\rho_{\tau}(y - x^ \top {\eta _2})&=x^ \top ({\eta _2}-{\eta _1})[\tau-{\rm{1}}(y - x^ \top {\eta _2})]\nonumber\\
&+\int_{0}^{x^ \top ({\eta _1}-{\eta _2})}[{\rm{1}}(y - x^ \top {\eta _2} \leq s)-{\rm{1}}(y - x^ \top {\eta _2} \leq 0)] d s\nonumber\\
& \le |x^ \top ({\eta _2}-{\eta _1})|\cdot |\tau-{\rm{1}}(y - x^ \top {\eta _2})|\nonumber\\
&+\left| \int_{0}^{x^ \top ({\eta _1}-{\eta _2})}[{\rm{1}}(y - x^ \top {\eta _2} \leq s)-{\rm{1}}(y - x^ \top {\eta _2} \leq 0)] d s \right|\nonumber\\
& \le \max \{ \tau ,1 - \tau \}|x^ \top ({\eta _2}-{\eta _1})|+|x^ \top ({\eta _2}-{\eta _1})|\nonumber\\
& \le \max \{1+ \tau ,2 - \tau \}\|{\eta _2} - {\eta _1}\|_{2} \| x\|_{{2}}.\nonumber
\end{align}
Let $l_\tau:=\max \{1+ \tau ,2 - \tau \}$. Hence, $H_{y,x}=l_\tau\left\| x \right\|_2$.

\subsection{Remarks and proofs for GLMs}\label{se:GLMap}
\begin{remark}
The \eqref{eq-log-like} is originally derived from negative log-likelihood functions of exponential family. The exponential family contains many sub-exponential and sub-Gaussian distributions such as binomial, Poisson, negative binomial, Normal, Gamma distributions \citep{mccullagh1989generalized}. Let $\nu(\cdot)$ be some dominated measure and $b(\cdot)$ be a function. Consider a $Y$ follows the distribution of the  natural
exponential families $P_{\eta}$ indexed by parameter ${\eta}$
\begin{equation}\label{eq-glm}
P_{\eta}(dy)=c(y)\exp\{y{\eta}-b({\eta})\}\nu(dy),
\end{equation}
where the function $c(y)>0$ is free of ${\eta} \in \Xi  := \{{\eta} :\smallint c (y)\exp \{ y{\eta} \} \nu (dy) < \infty \}.$
\end{remark}

\begin{remark}\label{eq:GLMS}
For GLMs, the link function $u(x)$ is {canonical, i.e. $u(x)=x$,  whence we can choose ${g_A}(\cdot) \equiv1$ in (G1).
Note that in this case $k(t)=b(t)$, a choice of ${h_A}(\cdot)$ in condition (G2) is derived by $\ddot b(t)>0$
 \[\dot k(x^ \top \theta) \le  \dot b(r_n\left\| {x} \right\|_{2})  =:{h_{r_n}}(x),~\text{for}~\| \theta\|_2\le {r_n}~\text{due to}~| {x^ \top \theta} | \le {r_n}\left\| {x} \right\|_{2}.\]

 For GLM with non-canonical link function $u(x)$, we first choose ${g_A}(\cdot)$ in condition (G1) by
 $ \dot u(x^ \top \theta) \le  \dot u({r_n}\left\| x \right\|_{2})  =:{g_{r_n}}(x),~\text{for}~\| \theta\|_2\le {r_n}$ due to $| {x^ \top \theta} | \le {r_n}\left\| x \right\|_{2}.$ Under the (C.2) and (G.1), it implies (G.2) with $A={r_n}$ and ${h_{r_n}}(x) = {g_{r_n}}(x)\dot b(u({r_n}\left\| x \right\|_{2}))$ by the following inequality
 \[\dot k(x^ \top \theta) = \dot u(x^ \top \theta)\dot b(u(x^ \top \theta)) \le {g_{r_n}}(x)\dot b(u({r_n}\left\| x \right\|_{2})) := {h_{r_n}}(x),~\text{for}~\| \theta\|_2\le {r_n}.\]
Suppose the input $\{X_i\}_{i=1}^n$ is i.i.d. drawn from $X$, and $X$ is bounded (see \cite{yang2021law}). Under the (C.2), the $k(X^ \top \theta )$ and $u(X^ \top \theta )$ are also bounded,} then (G.4) is true under the finite second moments of output
\[R_{{l^2}}(\theta )= {{\E}}{[k(X_{}^ \top \theta ) - Yu(X^ \top \theta )]^2} \le 2{{\E}}{[k(X^ \top \theta )]^2} + 2{{\E}}{[Yu(X^ \top \theta )]^2} \le {C_1} + {C_2}{{\E}}Y^2<\infty\]
for $\| \theta\|_2\le {r_n}$, where $C_1$ and $C_2$ are some positive constants.
\end{remark}

From \eqref{eq-glm}, one can formally derive the quasi-GLMs loss in \eqref{eq-log-like}. Indeed, given $\{ X_i\}_{i=1}^{n}$, the conditional likelihood function of $\{Y_i\}_{i=1}^{n}$ is the product of $n$ terms in (\ref{eq-glm}) with $\eta_i:=u(X_{i}^ \top \theta)$, and the average negative log-likelihood function is
\begin{center}
${{\hat R}_l}(\theta ):=\frac{-1}{n}\sum_{i=1}^{n}[Y_{i}u({X}_{i}^ \top \theta)-b(u({X}_{i}^ \top \theta))]=\frac{1}{n}\sum_{i=1}^{n}l({Y_{i}} ,X_{i}^ \top \theta ),~\theta \in \mathbb{R}^{p}$.
\end{center}

The (C.2) can be obtained by a first-order Taylor expansion of $l(y,x,\cdot)$ as the following
\[l(y,x,{\eta _2}) = l(y,x,{\eta _1}) +({\eta _2} - {\eta _1})^\top \dot l[y,x,(t{\eta _2} + (1 - t){\eta _1})];{\eta _1},{\eta _2} \in \Theta,~\exists~t \in (0,1),\]
where $\dot l(y,x,\cdot)$ is a (sub-)gradient, we can choose a $H_{y,x}$ satisfying
\begin{equation}\label{eq:Hfunction}
H_{y,x}\ge {\sup }_{{\eta _1},{\eta _2} \in \Theta} \| {\dot l[y,x,(t{\eta _2} + (1 - t){\eta _1})]}\|_2.
\end{equation}
Fix a $\eta \in {\Theta}$, we compute the gradient for the loss function in \eqref{eq-log-like}
\begin{align*}
{\dot l(y,x,\eta )}:=\nabla_\eta l(y,x^\top\eta )= [-y\dot u(x^\top\eta)+\dot k(x^\top\eta) ]x^\top.
\end{align*}
From \eqref{eq:Hfunction}, $H_{y,x}$ in Theorem \ref{thm1loss2} is given by
\begin{align}\label{eq:compactGLM}
{\sup }_{{\eta _1},{\eta _2} \in \Theta} \| {\dot l[y,x,(t{\eta _2} + (1 - t){\eta _1})]}\|_2& \le \mathop {\sup }\limits_{\| \theta\|_2\le {r_n}} |-y\dot u(x^\top\eta)+\dot k(x^\top\eta) |\cdot\left\| x \right\|_2 \nonumber\\
&\le [|y|{g_{r_n}}(x)+{h_{r_n}}(x)]\left\| x \right\|_2 :=H_{y,x}
\end{align}
under condition (C.2), which implies the excess risk bound in Corollary \ref{eq:GLMThm2}.

Next, we provides two examples of $H_{y,x}$.

\textbf{Robust logistic regression}. We have  $u(t)=t,~k(t)=\log(1+e^{t})$ and $y\in \{0,1\}.$ Note that $H_{y,x}$ in Theorem \ref{thm1loss2} is given by
$${\sup }_{{\eta _1},{\eta _2} \in \Theta} \| {\dot l[y,x,(t{\eta _2} + (1 - t){\eta _1})]}\|_2 \le \mathop {\sup }\limits_s |y+ \dot k(s)|\cdot\left\| x \right\|_2 \le 2\left\| x \right\|_2:=H_{y,x}$$
from \eqref{eq:Hfunction} and \eqref{eq:compactGLM}, we have
$H_{y,x}^{\beta} = {2^{\beta}}{\left\| x \right\|_2^{\beta} }$. \\

\textbf{Robust negative binomial regression}. The connection of $u( \cdot )$ and $k( \cdot )$ of NBR is
$u(t)=t-\log(\eta+e^{t})$ and $k(t)=\eta\log(\eta+e^{t})$.
From \eqref{eq:Hfunction} and \eqref{eq:compactGLM}, we have via Theorem \ref{thm1loss2}
\begin{align}\label{eq:Hnbr}
{\sup }_{{\eta _1},{\eta _2} \in \Theta} \| {\dot l[y,x,(t{\eta _2} + (1 - t){\eta _1})]}\|_2 &\le \mathop {\sup }\limits_s |y+ \dot k(s)|\cdot\left\| x \right\|_2  \nonumber\\
& = (y+\eta)\left\| x \right\|_2:=H_{y,x},~y \ge 0.
\end{align}
Hence, we obtain
$H_{y,x}^{\beta}  = |(y+\eta)\left\| x \right\|_2|^{\beta}  \le {2^{\beta-1}}[{\left\| yx \right\|_2^{\beta} } + {(\eta\left\| x \right\|_2)^{\beta} }].$

\subsection{The proof of Theorem \ref{eq:DNNsThm3}}\label{eq:DNNPROOF}
For a fixed $L$, Lipschitz property of DNN function (Proposition 6 in \cite{taheri2021statistical}) implies the following excess risk guarantee for elastic net regularization DNN regression estimators. Since the neural network $\mathcal{NN}(N, L)$ in \eqref{eq:NN} has ReLU activation functions, and it has the approximation error promise \citep{schmidt2020nonparametric} grounded on the smooth assumption of ${f^*}$.

\begin{proof}
Let ${\hat R}_{\psi_\lambda,l,\alpha}(f):=\frac{1}{{n\alpha }}\sum_{i = 1}^n {\psi_\lambda}[\alpha l( Y_i, f_\theta(X_i) )]$. From the definition of $f_{\hat\theta_n}$, one has
\begin{align*}
&~~~~{{\hat R}_{\psi_\lambda,l,\alpha}}(f_{\hat\theta_n})+\rho\|\hat\theta_n\|_2^2+\gamma\|\hat\theta_n\|_1=\frac{1}{{n\alpha }}\sum\limits_{i = 1}^n \psi_\lambda  [\alpha l( Y_i, f_{\hat\theta_n}(X_i) )]+\rho\|\hat\theta_n\|_2^2+\gamma\|\hat\theta_n\|_1\nonumber\\
&\le \frac{1}{{n\alpha }}\sum\limits_{i = 1}^n \psi_\lambda  [\alpha l( Y_i, f_{\theta_{\mathcal{N}}^*}(X_i) )]+\rho\|\theta_{\mathcal{N}}^*\|_2^2+\gamma\|\theta_{\mathcal{N}}^*\|_1=:{{\hat R}_{\psi_\lambda,l,\alpha}}(\theta_{\mathcal{N}}^*)+\rho\|\theta_{\mathcal{N}}^*\|_2^2+\gamma\|\theta_{\mathcal{N}}^*\|_1
\end{align*}
which yields
\begin{align}\label{eq:deFDNN}
{{\hat R}_{\psi_\lambda,l,\alpha}}(f_{\hat\theta_n}) - {{\hat R}_{\psi_\lambda,l,\alpha}}(\theta_{\mathcal{N}}^*)&\le \gamma(\|\theta_{\mathcal{N}}^*\|_1-\|\hat\theta_n\|_1)+\rho(\|\theta_{\mathcal{N}}^*\|_2^2-\|\hat\theta_n\|_2^2)\nonumber\\
&\le \rho\|\theta_{\mathcal{N}}^*\|_2^2+\gamma\|\theta_{\mathcal{N}}^*\|_1.
\end{align}
Let $\F=\mathcal{NN}(N, L)$. The excess risk ${R}_l(f_{\hat\theta_n})- {R}_l (f^*)$ can be decomposed and bounded by,
\begin{align}
&{R}_l(f_{\hat\theta_n})- {R}_l (f^*)= \underbrace{{R_l}(f_{\hat\theta_n}) - {{\hat R}_{\psi_\lambda,l,\alpha}}(f_{\hat\theta_n})}_{Genaralization}   \nonumber\\
&+  \underbrace{{{\hat R}_{\psi_\lambda,l,\alpha}}(f_{\hat\theta_n}) - {{\hat R}_{\psi_\lambda,l,\alpha}}(f_{\theta_{\mathcal{N}}^*})}_{Optimaztion}+  \underbrace{{{\hat R}_{\psi_\lambda,l,\alpha}}(f_{\theta_{\mathcal{N}}^*}) - {R_l}(f_{\theta_{\mathcal{N}}^*})}_{Concentration}+  \underbrace{{{R}_l}(f_{\theta_{\mathcal{N}}^*}) - {R_l}(f^*)}_{Approximation}\nonumber\\
&\le[{{R_l}(f_{\hat\theta_n}) - {{\hat R}_{\psi_\lambda,l,\alpha}}(f_{\hat\theta_n})}] + [ {{{\hat R}_{\psi_\lambda,l,\alpha}}(f_{\theta_{\mathcal{N}}^*}) - {R_l}(f_{\theta_{\mathcal{N}}^*})} ]\nonumber\\
&+\inf\limits_{f \in \F} | {{{R}_l}(f)- {R_l}(f^*)}|+\rho\|\theta_{\mathcal{N}}^*\|_2^2+\gamma\|\theta_{\mathcal{N}}^*\|_1,\label{eq:decompDNN}
\end{align}
where the last inequality is from \eqref{eq:deFDNN} and ${{{R}_l}(f_{\theta_{\mathcal{N}}^*}) - {R_l}(f^*)}\le \inf\nolimits_{f \in \F} | {{{R}_l}(f)- {R_l}(f^*)}|$.

The second term in \eqref{eq:decompDNN} is the concentration error bound, which can be bounded from the same proof in Lemma \ref{eq:LOSSshp1} to get for all $f_{\theta_{\mathcal{N}}^*}$ with ${\theta^*} \in \Theta_{\mathcal{N}}^*$

\begin{align}\label{eq:h1}
\mathbb{P}\left\{ {{{{\hat R}_{\psi_\lambda,l,\alpha}}(f_{\theta_{\mathcal{N}}^*}) - {R_l}(f_{\theta_{\mathcal{N}}^*})}\le \frac{f(\alpha)}{\alpha}{R_{\lambda \circ l}}(f_{\theta_{\mathcal{N}}^*})+ \frac{1}{{n\alpha }}\log \frac{1}{\delta }}\right\}\ge 1 - \delta.
\end{align}

The first term in \eqref{eq:decompDNN} is the generalization error bound with a upper bound from the same proof in Lemma \ref{eq:LOSSwhp2} from  Lipschitz property of loss function. For every ${x} \in \mathbb{R}^{p}$ and parameters $\theta_1=\left(W_{0}, \ldots, W_{L}\right), {\theta _2}=\left(V_{0}, \ldots, V_{L}\right)$ in \eqref{eq:NN}. One can check Lipschitz property of DNNs
\begin{equation}\label{NN.xie}
\left| {{f_{{\theta _1}}}(x) - {f_{{\theta _2}}}(x)} \right| \le c_{\mathrm{Lip}}({x})\left\| {{\theta _2} - {\theta _1}} \right\|_{\mathrm{F}}
\end{equation}
 for a given function $c_{\mathrm{Lip}}({x}):=2\sqrt{L}\|{x}\|_{2} \max _{l \in\{0, \ldots, L\}} \prod_{j \in\{0, \ldots, L\}, j \neq l}\sigma_{\max }(W^{j}) \vee \sigma_{\max }(V^{j})$ by Proposition 6 in \cite{taheri2021statistical}.
%\begin{lemma}[Lipschitz property of DNNs, Proposition 6 in \cite{taheri2021statistical}]\label{NN.xie}
% Assume that the activation functions $\sigma_l: \mathbb{R}^{N_{l}} \rightarrow \mathbb{R}^{N_{l}}$ are $a$-Lipschitz with respect to the Euclidean norms on their input and output spaces for $l=0,1,\dots,L$. Then for every ${x} \in \mathbb{R}^{p}$ and parameters $\theta=\left(W_{0}, \ldots, W_{L}\right), \gamma=\left(V_{0}, \ldots, V_{L}\right)$ in \eqref{eq:NN},  it holds that
%$$\left|f_\theta(x ) - f_\gamma(x)\right| \leq c_{\mathrm{Lip}}({x})\|\theta-\gamma\|_{\mathrm{F}}$$
%with $c_{\mathrm{Lip}}({x}):=2a^{L} \sqrt{L}\|{x}\|_{2} \max _{l \in\{0, \ldots, L\}} \prod_{j \in\{0, \ldots, L\}, j \neq l}\sigma_{\max }(W^{j}) \vee \sigma_{\max }(V^{j}).$
%\end{lemma}

Next, we consider the $s_n$-sparse parameter space $\Theta$ given in \eqref{eq:l1}. Suppose that $l(\cdot,\cdot)$ satisfies Lipschitz condition with a function $D_{x,y}$  %\marginnote{replace $L(x,y)$ by $D_{x,y}$.}
\begin{center}
$|l(y,f_{\theta _2}(x)) - l(y,f_{\theta _1}(x))|\le D_{x,y}{{|f_{\theta _2}(x)-f_{\theta _1}(x)|}},~\theta_1,~\theta_2 \in \Theta$.
\end{center}
From \eqref{NN.xie}, the loss function has Lipschitz property
\begin{center}
  $|l(y,f_{\theta _2}(x)) - l(y,f_{\theta _1}(x))|\le D_{x,y}c_{\mathrm{Lip}}({x})\|{\theta _2} - {\theta _1}\|_{\mathrm{F}}\le 2W^{L} \sqrt{L}D_{x,y}\|{x}\|_{2}\|{\theta _2} - {\theta _1}\|_{\mathrm{F}}$,
\end{center}
which shows that
\begin{equation}\label{eq:HDNN}
{H_{y,x}:=2W^{L} \sqrt{L}\|{x}\|_{2}D_{x,y}.}
\end{equation}

\textbf{Examples}: For robust DNN LAD regression, we have $D_{x,y}=1$ and thus $H_{y,x}=2W^{L} \sqrt{L}\|{x}\|_{2}$. For robust DNN logistic regression, it gives $D_{x,y}=y+1 \le 2$ and $H_{y,x}=4W^{L} \sqrt{L}\|{x}\|_{2}$. For robust DNN NBR, we get $D_{x,y}=y+\eta $ and $H_{y,x}=2W^{L} \sqrt{L}\|{x}\|_{2}(y+\eta)$.

By using Lipschitz constant \eqref{eq:HDNN}, under (C.1)-(C.4) and $f(t)=|t|^{\beta}$ and $c_2=2^{{\beta}-1}$, one has by Lemma \ref{eq:LOSSwhp2}
\begin{small}
\begin{align*}
&{{R_l}(f_{\hat\theta_n}) - {{\hat R}_{\psi_\lambda,l,\alpha}}(f_{\hat\theta_n})} \le 2\kappa {{\E}}H_{Y,X} + \frac{{2^{{\beta}-1}f(\alpha )}}{\alpha }R_{\lambda \circ l}( \Theta)+ \frac{{2^{{\beta}-1}f(\alpha \kappa )}}{\alpha }{{\E}}[\lambda(H_{Y,X})] + \frac{1}{{n\alpha }}\log \frac{N({\Theta}, \kappa)}{\delta }\\
& =4\kappa W^{L} \sqrt{L}{\E}[\|{X}\|_{2}D_{X,Y}]+(2\alpha)^{\beta-1}R_{\lambda \circ l}( \Theta)+(2\alpha)^{\beta-1}(2\kappa W^{L} \sqrt{L})^{\beta}\frac{{{\E}} |\|X\|_{2}D_{X,Y}|^{\beta}}{\beta}+\frac{1}{{n\alpha }}\log \frac{N({\Theta}, \kappa)}{\delta }
\end{align*}
\end{small}
with probability at least $1 - \delta$ for any $\kappa>0$. For the last term with covering number, we apply Lemma \ref{net cardinality1} to conclude that
\begin{align*}
\log {N}\left(\Theta,\kappa\right)\le \log {N}\left(B_{2}^p(r_n)\cap B_{0}^p(s_n),\kappa\right) \leq \log\left(\frac{1}{ \sqrt{2 e s_n}}\right)+s_n\log\left[\frac{e(\kappa+2r_n)p}{\kappa s_n}\right],
\end{align*}
which shows that
\begin{align}\label{eq:h2}
&~~~~{{R_l}(f_{\hat\theta_n}) - {{\hat R}_{\psi_\lambda,l,\alpha}}(f_{\hat\theta_n})} \le 4\kappa W^{L} \sqrt{L}{\E}[\|{X}\|_{2}D_{X,Y}]+(2\alpha)^{\beta-1}R_{\lambda \circ l}( \Theta)\nonumber\\
& +(2\alpha)^{\beta-1}(2\kappa W^{L} \sqrt{L})^{\beta}\frac{{{\E}} |\|X\|_{2}D_{X,Y}|^{\beta}}{\beta}+\frac{1}{{n\alpha }}\left\{s_n\log\left[{e(\kappa+2r_n)}\frac{p}{\kappa s_n}\right]+\log\left(\frac{\delta^{-1}}{{ \sqrt{2 e s_n}}}\right)\right\}
\end{align}
with probability at least $1 - \delta$ for any $\kappa>0$.

Under the two high provability events in \eqref{eq:h1} and \eqref{eq:h2}, inequality \eqref{eq:decompDNN} shows that
\begin{small}
\begin{align}\label{eq:lem1lem2DNN}
&~~~~{R}_l(f_{\hat\theta_n})- {R}_l (f^*)\nonumber\\
&\le \inf\limits_{f \in \F} | {{{R}_l}(f)- {R_l}(f^*)}|+\rho\|\theta_{\mathcal{N}}^*\|_2^2+\gamma\|\theta_{\mathcal{N}}^*\|_1+\left(\alpha^{\beta-1}{R_{\lambda \circ l}}(f_{\theta_{\mathcal{N}}^*}) + \frac{1}{{n\alpha }}\log \frac{1}{\delta }\right)+\frac{1}{{n\alpha }}\log \frac{N({\Theta}, \kappa)}{\delta }\nonumber\\
 &+4\kappa W^{L} \sqrt{L}{\E}[\|{X}\|_{2}D_{X,Y}]+(2\alpha)^{\beta-1}R_{\lambda \circ l}( \Theta)+(2\alpha)^{\beta-1}\kappa^{\beta}(2W^{L} \sqrt{L})^{\beta}\frac{{{\E}} |\|X\|_{2}D_{X,Y}|^{\beta}}{\beta}\nonumber\\
&\le  \rho\|\theta_{\mathcal{N}}^*\|_2^2+\gamma\|\theta_{\mathcal{N}}^*\|_1+\frac{1}{{n\alpha }}\left\{s_n\log\left[{e(\kappa+2r_n)}\frac{p}{\kappa s_n}\right]+\log\left(\frac{\delta^{-2}}{{ \sqrt{2 e s_n}}}\right)\right\}+(1+2^{\beta-1})\alpha^{\beta-1}R_{\lambda \circ l}( \Theta)\nonumber\\
&+4\kappa W^{L} \sqrt{L}{\E}[\|{X}\|_{2}D_{X,Y}]+(2\alpha)^{\beta-1}(2\kappa W^{L} \sqrt{L})^{\beta}\frac{{{\E}} |\|X\|_{2}D_{X,Y}|^{\beta}}{\beta}+\inf\limits_{f \in \F} | {{{R}_l}(f)- {R_l}(f^*)}|
\end{align}
\end{small}
with probability at least $1 - 2\delta$.

Let $\kappa=1/n$ and put $\frac{1}{{n\alpha }}\left\{\log({\delta ^{ - 2}}/\sqrt{2 e s_n})+s_n\log\left[{e(\kappa+2r_n)p}/(\kappa s_n)\right]\right\}= (1+2^{\beta-1})\alpha^{\beta-1}R_{\lambda \circ l}( \Theta)$ in \eqref{eq:lem1lem2DNN}, and it gives
$\alpha = \frac{1}{n^{1/\beta}}\left(\frac{\log ({\delta ^{ - 2}}/\sqrt{2 e s_n}) + s_n\log\left[{e(1+2nr_n)p}/s_n \right]}{(2^{\beta-1} + 1)R_{\lambda \circ l}(\Theta )}\right)^{1/\beta}$. Hence, \eqref{eq:lem1lem2DNN} implies by taking $\inf_{\theta_{\mathcal{N}}^*\in \Theta_{\mathcal{N}}^*}$ on the upper bound \eqref{eq:lem1lem2DNN}
\begin{align*}
&{R}_l(f_{\hat\theta_n})- {R}_l (f^*) \le \inf_{\theta_{\mathcal{N}}^*\in \Theta_{\mathcal{N}}^*}(\rho\|\theta_{\mathcal{N}}^*\|_2^2+\gamma\|\theta_{\mathcal{N}}^*\|_1)+\frac{2}{{n\alpha }}\left\{\log({\delta ^{ - 2}}/\sqrt{2 e s_n})+s_n\log\left[{(1+2nr_n)ep}/s_n\right]\right\}\nonumber\\
&+\frac{4W^{L} \sqrt{L}}{n}{\E}[\|{X}\|_{2}D_{X,Y}]+\frac{2^{\beta-1}\alpha^{\beta-1}}{n^{\beta}}\frac{{{\E}} |\|X\|_{2}D_{X,Y}|^{\beta}}{\beta}+\inf\limits_{f \in \F} | {{{R}_l}(f)- {R_l}(f^*)}|\\
 \le & \inf_{\theta_{\mathcal{N}}^*\in \Theta_{\mathcal{N}}^*}(\rho\|\theta_{\mathcal{N}}^*\|_2^2+\gamma\|\theta_{\mathcal{N}}^*\|_1)+\frac{2\left\{\log({\delta ^{ - 2}}/\sqrt{2 e s_n})+s_n\log\left[{e(1+2nr_n)p}/s_n\right]\right\}^{1-\beta^{-1}}}{{n^{1-\beta^{-1}} }(2^{\beta-1} + 1)^{1-\beta^{-1}}R_{\lambda \circ l}^{1-\beta^{-1}}(\Theta )} {(2^{\beta-1} + 1)R_{\lambda \circ l}(\Theta )}\nonumber\\
&+\frac{2^{\beta-1}(2W^{L} \sqrt{L})^{\beta}}{n^{1 - {\beta ^{ - 1}} + \beta }}\left(\frac{\log ({\delta ^{ - 2}}/\sqrt{2 e s_n}) + s_n\log\left[{e(1+2nr_n)p}/s_n \right]}{(2^{\beta-1} + 1)R_{\lambda \circ l}(\Theta )}\right)^{1-\beta^{-1}}\frac{{{\E}} |\|X\|_{2}D_{X,Y}|^{\beta}}{\beta}\\
&+\frac{4W^{L} \sqrt{L}}{n}{\E}[\|{X}\|_{2}D_{X,Y}]+\inf\limits_{f \in \F} | {{{R}_l}(f)- {R_l}(f^*)}|\\
 \le & \frac{4W^{L} \sqrt{L}}{n}{\E}[\|X\|_{2}D_{X,Y}]+\frac{F_{\beta,L,W}(R_{\lambda \circ l})}{n^{\frac{1-\beta}{\beta}}}\left[{\log ({\delta ^{ - 2}}/\sqrt{2 e s_n}) + s_n\log\left[{e(1+2nr_n)p}/s_n \right]}\right]^{\frac{1-\beta}{\beta}}\nonumber\\
&+\inf_{\theta_{\mathcal{N}}^*\in \Theta_{\mathcal{N}}^*}(\rho\|\theta_{\mathcal{N}}^*\|_2^2+\gamma\|\theta_{\mathcal{N}}^*\|_1)+\inf\limits_{f \in \F} | {{{R}_l}(f)- {R_l}(f^*)}|
\end{align*}
with probability at least $1 - 2\delta$, where
\begin{center}
$F_{\beta,L,W}(R_{\lambda \circ l}):=\left[2{(2^{\beta-1} + 1)R_{\lambda \circ l}(\Theta )}+\frac{(4W^{L} \sqrt{L})^\beta}{2\beta }{{{\E}} |\|X\|_{2}D_{X,Y}|^\beta}\right]/{[(2^{\beta-1} + 1)R_{\lambda \circ l}(\Theta )]^{\frac{\beta-1}{\beta}}}$.
\end{center}

%\subsection{The proof of Corollary \ref{eq:app}}
Let us treat the approximation error bound of $\inf\nolimits_{f \in \F} | {{{R}_l}(f)- {R_l}(f^*)}|$ under (D.2). By the Lipschitz condition (D.1), we have
\begin{align*}
|{{{R}_l}(f)- {R_l}(f^*)}|:&= |\E [ l ( Y,f(X) )-l ( Y,f^*(X) )]|\le  \E [ D_{X,Y} \cdot |f(X)- f^*(X)|]\\
& \le (\E D_{X,Y}^2)^{1/2}\cdot(\E|f(X)- f^*(X)|^2)^{1/2}=(\E D_{X,Y}^2)^{1/2}\left\|f(X)-f^{*}(X)\right\|_{L^{2}(\nu)}
\end{align*}
from which,
\begin{align}\label{eq:dnnn}
\inf\limits_{f \in \F} | {{{R}_l}(f)- {R_l}(f^*)}|\le (\E D_{X,Y}^2)^{1/2}\inf\limits_{f \in \F}\left\|f(X)-f^{*}(X)\right\|_{L^{2}(\nu)}.
\end{align}

The sparse Relu DNN has approximation power in terms of $\inf\nolimits_{f \in \F}\left\|f(X)-f^{*}(X)\right\|_{L^{2}(\nu)}$ by tuning the width and depth. Theorem 3 in \cite{schmidt2020nonparametric} shown the approximation ability of the sparse ReLU DNN under H{\"o}lder functional class with smoothness index $\gamma$: \emph{Given a continuous function $f^* \in \mathcal{C}^\gamma([0,a_n]^{d},B)$ with a sequence $a_n\ge 1$, there exists a function $f$ implemented by a ReLU network with width $N=6(d+\lceil\gamma\rceil) M$, ($M\ge (\gamma+1)^{d} \vee(B+1) e^{d}$), and depth
\begin{center}
$L=8+(m+5)\left(1+\left\lceil\log _{2}(d \vee \gamma)\right\rceil\right)$ with an integer $m \ge 1$
\end{center}
and number of parameters $s \leq 141(d+\gamma+1)^{3+d} M(m+6)$ such that
\begin{equation}\label{eq:APP}
\|f-f^*\|_{L^{\infty}([0,a_n]^{d})} \leq \frac{(2B+1)\left(1+d^{2}+\gamma^{2}\right) M 6^{d}a_n^{\gamma}}{ 2^{m}}+\frac{K 3^{\gamma}a_n^{\gamma}} {N^{\frac{\gamma}{d}}}.
\end{equation}}

Under (D.3), we define the event $\mathcal{A}_n:=\{\|X\|_{\infty} \le a_n \}$ for a given non-decreasing and positive sequence $\{a_n\}$. Then, Markov's inequality shows
\begin{equation}\label{eq:An}
\mathbb{P}\left(\mathcal{A}_n\right) \ge 1-\frac{b}{a_n}.
\end{equation}
Under the restriction $\|f-f^*\|_{\infty}\le F<\infty$ with ${f \in \mathcal{N}\mathcal{N}(N, L)}$, it leads to
\begin{align*}
\E|f(X)- f^*(X)|^2&=\int_{\{\|x\|_{\infty} \le a_n \}} |f(x)- f^*(x)|^2\nu(dx)+\int_{\{\|x\|_{\infty} > a_n \}} |f(x)- f^*(x)|^2\nu(dx)\\
& \le \|f-f^*\|_{L^{\infty}([0,a_n]^d)}^2+F^2\mathbb{P}(\mathcal{A}_n^c)\\
&\le \left[\frac{(2B+1)\left(1+d^{2}+\gamma^{2}\right)  6^d M}{ 2^{m}}+\frac{B 3^{\gamma}} {N^{\frac{\gamma}{d}}}\right]^2 a_n^{2\gamma}+F^2\frac{b}{a_n},
\end{align*}
where the last inequality is from \eqref{eq:APP} and \eqref{eq:An}. The \eqref{eq:dnnn} gives
\begin{align*}
\inf\limits_{f \in \F} | {{{R}_l}(f)- {R_l}(f^*)}|&\le (\E D_{X,Y}^2)^{\frac{1}{2}}\left(\left[\frac{(2B+1)\left(1+d^{2}+\beta^{2}\right)  6^{d}M}{ 2^{m}}+\frac{B 3^{\gamma}} {N^{\frac{\gamma}{d}}}\right]^2a_n^{2\gamma}+F^2\frac{b}{a_n}\right)^{\frac{1}{2}} \\
&\le (\E D_{X,Y}^2)^{\frac{1}{2}}\left(\left[\frac{(2B+1)\left(1+d^{2}+\gamma^{2}\right)  6^{d}M}{ 2^{m}}+\frac{B 3^{\gamma}} {N^{\frac{\gamma}{d}}}\right]a_n^{\gamma}+(\frac{b}{a_n})^{\frac{1}{2}}F\right)
\end{align*}
Finally, we put
$\left[\frac{(2B+1)\left(1+d^{2}+\beta^{2}\right)  6^dM}{ 2^{m}}+\frac{3^{\gamma}B} {N^{\frac{\gamma}{p}}}\right]a_n^{\gamma}=(\frac{b}{a_n})^{\frac{1}{2}}F$, which implies
\begin{center}
$a_n=\frac{b^{\frac{1}{2\gamma+1}}F^{\frac{2}{2\gamma+1}}}{\left[\frac{(2B+1)\left(1+d^{2}+\gamma^{2}\right)  6^dM}{ 2^{m}}+\frac{ 3^{\gamma}B} {N^{{\gamma}/d}}\right]^{\frac{2}{2\gamma+1}}}$.
\end{center}
Then
\begin{align*}
\inf\limits_{f \in \F} | {{{R}_l}(f)- {R_l}(f^*)}|&\le 2(\E D_{X,Y}^2)^{\frac{1}{2}}({b}/{a_n})^{\frac{1}{2}}F\\
&= 2(\E D_{X,Y}^2)^{\frac{1}{2}}F^{\frac{2}{2\gamma+1}}{b^{\frac{r}{2\gamma+1}}}{\left[\frac{(2B+1)\left(1+d^{2}+\gamma^{2}\right)  6^d M}{ 2^{m}}+\frac{ 3^{\gamma}B} {N^{{\gamma}/d}}\right]^{\frac{1}{2\gamma+1}}}.
\end{align*}
\end{proof}

\subsection{The proof of Corollary \ref{coroln}}
\begin{proof}
Based on \eqref{eq:DNNER}, if the depth and width of DNNs is increasing with $n$, i.e. $L = {L_n}$ and $N = {N_n}$, it allows $\frac{{{W^{L_n}}\sqrt {L_n} }}{n} \le {C}( {{s_n\log(npr_n)}}/{n} )^{(\beta-1)/\beta}$ for a constant $C>0$. Then the inequality
\begin{center}
${\rm{log}}W\cdot{L_n} + \frac{1}{2}\log {L_n} \le \log C+ \frac{\beta-1}{\beta}[\log( {{s_n\log(npr_n)}})-\log n]+\log n$
\end{center}
guarantees the rate $O( {{s_n\log(npr_n)}}/{n} )^{(\beta-1)/\beta}$ excess risk if ${L_n}\lesssim  \frac{ \log( {{s_n\log(npr_n)}})}{\log W} + \frac{\log n}{\log W}$.  The sparsity level $s_n=o(n)$ performs as an effective dimension in DNN parameter that plays a key role to determine the main term in the excess risk bound. According to restriction $s_n=o(n)$, we put a \emph{depth-sample condition}:
\begin{equation}\label{eq:Ln}
{L_n} \lesssim  \frac{\log n+{\log( {{s_n\log(npr_n)}})}}{\log W}
\end{equation}
for a sufficient large $n$ and constant.

Under the H{\"o}lder functional class, we assume that the approximation error has a same or smaller order as the statistical error $O( {( {{s_n\log(npr_n)}}/{n} )^{(\beta-1)/\beta}})$. Suppose that
\begin{center}
$\frac{(2B+1)\left(1+d^{2}+\gamma^{2}\right)  6^d M}{ 2^{m}}\lesssim\frac{ 3^{\gamma}B} {N_n^{{\gamma}/d}}$,
\end{center}
which implies $(\gamma/d)\log_2 N_n \lesssim m$.

The condition of $L_n$ in Theorem 3 in \cite{schmidt2020nonparametric} requires that $L_n \lesssim 8+(\log n+{\log(s_n\log(npr_n))}+5)\left(1+\left\lceil\log _{2}(d \vee \gamma)\right\rceil\right)$, which coincides \eqref{eq:Ln}. To obtain the rate $O( {{s_n\log(npr_n)}}/{n} )^{(\beta-1)/\beta}$ approximation error, one must have
\begin{align*}
\inf\limits_{f \in \F} | {{{R}_l}(f)- {R_l}(f^*)}|&\lesssim {b^{\frac{r}{2\gamma+1}}}{\left[\frac{ 3^{\gamma}B} {N_n^{{\gamma}/d}}\right]^{\frac{1}{2\gamma+1}}}\lesssim ( {{s_n\log(npr_n)}}/{n} )^{(\beta-1)/\beta}.
\end{align*}
It leads to the \emph{width-sample condition}
\begin{equation*}\label{eq:Nn}
N_n \gtrsim b^d\left[\frac{n}{{{s_n\log(npr_n)}}} \right]^{{\frac{d(2\gamma+1)}{\gamma}}\cdot\frac{\beta-1}{\beta}}.
\end{equation*}
 So, $(\gamma/d )\log_2 N_n \lesssim m \lesssim \log n+{\log(s_n\log(npr_n))}$ and $L_n \lesssim 8+(\log n+{\log(s_n\log(npr_n))}+5)\left(1+\left\lceil\log _{2}(d \vee \gamma)\right\rceil\right)$ implies the rate $( {{s_n\log(npr_n)}}/{n} )^{(\beta-1)/\beta}$ approximation error, under the order of tuning parameters $\rho \vee \gamma \lesssim  ( {{s_n\log(npr_n)}}/{n} )^{(\beta-1)/\beta}$.
\end{proof}

\subsection{Robust two-component mixed linear regression}\label{mixed}
For convenience, we define a combined parameter $\theta : = \left( {\pi ,{\eta _0},{\eta _1}} \right)$, which is restricted in following space $\Theta \subset {\mathbb{R}^{1 + 2d}}$ with $r>1$.
\[\theta \in \Theta : = \left\{ \left( {a,{b_0},{b_1}} \right) \in {\mathbb{R}^{1 + 2p}}:0<\rho  \le a \le 1 - \rho, \max(\left\| {{b_1}} \right\|_2^{},\left\| {{b_2}} \right\|_2) \le u,~u: = \sqrt {({r^2} - 1)/2} \right\}.\]
Define $\hat \theta_n$ by \eqref{eq:cantonil2} with $\lambda(x)=\frac{1}{\beta}|x|^{\beta},~\beta \in (1,2)$, and ${\theta^*}$ is given by \eqref{eq:ture} with loss $l(y,x,\cdot)$ given by \eqref{eq:mix}.

We now identify the $H_{y,x}$ in Theorem \ref{thm1loss2}. Denote $p_k:=p\left(y,{x^ \top }\eta_{k}\right)$ for $k=0,1$. One has
\begin{align*}
\left| {\frac{{\partial l(y,x,\theta )}}{{\partial \pi }}} \right|& = \left| { - \frac{{p(y,{x^ \top }{\eta _0}) - (y,{x^ \top }{\eta _1})}}{{\pi p(y,{x^ \top }{\eta _0}) + (1 - \pi )p(y,{x^ \top }{\eta _1})}}} \right| = \left| {\frac{{{p_0}}}{{\pi {p_0} + (1 - \pi ){\rho _1}}} + \frac{{{p_1}}}{{\pi {p_0} + (1 - \pi ){p_1}}}} \right|\\
& \le \frac{{{p_0}}}{{\pi {p_0}}} + \frac{{{p_1}}}{{(1 - \pi ){p_1}}} = \frac{1}{{\pi (1 - \pi )}} \le \frac{1}{{\rho (1 - \rho )}},\quad (\rho  \le \pi  \le 1 - \rho ).
\end{align*}
Let $\nabla p(y,{x^ \top }{\eta _k}): = \frac{\partial }{{\partial t}}{\left. {p(y,t)} \right|_{t = {x^ \top }{\eta _k}}}$ and $\nabla \log p(y,{x^ \top }{\eta _k}): = \frac{\partial }{{\partial t}}\log {\left. {p(y,t)} \right|_{t = {x^ \top }{\eta _k}}}$. For $j=1,\cdots, d$, we have
\[\left| {\frac{{\partial l(y,x,\theta )}}{{\partial {\eta _{j0}}}}} \right| = \left| {\frac{{ - x_j\dot p(y,{x^ \top }{\eta _0})}}{{\pi p(y,{x^ \top }{\eta _0}) + (1 - \pi )p(y,{x^ \top }{\eta _1})}}} \right| \le \left| {\frac{{ - x_j\dot p(y,{x^ \top }{\eta _0})}}{{\pi p(y,{x^ \top }{\eta _0})}}} \right| \le \frac{{\left| x_j \right| \cdot \left| {\nabla \log p(y,{x^ \top }{\eta _0})} \right|}}{\rho }\]
and
\[\left| {\frac{{\partial l(y,x,\theta )}}{{\partial {\eta _{j1}}}}} \right| = \left| {\frac{{ - x_j\dot p(y,{x^ \top }{\eta _1})}}{{\pi p(y,{x^ \top }{\eta _0}) + (1 - \pi )p(y,{x^ \top }{\eta _1})}}} \right| \le \left| {\frac{{ - x_j\dot p(y,{x^ \top }{\eta _0})}}{{(1 - \pi )p(y,{x^ \top }{\eta _0})}}} \right| \le \frac{{\left| x_j \right| \cdot \left| {\nabla \log p(y,{x^ \top }{\eta _1})} \right|}}{{1 - \rho }}.\]
According to \eqref{eq:Hfunction},
\begin{align*}
&~~~~\mathop {\sup }\limits_{{\eta _1},{\eta _2} \in \Theta } {\| {\dot l[y,x,(t{\eta _2} + (1 - t){\eta _1})]} \|_2}  \le \mathop {\sup }\limits_{\theta  \in \Theta } {\| {\dot l(y,{x^ \top }\theta )}\|_2}\\
& \le \mathop {\sup }\limits_{\theta  \in \Theta } {\left[ {\left\| x \right\|_2^2\{{\rho ^{ - 2}}|\nabla \log p(y,{x^ \top }{\eta _1})|^2\} + {{(1 - \rho )}^{ - 2}}|\nabla \log p(y,{x^ \top }{\eta _1}){|^2} + {\rho ^{ - 2}}{{(1 - \rho )}^{ - 2}}} \right]^{1/2}}.\\
& \le \frac{1}{{\rho (1 - \rho )}}+ \frac{{{{\left\| x \right\|}_2}}}{\rho }\mathop {\sup }\limits_{{\eta _0} \in B_{2}^{d}(u)} |\nabla \log p(y,{x^ \top }{\eta _0})| + \frac{{{{\left\| x \right\|}_2}}}{{1 - \rho }}\mathop {\sup }\limits_{{\eta _1} \in B_{2}^{d}(u)} |\nabla \log p(y,{x^ \top }{\eta _1})|: = H(y,x).
\end{align*}

From the proof Theorem \ref{thm1loss2}, one has
\begin{small}
\begin{align*}
&{R_l}(\hat \theta_{n} ) -\mathop {\inf}\limits_{\theta  \in {\Theta}} {R_l}({\theta}) \le \frac{2}{n}\left[ {\frac{{{\rm{E[}}{{\left\| X \right\|}_2}\mathop {\sup }\limits_{{\eta _0} \in B_{2}^{d}(u)} |\nabla \log p(Y,{X^ \top }{\eta _0})| + 1]}}{\rho } + \frac{{{\rm{E[}}{{\left\| X \right\|}_2}\mathop {\sup }\limits_{{\eta _1} \in B_{2}^{d}(u)} |\nabla \log p(Y,{X^ \top }{\eta _1})| + 1]}}{{1 - \rho }}} \right]\\
 &+ {\left( {\frac{{\log ({\delta ^{ - 2}}) + (2d+1)\log \left( {1 + 2rn} \right)}}{n({2^{\beta-1}} + 1)}} \right)^{\frac{\beta-1}{\beta}}} \times \left[ \frac{{2({2^{\beta-1} } + 1)}}{{{{[{R_{\lambda  \circ l}}(\Theta )]}^{ - {\beta^{ - 1}}}}}} +  \right.\frac{{{2^{2(\beta  - 1)}}}}{{{{[{R_{\lambda  \circ l}}(\Theta )]}^{(\beta-1)/\beta}}}}\\
 &\times \left. {\left( {\frac{{{\rm{E}}[{{\left\| X \right\|}_2}\mathop {\sup }\limits_{{\eta _0} \in B_{2}^{d}(u)} |\nabla \log p(Y,{X^ \top }{\eta _0})| + 1]^\beta}}{{{\rho ^\beta}}} + \frac{{{\rm{E}}[{{\left\| X \right\|}_2}\mathop {\sup }\limits_{{\eta _1} \in B_{2}^{d}(u)} |\nabla \log p(Y,{X^ \top }{\eta _1})| + 1]^\beta}}{{{{(1 - \rho )}^\beta}}}} \right)} \right]+\rho\|\Theta^*\|_2^2.
\end{align*}
\end{small}
with probability at least $1 - 2\delta$.

For example, in Gaussian mixture regressions of two component, we have
\[p(y,{x^ \top }{\eta _k}) = \frac{1}{{\sqrt {2\pi } \sigma }}\exp \{ \frac{{{{(y - {x^ \top }{\eta _k})}^2}}}{{2{\sigma _k^2}}}\}~\text{and}~\nabla \log p(y,{x^ \top }{\eta _k}) =  - \frac{{y - {x^ \top }{\eta _k}}}{{{\sigma_k^2}}},~k=0,1.\]
{Without loss of generality, we assume that variance parameter ${\sigma_k^2}\equiv{\sigma ^2}$ is known.} To obtain rate $O( {{{( {{{(2d+1)\log (nr_n)}}/{n}} )}^{(\beta-1) /\beta}}})$ excess risk, we require moment conditions
\begin{center}
${\rm{E}}({\left\| X \right\|_2}|Y|)^{\beta}< \infty$ and ${\rm{E}}\left\| X \right\|_2^{2\beta}< \infty$
\end{center}
by noticing
${\rm{E[}}{\left\| X \right\|_2}\mathop {\sup }\limits_{{\eta _k} \in B_{2}^{d}(u)} |Y - {X^ \top }{\eta _k}|/{\sigma ^2} + 1{]^\beta} < {C_1}{\rm{E(}}{\left\| X \right\|_2}|Y|)^\beta + {C_2}{\rm{E}}{\left\| X \right\|_2^{2\beta}} < \infty,$ where $C_1$ and $C_2$ are some positive constants.

\subsection{Simulation results of negative binomial regression models}\label{se:appS}

We use the same noise settings for the NBR models as the simulations of logistic regression. And the dispersion parameter $\eta$ is set to be $20$. For the network configuration, we use the ReLU activated 5-layers DNN model with width $(d,20d,15d, 10d, 5d, 1)$ to train the DNN NBR when the real function $f^*$ is the complex function. For the case of DNN-based $f^*$, we adopt the ReLU activated 2-layers DNN model with width $(d,0.6d,0.4d,1)$ to train the model.
In Tables \ref{tab:NBPareto} and \ref{tab:NBGAU}, we compute the average $\ell_2$-estimation errors for the predicted coefficients of each normal NBR models with 100 replications.  Table \ref{tab:DNNNB} presents the absolute average errors (MAEs) of the response predictors $\{\hat{Y}_i\}_{i=1}^n$ with 100 replications, that is defined as
$$ {\rm{MAE}}=\frac{1}{n}\sum_{i=1}^{n}|\hat{Y}_i-Y_i|.$$
% Table generated by Excel2LaTeX from sheet 'Sheet1'
\begin{table}[htbp]
	\centering
	\caption{Comparison of average $\ell_2$-estimation error for NBR on Pareto noise model.}
	\resizebox{1\textwidth}{!}{
	\begin{tabular}{ccccccccc}
		\toprule
		\multicolumn{9}{c}{$\ell_2$-estimation error for NBR} \\
		\midrule
		&       & \multicolumn{3}{c}{$\varsigma=0$} &       & \multicolumn{3}{c}{$\varsigma=0.5$} \\
		\cmidrule{3-5}\cmidrule{7-9}    \multicolumn{9}{c}{$d = 100, n = 200$} \\
		\midrule
		Pareto & $\beta$   & High-order  & Cauchy & Non-truncation &       & High-order  & Cauchy & Non-truncation \\
		\midrule
		1.60   & 1.5   & 2.7324(0.0287) & 3.1469(0.0401) & 3.6866(0.0742) &       & 2.9015(0.1450) & 3.1510(0.1566) & 3.8726(0.0081) \\
		1.80   & 1.5   & 2.6656(0.0228) & 2.9910(0.0189) & 3.7492(0.0621) &       & 3.0704(0.1125) & 3.1675(0.1286) & 3.8732(0.0163) \\
		2.01  & 2.0     & 2.2859(0.0228) & 2.6282(0.0194) & 3.6218(0.0637) &       & 3.0658(0.1080) & 3.0603(0.1523) & 3.8086(0.0470) \\
		4.01  & 2.0     & 2.4727(0.0167) & 2.7543(0.0159) & 3.6441(0.0181) &       & 3.0460(0.1554) & 3.0847(0.1161) & 3.8646(0.0354) \\
		6.01  & 2.0     & 2.4430(0.0124) & 2.5573(0.0173) & 3.6831(0.0118) &       & 3.0083(0.1802) & 3.0642(0.1178) & 3.8940(0.0148) \\
		\midrule
		\multicolumn{9}{c}{$d = 200, n = 500$} \\
		\midrule
		Pareto & $\beta$   & High-order  & Cauchy & Non-truncation &       & High-order  & Cauchy & Non-truncation \\
		\midrule
		1.60   & 1.5   & 4.1902(0.9830) & 4.5767(0.5005) & 4.6152(0.4303) &       & 4.3648(0.0845) & 4.4987(0.1371) & 4.7120(0.0656) \\
		1.80   & 1.5   & 4.6652(0.5420) & 4.8640(0.5153) & 4.9682(0.3303) &       & 4.4122(0.0846) & 4.5386(0.1631) & 4.7052(0.0807) \\
		2.01  & 2.0     & 4.4903(0.1020) & 4.8618(0.1221) & 4.8653(0.4294) &       & 4.3573(0.0693) & 4.5685(0.1403) & 4.6675(0.1002) \\
		4.01  & 2.0     & 4.3130(0.1572) & 4.5597(0.1459) & 3.6799(0.2815) &       & 4.4048(0.1123) & 4.5182(0.0429) & 4.5985(0.0328) \\
		6.01  & 2.0     & 4.3355(0.1309) & 4.5255(0.1164) & 3.6624(0.0214) &       & 4.3008(0.1350) & 4.4457(0.0967) & 4.6910(0.0552) \\
		\midrule
		\multicolumn{9}{c}{$d = 1000, n = 1000$} \\
		\midrule
		Pareto & $\beta$   & High-order  & Cauchy & Non-truncation &       & High-order  & Cauchy & Non-truncation \\
		\midrule
		1.60   & 1.5   & 10.9820(0.1000) & 11.0961(0.1007) & 11.6957(0.0492) &       & 10.7608(0.1608) & 10.9988(0.1585) & 11.0076(0.1068) \\
		1.80   & 1.5   & 10.9745(0.0654) & 11.0259(0.1234) & 11.6771(0.0725) &       & 10.7184(0.0757) & 10.9624(0.1096) & 11.0044(0.0688) \\
		2.01  & 2.0     & 10.8991(0.0686) & 10.9826(0.0453) & 11.6676(0.0548) &       & 10.7373(0.1280) & 10.8823(0.1073) & 10.9500(0.0702) \\
		4.01  & 2.0     & 10.8049(0.0121) & 10.9907(0.1753) & 11.6273(0.0534) &       & 10.7024(0.0916) & 10.7892(0.0985) & 10.9563(0.0904) \\
		6.01  & 2.0     & 10.9433(0.0876) & 10.9645(0.1369) & 11.6186(0.0469) &       & 10.6301(0.0708) & 10.7471(0.1166) & 10.9544(0.0734) \\
		\bottomrule
	\end{tabular}}%
	\label{tab:NBPareto}%
\end{table}%

% Table generated by Excel2LaTeX from sheet 'Sheet1'
\begin{table}[htbp]
	\centering
\caption{Comparison of average $\ell_2$-estimation error for NBR on Uniform noise model.}
\resizebox{1\textwidth}{!}{
	\begin{tabular}{ccccccccc}
		\toprule
		\multicolumn{9}{c}{$\ell_2$-estimation error for NBR} \\
		\midrule
		&       & \multicolumn{3}{c}{$\varsigma=0$} &       & \multicolumn{3}{c}{$\varsigma=0.5$} \\
		\cmidrule{3-5}\cmidrule{7-9}    \multicolumn{9}{c}{$d = 100, n = 200$} \\
		\midrule
		Uniform & $\beta$   & High-order  & Cauchy & Non-truncation &       & High-order  & Cauchy & Non-truncation \\
		\midrule
		0.3   & 2.0     & 1.0241(0.0432) & 1.6514(0.0471) & 3.1721(0.1542) &       & 3.0282(0.1629) & 3.0570(0.3486) & 3.2361(0.1837) \\
		0.5   & 2.0      & 1.2061(0.0514) & 1.7425(0.0651) & 3.2321(0.1345) &       & 3.0229(0.1415) & 3.2562(0.1371) & 3.2746(0.0520) \\
		0.8   & 2.0      & 1.3454(0.0453) & 1.9241(0.0645) & 3.5432(0.1124) &       & 3.0570(0.3486) & 3.4104(0.1218) & 3.5124(0.1273) \\
		\midrule
		\multicolumn{9}{c}{$d = 200, n = 500$} \\
		\midrule
		Uniform & $\beta$   & High-order  & Cauchy & Non-truncation &       & High-order  & Cauchy & Non-truncation \\
		\midrule
		0.3   & 2.0      & 3.6431(0.0515) & 3.9321(0.0541) & 5.2354(0.1345) &       & 4.6455(0.1201) & 4.9514(0.1011) & 5.0917(0.1902) \\
		0.5   & 2.0      & 2.9241(0.0762) & 3.0235(0.0785) & 5.4252(0.1645) &       & 4.8460(0.1309) & 5.0286(0.1208) & 5.3802(0.1298) \\
		0.8   & 2.0      & 4.0212(0.0815) & 4.1023(0.0845) & 5.7254(0.1432) &       & 5.0538(0.0816) & 5.1568(0.1119) & 5.4333(0.0692) \\
		\midrule
		\multicolumn{9}{c}{$d = 1000, n = 1000$} \\
		\midrule
		Uniform & $\beta$   & High-order  & Cauchy & Non-truncation &       & High-order  & Cauchy & Non-truncation \\
		\midrule
		0.3   & 2.0      & 7.2532(0.1547) & 8.8754(0.1471) & 9.9584(0.1241) &       & 11.0984(0.0595) & 11.2696(0.0846) & 11.7264(0.0967) \\
		0.5   & 2.0      & 7.7652(0.1457) & 9.2413(0.1453) & 10.125(0.1892) &       & 11.1030(0.0870) & 11.3124(0.0460) & 12.1964(0.2704) \\
		0.8   & 2.0      & 7.9254(0.1745) & 9.5432(0.1793) & 11.235(0.1346) &       & 11.2125(0.0982) & 11.4113(0.0442) & 12.4671(0.0754) \\
		\bottomrule
	\end{tabular}}%
	\label{tab:NBGAU}%
\end{table}%

\begin{table}[htbp]
	\centering
	\caption{Comparison of average MAEs for DNN NBR under two noise settings.}
	\resizebox{1\textwidth}{!}{
		\begin{tabular}{|cc|cc|cc|}
			\hline
			\multicolumn{6}{|c|}{MAEs for DNN NBR} \\
			\hline
			&       & \multicolumn{2}{c|}{$d = 6$, $n = 200$ (Complex function)} & \multicolumn{2}{c|}{$d = 100$, $n = 1000$ (DNN)} \\
			\hline
			$\beta$  & Pareto ($\tau$)  & High-order  & Non-truncation  & High-order  & Non-truncation  \\
			\hline
			1.5   & 1.60   & 0.5809(0.0063) & 1.5829(0.4448) & 0.4702(0.0274)  & 0.6115(0.0392)  \\
			1.5   & 1.80   & 0.5586(0.0074) & 1.4882(0.3538) & 0.4483(0.0379)  & 0.5232(0.0402)  \\
			2.0     & 2.01  & 0.4845(0.0056) & 1.2000(0.2203) & 0.5620(0.0399)  & 0.6436(0.0294)  \\
			2.0    & 4.01  & 0.5466(0.0049) & 1.2958(0.9950) & 0.5251(0.0191)  & 0.7019(0.0152)  \\
			2.0     & 6.01  & 0.6001(0.0056) & 1.1369(0.2027) & 0.5253(0.0131)  & 0.6718(0.0178)  \\
			\hline
			$\beta$  & Uniform ($\pi$) & High-order  & Non-truncation  & High-order  & Non-truncation  \\
			\hline
			2.0     & 0.3   & 0.9842(0.2706) & 0.9672(0.3283) & 0.5260(0.0309)  & 0.5978(0.0271)  \\
			2.0     & 0.5   & 1.6386(0.3332) & 1.7000(0.1610) & 0.5784(0.0187)  & 0.6029(0.0088)  \\
			2.0     & 0.8   & 1.4314(0.4509) & 2.7475(0.3379) & 0.5824(0.0195)  & 0.6736(0.0057)  \\
			\hline
	\end{tabular}}%
	\label{tab:DNNNB}%
\end{table}%

\end{document}